\documentclass{article}
\usepackage{iclr2025_conference}
\usepackage{times}


\usepackage{amsmath,amsfonts,bm}









\def\eqref#1{equation~\ref{#1}}
\def\Eqref#1{Equation~\ref{#1}}








\def\1{\bm{1}}










\DeclareMathAlphabet{\mathsfit}{\encodingdefault}{\sfdefault}{m}{sl}
\SetMathAlphabet{\mathsfit}{bold}{\encodingdefault}{\sfdefault}{bx}{n}













\usepackage{hyperref}
\usepackage{url}
\usepackage{mathtools}

\usepackage{multirow}
\usepackage{subfigure}
\usepackage{subcaption}
\usepackage{amsmath}
\usepackage{pgffor}
\usepackage{amsthm}
\usepackage{animate}
\usepackage{wrapfig}
\usepackage{caption}
\usepackage{array}

\usepackage{thmtools}
\usepackage{thm-restate}
\usepackage{hyperref}
\usepackage{cleveref}

\usepackage{enumitem,amssymb}
\newlist{todolist}{itemize}{2}
\setlist[todolist]{label=$\square$}
\usepackage{pifont}

\newtheorem{theorem}{Theorem}
\newtheorem{lemma}[theorem]{Lemma}

\newtheorem{definition}[theorem]{Definition}
\newtheorem{remark}{Remark}
\newtheorem{example}{Example}

\title{Generating Physical Dynamics under Priors}

\author{Zihan Zhou$^1$, Xiaoxue Wang$^2$, Tianshu Yu$^1$\thanks{corresponding author} \\
$^1$School of Data Science, The Chinese University of Hong Kong, Shenzhen \\
$^2$ChemLex Technology Co., Ltd.\\
\texttt{zihanzhou1@link.cuhk.edu.cn, wxx@chemlex.tech} \\
\texttt{yutianshu@cuhk.edu.cn}
}

\usepackage[normalem]{ulem}

\newcommand{\sen}{\operatorname{SE(n)}}
\newcommand{\kron}{\otimes}
\newcommand{\veco}{\operatorname{vec}}
\newcommand{\defeq}{\vcentcolon=}

\iclrfinalcopy

\begin{document}

\maketitle


\begin{abstract}
Generating physically feasible dynamics in a data-driven context is challenging, especially when adhering to \emph{physical priors} expressed in specific equations or formulas. Existing methodologies often overlook the integration of ``physical priors'', resulting in violation of basic physical laws and suboptimal performance. In this paper, we introduce a novel framework that seamlessly incorporates physical priors into diffusion-based generative models to address this limitation. Our approach leverages two categories of priors: 1) distributional priors, such as roto-translational invariance, and 2) physical feasibility priors, including energy and momentum conservation laws and PDE constraints. By embedding these priors into the generative process, our method can efficiently generate physically realistic dynamics, encompassing trajectories and flows. Empirical evaluations demonstrate that our method produces high-quality dynamics across a diverse array of physical phenomena with remarkable robustness, underscoring its potential to advance data-driven studies in AI4Physics. Our contributions signify a substantial advancement in the field of generative modeling, offering a robust solution to generate accurate and physically consistent dynamics.
\end{abstract}

\section{Introduction} \label{sec: intro}
The generation of physically feasible dynamics is a fundamental challenge in the realm of data-driven modeling and AI4Physics. These dynamics, driven by Partial Differential Equations (PDEs), are ubiquitous in various scientific and engineering domains, including fluid dynamics~\citep{kutz2017deep}, climate modeling~\citep{rasp2018deep}, and materials science~\citep{choudhary2022recent}. Accurately generating such dynamics is crucial for advancing our understanding and predictive capabilities in these fields~\citep{bzdok2019exploration}. Recently, generative models have revolutionized the study of physics by providing powerful tools to simulate and predict complex systems. 

\textbf{Generative v.s. discriminative models.} Even when high-performing discriminative models for dynamics are available such as finite elements~\citep{zhang2021hierarchical, uriarte2022finite}, finite difference~\citep{lu2021deepxde, salman2022deep}, finite volume~\citep{ranade2021discretizationnet} or physics-informed neural networks (PINNs)~\citep{raissi2019physics}, generative models are crucial in machine learning for their ability to capture the full data distribution, enabling more effective data synthesis~\citep{de2017learning}, anomaly detection~\citep{finke2021autoencoders}, and semi-supervised learning~\citep{ma2019probabilistic}. They enhance robustness and interpretability by modeling the joint distribution of data and labels, offering insights into unseen scenarios~\citep{takeishi2021physics}. Generative models are also pivotal in creative domains, such as drug discovery~\citep{lavecchia2019deep}, where they enable the creation of novel data samples.


\textbf{Challenge.} However, the intrinsic complexity and high-dimensional nature of physical dynamics pose significant challenges for traditional learning systems. Recent advancements in generative modeling, particularly diffusion-based generative models~\citep{song2020score}, have shown promise in capturing complex data distributions. These models iteratively refine noisy samples to match the target distribution, making them well-suited for high-dimensional data generation. Despite their success, existing approaches often overlook the incorporation of ``physical priors'' expressed in specific equations or formulas, which are essential for ensuring that the generated dynamics adhere to fundamental physical laws. 

\textbf{Solution.} In this work, we propose a novel framework that integrates priors into diffusion-based generative models to generate physically feasible dynamics. Our approach leverages two types of priors:
\emph{Distributional priors}, including roto-translational invariance and equivariance, ensure that models capture the intrinsic properties of the data rather than their specific representations; \emph{Physical feasibility priors}, including energy and momentum conservation laws and PDE constraints, enforce the adherence to fundamental physical principles, thus improving the quality of generated dynamics. 

\begin{wrapfigure}{r}{0.38\textwidth}
    \vspace{-12pt}
    \centering
    \animategraphics[autoplay,loop,width=\linewidth, trim=1.5cm 1cm 1.5cm 1cm,clip]{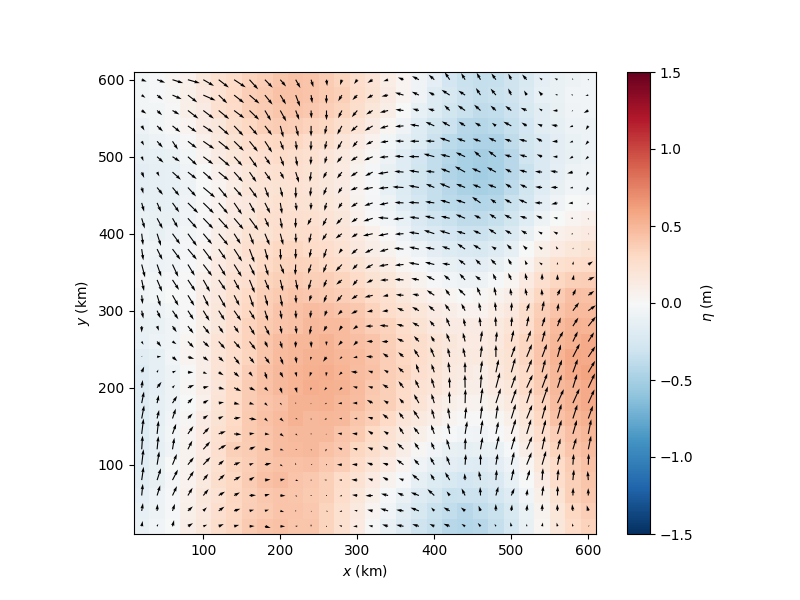}{figs/sw/gen/gif/}{0}{49}
    \caption{Animated visualization of generated samples of shallow water dynamics, showcasing the variations over time. Use the latest version of Adobe Acrobat Reader to view.}
    \vspace{-6pt}
    \label{fig: demo shallow water}
\end{wrapfigure}
The integration of priors into the generative process is a complex task that necessitates a deep understanding of the relevant mathematical and physical principles. Unlike predictive tasks, where the objective is to estimate a \emph{specific ground-truth value} $\boldsymbol{x}_0$, diffusion generative models aim to characterize a \emph{full ground-truth distribution} $\nabla_{\boldsymbol{x}} \log q_t (\boldsymbol{x}_t)$ or $\mathbb{E}[\boldsymbol{x}_0 \mid \boldsymbol{x}_t]$ (notation in \Eqref{eq: diffusion forward SDE}). This fundamental difference complicates the direct application of priors based on ground-truth values to the output of generative models. In this work, we propose a framework to address this challenge by effectively embedding priors within the generative model's output distribution. By incorporating these priors into a diffusion-based generation framework, our approach can efficiently produce physically plausible dynamics. This capability is particularly useful for studying physical phenomena where the governing equations are too complex to be learned purely from data. 

\textbf{Results.} Empirical evaluations of our method demonstrate its effectiveness in producing high-quality dynamics across a range of physical phenomena. Our approach exhibits high robustness and generalizability, making it a promising tool for the data-driven study of AI4Physics. In Fig.~\ref{fig: demo shallow water}, we provide a generated sample of the shallow water dataset~\citep{martinez2018towards}. The generated dynamics not only capture the intricate details of the physical processes but also adhere to the fundamental physical laws, offering an accurate and reliable representation of underlying systems.

\textbf{Contribution.} In conclusion, our work presents a significant advancement in the field of data-driven generative modeling by introducing a novel framework that integrates physical priors into diffusion-based generative models. In all, our method \textbf{1}) improves the feasibility of generated dynamics, making them more aligned with physical principles compared to baseline methods; \textbf{2}) poses the solution to the longstanding challenge of generating physically feasible dynamics; \textbf{3}) paves the way for more accurate and reliable data-driven studies in various scientific and engineering domains, highlighting the potential of AI4Physics in advancing our understanding of complex physical systems.

\section{Preliminaries}
In Appendix~\ref{app: related work}, we present a comprehensive review of \emph{Related Work}, specifically focusing on three key areas: generative methods for physics, score-based diffusion models, and physics-informed neural networks. This section aims to provide foundational knowledge for readers who may not be familiar with these topics. We recommend that those seeking to deepen their understanding of these areas consult this appendix.

\subsection{Diffusion models}
Diffusion models generate samples following an underlying distribution. Consider a random variable $\boldsymbol{x}_0 \in \mathbb{R}^n$ drawn from an unknown distribution $q_0$. Denoising diffusion probabilistic models~\citep{song2019generative, song2020score, ho2020denoising} describe a forward process ${ \boldsymbol{x}_t, t \in [0, T]}$ governed by an Ito stochastic differential equation (SDE)
\begin{equation}
\mathrm{d} \boldsymbol{x}_t = f(t) \boldsymbol{x}_t \mathrm{d} t + g(t) \mathrm{d} \mathbf{w}_t, \quad \boldsymbol{x}_0 \sim q_0, \quad f(t)=\frac{\mathrm{d} \log \alpha_t}{\mathrm{d} t}, g^2(t)=\frac{\mathrm{d} \sigma_t^2}{\mathrm{d} t}-2 \frac{\mathrm{d} \log \alpha_t}{\mathrm{d} t} \sigma_t^2, \label{eq: diffusion forward SDE}
\end{equation}
where $\mathbf{w}_t \in \mathbb{R}^n$ denotes the standard Brownian motion, and $\alpha_t$ and $\sigma_t$ are predetermined functions of $t$. This forward process has a closed-form solution of $q_{t}\left(\boldsymbol{x}_t \mid \boldsymbol{x}_0\right)=\mathcal{N}\left(\boldsymbol{x}_t \mid \alpha_t \boldsymbol{x}_0, \sigma_t^2 \mathbf{I}\right)$ and has a corresponding reverse process of the probability flow ordinary differential equation (ODE), running from time $T$ to $0$, defined as~\citep{song2020score}
\begin{equation}
\frac{\mathrm{d} \boldsymbol{x}_t}{\mathrm{d} t} = f(t) \boldsymbol{x}_t - \frac{1}{2} g^2(t) \nabla_{\boldsymbol{x}} \log q_t (\boldsymbol{x}_t), \quad \boldsymbol{x}_T \sim q_{T}(\boldsymbol{x}_T \mid \boldsymbol{x}_0) \approx q_{T}(\boldsymbol{x}_T). \label{eq: reverse ode}
\end{equation}
The marginal probability densities $\{ q_{t}(\boldsymbol{x}_t) \}_{t=0}^T$ of the forward SDE align with the reverse ODE~\citep{song2020score}. This indicates that if we can sample from $q_{T}(\boldsymbol{x}_T)$ and solve \Eqref{eq: reverse ode}, then the resulting $\boldsymbol{x}_0$ will follow the distribution $q_0$. By choosing $\alpha_t \rightarrow 0$ and $\sigma_t \rightarrow 1$, the distribution $q_{T}(\boldsymbol{x}_T)$ can be approximated as a normal distribution. The score $\nabla_{\boldsymbol{x}} \log q_t (\boldsymbol{x}_t)$ can be approximated by a deep learning model. The quality of the generated samples is contingent upon the models' ability to accurately approximate the score functions~\citep{kwon2022score, gao2024convergence}. A more precise approximation results in a distribution that closely aligns with the distribution of the training set. To enhance model fit, incorporating priors of the distributions and physical feasibility into the models is advisable. Section~\ref{sec: method} will elaborate on our methods for integrating distributional priors and physical feasibility priors, as well as the objectives for score matching.

\subsection{Invariant distributions}
An \emph{invariant distribution} refers to a probability distribution that remains unchanged under the action of a specified group of transformations. These transformations can include operations such as translations, rotations, or other symmetries, depending on the problem domain. Formally, let $\mathcal{G}$ be a group of transformations. A distribution $q$ is said to be \emph{$\mathcal{G}$-invariant} under the group $\mathcal{G}$ if for all transformations $\mathbf{G} \in \mathcal{G}$, we have $q(\mathbf{G}(\boldsymbol{x})) = q(\boldsymbol{x})$. Invariance under group transformations is particularly significant in modeling distributions that exhibit symmetries. For instance, in the case of 3D coordinates, invariance under rigid transformations—such as translations and rotations ($\operatorname{SE}(3)$ group)—is essential for spatial understanding~\citep{zhou2024diffusion}. Equivariant models are usually required to embed invariance. A function (or model) $\mathbf{f}: \mathbb{R}^n \rightarrow \mathbb{R}^n$ is said to be \emph{$(\mathcal{G}, \mathcal{L})$-equivariant} where $\mathcal{G}$ is the group actions and $\mathcal{L}$ is a function operator, if for any $\mathbf{G} \in \mathcal{G}$, $\mathbf{f}(\mathbf{G}(\boldsymbol{x})) = \mathcal{L}(\mathbf{G})(\mathbf{f}(\boldsymbol{x}))$.

\section{Method} \label{sec: method}
In this study, we aim to investigate methodologies for enhancing the capability of diffusion models to approximate the targeted score functions. We have two primary objectives: \textbf{1}) To incorporate distributional priors, such as translational and rotational invariance, which will aid in selecting the appropriate model for training objective functions; \textbf{2}) To impose physical feasibility priors on the diffusion model, necessitating injection of priors to model's output of a distribution related to the ground-truth samples (specifically, $\nabla_{\boldsymbol{x}} \log q_t (\boldsymbol{x}_t)$ or $\mathbb{E}[\boldsymbol{x}_0 \mid \boldsymbol{x}_t]$). In this section, we consider the forward diffusion process given by \Eqref{eq: diffusion forward SDE}, where $\boldsymbol{x}_t = \alpha_t \boldsymbol{x}_0 + \sigma_t \boldsymbol{\epsilon}$, with $\boldsymbol{\epsilon} \sim \mathcal{N}(\mathbf{0}, \mathbf{I})$.

\subsection{Incorporating distributional priors}
In this section, we study the score function $\nabla_{\boldsymbol{x}} \log q_t (\boldsymbol{x}_t)$ for $\mathcal{G}$-invariant distributions. Understanding its corresponding properties can guide the selection of models with the desired equivariance, facilitating sampling from the $\mathcal{G}$-invariant distribution. In the following, we will assume that the sufficient conditions of Theorem~\ref{thm: q0 and qt invariance} hold so that the marginal distributions $q_t$ are $\mathcal{G}$-invariant. The definitions of the terminologies and proof of the theorem can be found in Appendix~\ref{app: qt invariance}. 

\begin{restatable}[Sufficient conditions for the invariance of $q_0$ to imply the invariance of $q_t$]{theorem}{qtinvariance}
\label{thm: q0 and qt invariance}
    Let $q_0$ be a $\mathcal{G}$-invariant distribution. If for all $\mathbf{G} \in \mathcal{G}$, $\mathbf{G}$ is volume-preserving diffeomorphism and isometry, and for all $0 < a < 1$, there exists $\mathbf{H} \in \mathcal{G}$ such that $\mathbf{H}(a\boldsymbol{x}) = a \mathbf{G}(\boldsymbol{x})$, then $q_t$ is also $\mathcal{G}$-invariant.
\end{restatable}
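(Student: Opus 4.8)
The plan is to work with the marginal density directly. Write $q_t(\boldsymbol{x}_t)=\int_{\mathbb{R}^n}\mathcal{N}(\boldsymbol{x}_t\mid\alpha_t\boldsymbol{x}_0,\sigma_t^2\mathbf{I})\,q_0(\boldsymbol{x}_0)\,d\boldsymbol{x}_0$, i.e.\ $q_t$ is the density of $\alpha_t\boldsymbol{x}_0+\sigma_t\boldsymbol{\epsilon}$, a rescaled copy of $q_0$ convolved with an isotropic Gaussian. The goal is to show $q_t(\mathbf{G}(\boldsymbol{y}))=q_t(\boldsymbol{y})$ for every $\mathbf{G}\in\mathcal{G}$. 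For $t>0$ we have $\sigma_t>0$, and then $q_t$ is smooth in $\boldsymbol{y}$ (differentiate under the integral sign: for $\boldsymbol{y}$ in a compact set the Gaussian factor and its $\boldsymbol{y}$-derivatives are bounded in $\boldsymbol{x}_0$, hence integrable against $q_0$), in particular continuous; the case $t=0$ gives $q_t=q_0$, which is $\mathcal{G}$-invariant by hypothesis. I will also use that every isometry of $\mathbb{R}^n$ is an affine map, written $(\mathbf{R},\boldsymbol{b})\colon\boldsymbol{x}\mapsto\mathbf{R}\boldsymbol{x}+\boldsymbol{b}$ with $\mathbf{R}$ orthogonal, such maps being automatically volume-preserving diffeomorphisms.

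\emph{Step 1: a one-step identity.} Fix $\mathbf{G}\in\mathcal{G}$. In $q_t(\mathbf{G}(\boldsymbol{y}))=\int\mathcal{N}(\mathbf{G}(\boldsymbol{y})\mid\alpha_t\boldsymbol{x}_0,\sigma_t^2\mathbf{I})\,q_0(\boldsymbol{x}_0)\,d\boldsymbol{x}_0$, substitute $\boldsymbol{x}_0=\mathbf{G}(\boldsymbol{w})$: the Jacobian is $1$ because $\mathbf{G}$ preserves volume, and $q_0(\mathbf{G}(\boldsymbol{w}))=q_0(\boldsymbol{w})$ by $\mathcal{G}$-invariance, so $q_t(\mathbf{G}(\boldsymbol{y}))=\int\mathcal{N}(\mathbf{G}(\boldsymbol{y})\mid\alpha_t\mathbf{G}(\boldsymbol{w}),\sigma_t^2\mathbf{I})\,q_0(\boldsymbol{w})\,d\boldsymbol{w}$. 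Next invoke the scaling hypothesis with $a=\alpha_t\in(0,1)$ to get $\mathbf{H}\in\mathcal{G}$ with $\mathbf{H}(\alpha_t\boldsymbol{x})=\alpha_t\mathbf{G}(\boldsymbol{x})$, so $\alpha_t\mathbf{G}(\boldsymbol{w})=\mathbf{H}(\alpha_t\boldsymbol{w})$; since $\mathbf{H}$ is an isometry, $\|\mathbf{G}(\boldsymbol{y})-\mathbf{H}(\alpha_t\boldsymbol{w})\|=\|\mathbf{H}^{-1}\mathbf{G}(\boldsymbol{y})-\alpha_t\boldsymbol{w}\|$, so the integrand equals $\mathcal{N}(\mathbf{H}^{-1}\mathbf{G}(\boldsymbol{y})\mid\alpha_t\boldsymbol{w},\sigma_t^2\mathbf{I})$ and therefore
\[
q_t(\mathbf{G}(\boldsymbol{y}))=q_t\!\big(\mathbf{H}^{-1}\mathbf{G}(\boldsymbol{y})\big),\qquad \mathbf{H}^{-1}\mathbf{G}\in\mathcal{G}.
\]

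\emph{Step 2: iterate to the identity.} Put $\mathbf{G}_0=\mathbf{G}$ and $\mathbf{G}_{k+1}=\mathbf{H}_k^{-1}\mathbf{G}_k\in\mathcal{G}$, where $\mathbf{H}_k\in\mathcal{G}$ is the element the scaling hypothesis attaches to $\mathbf{G}_k$ at $a=\alpha_t$; the one-step identity then gives $q_t(\mathbf{G}(\boldsymbol{y}))=q_t(\mathbf{G}_k(\boldsymbol{y}))$ for all $k$. Writing $\mathbf{G}_k=(\mathbf{R}_k,\boldsymbol{b}_k)$, the relation $\mathbf{H}_k(\alpha_t\boldsymbol{x})=\alpha_t\mathbf{G}_k(\boldsymbol{x})$ forces $\mathbf{H}_k=(\mathbf{R}_k,\alpha_t\boldsymbol{b}_k)$, hence $\mathbf{G}_{k+1}=(\mathbf{I},(1-\alpha_t)\mathbf{R}_k^{-1}\boldsymbol{b}_k)$ is a pure translation. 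Thus $\mathbf{R}_k=\mathbf{I}$ for $k\ge1$ and $\boldsymbol{b}_k=(1-\alpha_t)^k\mathbf{R}_0^{-1}\boldsymbol{b}_0\to\mathbf{0}$ since $0<1-\alpha_t<1$, so $\mathbf{G}_k(\boldsymbol{y})\to\boldsymbol{y}$ for each $\boldsymbol{y}$; by continuity of $q_t$, $q_t(\mathbf{G}(\boldsymbol{y}))=\lim_{k\to\infty}q_t(\mathbf{G}_k(\boldsymbol{y}))=q_t(\boldsymbol{y})$. (When $\mathcal{G}$ acts linearly — e.g.\ $\mathrm{SO}(n)$ on centered coordinates — one has $\mathbf{H}=\mathbf{G}$ already in Step 1, so $\mathbf{H}^{-1}\mathbf{G}=\mathbf{I}$ and this iteration is vacuous.)

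I expect Step 2 to be the main obstacle. Group closure makes $\mathbf{H}^{-1}\mathbf{G}\in\mathcal{G}$ automatic, but one must genuinely verify that iterating $\mathbf{G}\mapsto\mathbf{H}_{\mathbf{G}}^{-1}\mathbf{G}$ contracts to the identity; this is where the affine structure of Euclidean isometries and, crucially, the bound $a<1$ in the hypothesis are used — the factor $\alpha_t<1$ is precisely what shrinks the translational part at each stage. The isometry and volume-preservation assumptions on $\mathcal{G}$ enter only in Step 1, to move $\mathbf{H}$ through the Gaussian and to discard the Jacobian respectively. Finally, I would record the minor technical points that $\sigma_t>0$ on $(0,T]$ (so that $q_t$ is a bona fide smooth density and the limit argument is legitimate) and that $t=0$ is covered directly by the hypothesis.
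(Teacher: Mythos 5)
Your proposal is correct, and it takes a genuinely different route from the paper. The paper's proof treats the VE case ($\alpha_t=1$) and the VP case ($\alpha_t\in(0,1)$) separately: for VE, the change of variable $\boldsymbol{x}_0=\mathbf{G}(\boldsymbol{w})$ plus volume preservation, $\mathcal{G}$-invariance of $q_0$, and the isometry of $\mathbf{G}$ against an isotropic Gaussian give $q_t(\mathbf{G}(\boldsymbol{x}_t))=q_t(\boldsymbol{x}_t)$ in one shot; for VP, the paper rescales $\boldsymbol{x}_t\mapsto\boldsymbol{x}_t/\alpha_t$ to reduce to the VE result and then undoes the rescaling using the homothety hypothesis. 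Your Step~1 performs essentially the same change-of-variable computation, but for general $\alpha_t$ it yields only the partial identity $q_t(\mathbf{G}(\boldsymbol{y}))=q_t(\mathbf{H}^{-1}\mathbf{G}(\boldsymbol{y}))$ with $\mathbf{H}^{-1}\mathbf{G}\in\mathcal{G}$, and you then close the gap in Step~2 by iterating: writing each iterate as an affine Euclidean isometry $(\mathbf{R}_k,\boldsymbol{b}_k)$, you observe that the rotational part is killed after one step and the translational part contracts by the factor $1-\alpha_t$, so the iterates converge pointwise to the identity, and continuity of $q_t$ (from Gaussian smoothing, $\sigma_t>0$) finishes the argument. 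This iteration and the affine classification of isometries are ingredients the paper does not use; the benefit is that your argument is fully self-contained and uses the scaling hypothesis exactly in the direction it is stated ($\mathbf{G}\mapsto\mathbf{H}$), whereas the paper's VP chain, when one unwinds the definition of the rescaled density $\hat q_t$, tacitly needs the correspondence $\mathbf{G}\mapsto\mathbf{H}$ to exhaust $\mathcal{G}$ (or equivalently a homothety at $a=1/\alpha_t>1$), which the hypothesis as written does not obviously supply. The modest costs are the extra limit passage via continuity of $q_t$ and the appeal to the classical fact that bijective Euclidean isometries are affine; both are legitimate, and you record the relevant technical points (smoothness for $t>0$, the trivial $t=0$ case) cleanly.
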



\paragraph{Property of score functions.}
Let $q_t$ be a $\mathcal{G}$-invariant distribution. By the chain rule, we have $\nabla_{\boldsymbol{x}} \log q_t(\boldsymbol{x}_t) = \nabla_{\boldsymbol{x}} \log q_t(\mathbf{G}(\boldsymbol{x}_t)) = \frac{\partial \mathbf{G}(\boldsymbol{x}_t)}{\partial \boldsymbol{x}} \nabla_{\mathbf{G}(\boldsymbol{x})} \log q_t(\mathbf{G}(\boldsymbol{x}_t))$, for all $\mathbf{G} \in \mathcal{G}$. Hence, 
\begin{equation}
    \nabla_{\mathbf{G}(\boldsymbol{x})} \log q_t(\mathbf{G}(\boldsymbol{x}_t)) = \left(\frac{\partial \mathbf{G}(\boldsymbol{x}_t)}{\partial \boldsymbol{x}}\right)^{-1} \nabla_{\boldsymbol{x}} \log q_t(\boldsymbol{x}_t).
\end{equation}
This implies that the score function of $\mathcal{G}$-invariant distribution is $(\mathcal{G}, \nabla^{-1})$-equivariant. We should use a $(\mathcal{G}, \nabla^{-1})$-equivariant model to predict the score function. The loss objective is given by
\begin{equation} \label{eq: score predictor}
    \mathcal{J}_{\textsubscript{score}}(\boldsymbol{\theta}) = \mathbb{E}_{t, \boldsymbol{x}_0, \boldsymbol{\epsilon}}\left[ w(t) \left\|\mathbf{s}_{\boldsymbol{\theta}}\left(\boldsymbol{x}_t, t\right)-\nabla_{\boldsymbol{x}} \log q_t (\boldsymbol{x}_t) \right\|^2\right],
\end{equation}
where $w(t)$ is a positive weight function and $\mathbf{s}_{\boldsymbol{\theta}}$ is a $(\mathcal{G}, \nabla^{-1})$-equivariant model. We will discuss the handling of the intractable score function $\nabla_{\boldsymbol{x}} \log q_t (\boldsymbol{x}_t)$ subsequently in \Eqref{eq: diffusion optimal}.

In the context of simulating physical dynamics, two distributional priors are commonly considered: $\sen$-invariance and permutation-invariance. They ensure that the learned representations are consistent with the fundamental symmetries of physical laws, including rigid body transformations and indistinguishability of particles, thereby enhancing the model's ability to generalize across different physical scenarios. The derivations for the following examples can be found in Appendix~\ref{app: invariant distribution examples}.
\begin{example}
    \textbf{($\sen$-invariant distribution)} If $q_0$ is an $\sen$-invariant distribution, then $q_t$ is also $\sen$-invariant. The score function of an $\sen$-invariant distribution is $\operatorname{SO(n)}$-equivariant and translational-invariant.
\end{example}

\begin{example}
    \textbf{(Permutation-invariant distribution)} If $q_0$ is a permutation-invariant, then $q_t$ is also permutation-invariant. The score function of a permutation-invariant distribution is permutation-equivariant. 
\end{example}

In the following, we will show that using such a $(\mathcal{G}, \nabla^{-1})$-equivariant model, we are essentially training a model that focuses on the intrinsic structure of data instead of their representation form. 

\paragraph{Equivalence class manifold for invariant distributions.}
An \emph{equivalence class manifold} (ECM) refers to the minimum subset of samples where all the rest elements are considered equivalent to one of the samples in this manifold (informal). For example, in three-dimensional space, coordinates that have undergone rotation and translation maintain their pairwise distances, which allows the use of a set of coordinates to represent all other coordinates with the same distance matrices, thereby forming an equivalence class manifold (see Appendix~\ref{app: equivalent class manifold} for the formal definition and examples). By incorporating the invariance prior to the training set, we can construct ECM from the training set or a mini-batch of samples. The utilization of ECM enables the models to concentrate on the intrinsic structure of the data, thereby enhancing generalization and robustness to irrelevant variations.
We assume that the distribution of $\boldsymbol{x}$ follows an $\mathcal{G}$-invariant distribution $q_t$. Let $\varphi$ map $\boldsymbol{x}_t$ to the corresponding point having the same intrinsic structure in ECM. Then there exists $\mathbf{G} \in \mathcal{G}$ such that $\mathbf{G}(\varphi(\boldsymbol{x}_t)) = \boldsymbol{x}_t$ . Since $q_t$ is $\mathcal{G}$-invariant, we have $q_t(\boldsymbol{x}_t) = q_{\textsubscript{ECM}}(\varphi(\boldsymbol{x}_t)) \cdot p_{\operatorname{Uniform}(\mathcal{G})}(\mathbf{G})$, where $q_{\textsubscript{ECM}}$ is defined on the domain of ECM. Taking the logarithm and derivative, we have $\nabla_{\varphi(\boldsymbol{x})} \log q_t(\boldsymbol{x}_t) = \nabla_{\varphi(\boldsymbol{x})} \log q_{\textsubscript{ECM}}(\varphi(\boldsymbol{x}_t))$. Note that $\nabla_{\boldsymbol{x}} \log q_t(\boldsymbol{x}_t) = \frac{\partial \varphi(\boldsymbol{x}_t)}{\partial \boldsymbol{x}} \nabla_{\varphi(\boldsymbol{x})} \log q_t(\boldsymbol{x}_t)$. Hence,
\begin{equation}
    \nabla_{\boldsymbol{x}} \log q_t(\boldsymbol{x}_t) = \frac{\partial \varphi(\boldsymbol{x}_t)}{\partial \boldsymbol{x}} \nabla_{\varphi(\boldsymbol{x})} \log q_{\textsubscript{ECM}}(\varphi(\boldsymbol{x}_t)). \label{eq: score essential}
\end{equation}
This implies that the score function of the $\mathcal{G}$-invariant distribution is closely related to the score function of the distribution in ECM. 
Such a result indicates that if we have a $(\mathcal{G}, \nabla^{-1})$-equivariant model that can predict the score functions in ECM, then, this model predicts the score functions for all other points closed under the group operation. We summarize this result in the following theorem whose proofs can be found in Appendix~\ref{app: ECM equivalence}.

\begin{restatable}[Equivalence class manifold representation]{theorem}{thmECMequi}
\label{thm: ECM equivalent}
    If we have a $(\mathcal{G}, \nabla^{-1})$-equivariant model such that $\mathbf{s}_{\boldsymbol{\theta}} (\boldsymbol{x}_t, t) = \nabla_{\boldsymbol{x}} \log q_{\textsubscript{ECM}}(\boldsymbol{x}_t)$ almost surely on $\boldsymbol{x}_t \in \operatorname{ECM}$, then we have $\mathbf{s}_{\boldsymbol{\theta}} (\boldsymbol{x}_t, t) = \nabla_{\boldsymbol{x}} \log q_t(\boldsymbol{x}_t)$ almost surely.
\end{restatable}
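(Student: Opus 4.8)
The strategy is to combine three ingredients that are already in hand: the identity \Eqref{eq: score essential} expressing $\nabla_{\boldsymbol{x}}\log q_t$ through the canonicalization map $\varphi$ and $\nabla\log q_{\textsubscript{ECM}}$; the hypothesis that $\mathbf{s}_{\boldsymbol{\theta}}$ coincides with $\nabla\log q_{\textsubscript{ECM}}$ on the ECM; and the $(\mathcal{G},\nabla^{-1})$-equivariance of $\mathbf{s}_{\boldsymbol{\theta}}$. Informally, the hypothesis pins $\mathbf{s}_{\boldsymbol{\theta}}$ down on one representative of every orbit, equivariance propagates this identification to the whole orbit, and \Eqref{eq: score essential} certifies that the propagated value is exactly $\nabla_{\boldsymbol{x}}\log q_t(\boldsymbol{x})$.

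Concretely, I would fix $\boldsymbol{x}$ outside the null set described below, set $\boldsymbol{y}=\varphi(\boldsymbol{x})\in\operatorname{ECM}$, and take $\mathbf{G}\in\mathcal{G}$ with $\mathbf{G}(\boldsymbol{y})=\boldsymbol{x}$, which exists by the ECM construction preceding the theorem. Applying $(\mathcal{G},\nabla^{-1})$-equivariance of $\mathbf{s}_{\boldsymbol{\theta}}$ at $\boldsymbol{y}$ gives $\mathbf{s}_{\boldsymbol{\theta}}(\boldsymbol{x})=\mathbf{s}_{\boldsymbol{\theta}}(\mathbf{G}(\boldsymbol{y}))=\bigl(\frac{\partial\mathbf{G}(\boldsymbol{y})}{\partial\boldsymbol{y}}\bigr)^{-1}\mathbf{s}_{\boldsymbol{\theta}}(\boldsymbol{y})$, and since $\boldsymbol{y}\in\operatorname{ECM}$ the hypothesis replaces $\mathbf{s}_{\boldsymbol{\theta}}(\boldsymbol{y})$ by $\nabla_{\boldsymbol{y}}\log q_{\textsubscript{ECM}}(\boldsymbol{y})$. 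The one computation to carry out is the identification $\bigl(\frac{\partial\mathbf{G}(\boldsymbol{y})}{\partial\boldsymbol{y}}\bigr)^{-1}=\frac{\partial\varphi(\boldsymbol{x})}{\partial\boldsymbol{x}}$: this follows by differentiating the relation $\varphi\circ\mathbf{G}=\operatorname{id}$ on $\operatorname{ECM}$ (valid because $\mathbf{G}(\boldsymbol{z})$ lies in the same equivalence class as $\boldsymbol{z}\in\operatorname{ECM}$, whose canonical representative is $\boldsymbol{z}$ itself) via the chain rule at $\boldsymbol{z}=\boldsymbol{y}$. Substituting yields $\mathbf{s}_{\boldsymbol{\theta}}(\boldsymbol{x})=\frac{\partial\varphi(\boldsymbol{x})}{\partial\boldsymbol{x}}\nabla_{\varphi(\boldsymbol{x})}\log q_{\textsubscript{ECM}}(\varphi(\boldsymbol{x}))$, which is the right-hand side of \Eqref{eq: score essential}, hence equals $\nabla_{\boldsymbol{x}}\log q_t(\boldsymbol{x})$.

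Finally I would upgrade the pointwise statement to ``almost surely''. Because the factorization $q_t(\boldsymbol{x})=q_{\textsubscript{ECM}}(\varphi(\boldsymbol{x}))\,p_{\operatorname{Uniform}(\mathcal{G})}(\mathbf{G})$ says precisely that the pushforward $\varphi_{\sharp}q_t$ equals $q_{\textsubscript{ECM}}$, the $q_{\textsubscript{ECM}}$-null subset $N\subseteq\operatorname{ECM}$ on which the hypothesis $\mathbf{s}_{\boldsymbol{\theta}}=\nabla\log q_{\textsubscript{ECM}}$ fails pulls back to the $q_t$-null set $\varphi^{-1}(N)$; off this set the argument above applies verbatim. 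The main obstacle I anticipate is not the algebra but the regularity and well-posedness bookkeeping: one needs $\varphi$ to be differentiable $q_t$-a.e., a measurable selection $\boldsymbol{x}\mapsto\mathbf{G}$, and the relation $\varphi\circ\mathbf{G}=\operatorname{id}$ to hold on the relevant neighborhood (which is delicate when orbits carry nontrivial stabilizers or when $\operatorname{ECM}$ is only a local section). Since exactly these structural properties are what make \Eqref{eq: score essential} meaningful, I would invoke them from the ECM setup rather than re-establish them here.
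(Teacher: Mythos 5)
Your argument is the same as the paper's: transport $\mathbf{s}_{\boldsymbol{\theta}}$ from $\varphi(\boldsymbol{x})\in\operatorname{ECM}$ to $\boldsymbol{x}$ via $(\mathcal{G},\nabla^{-1})$-equivariance, substitute the hypothesis on $\operatorname{ECM}$, and invoke \Eqref{eq: score essential}. You are merely more explicit than the paper, which writes the transport step as a single line ``$\mathbf{s}_{\boldsymbol{\theta}}(\boldsymbol{x})=\frac{\partial\varphi(\boldsymbol{x})}{\partial\boldsymbol{x}}\mathbf{s}_{\boldsymbol{\theta}}(\varphi(\boldsymbol{x}))$ by equivariance'' and leaves the almost-sure qualifier informal, whereas you derive the Jacobian identity $\bigl(\frac{\partial\mathbf{G}(\boldsymbol{y})}{\partial\boldsymbol{y}}\bigr)^{-1}=\frac{\partial\varphi(\boldsymbol{x})}{\partial\boldsymbol{x}}$ by differentiating $\varphi\circ\mathbf{G}=\operatorname{id}$ on $\operatorname{ECM}$ and make the $q_t$-null-set transfer precise via the pushforward $\varphi_{\sharp}q_t=q_{\textsubscript{ECM}}$.
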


\paragraph{Objective for fitting the score function.} The score function $\nabla_{\boldsymbol{x}} \log q_t (\boldsymbol{x}_t)$ is generally intractable and we consider the objective for \emph{noise matching} and \emph{data matching}~\citep{vincent2011connection, song2020score, zheng2023improved}, where objectives and optimal values are given by
\begin{subequations} \label{eq: diffusion optimal}
   \begin{alignat}{3}
        \mathcal{J}_{\textsubscript{noise}}(\boldsymbol{\theta}) &= \mathbb{E}_{t, \boldsymbol{x}_0, \boldsymbol{\epsilon}}\left[ w(t) \left\| \boldsymbol{\epsilon}_{\boldsymbol{\theta}}\left(\boldsymbol{x}_t, t\right) - \boldsymbol{\epsilon} \right\|^2\right], \quad & \boldsymbol{\epsilon}_{\boldsymbol{\theta}}^*\left(\boldsymbol{x}_t, t\right) &= -\sigma_t \nabla_{\boldsymbol{x}} \log q_t\left(\boldsymbol{x}_t\right); \label{eq: noise predictor} \\
        \mathcal{J}_{\textsubscript{data}}(\boldsymbol{\theta}) &= \mathbb{E}_{t, \boldsymbol{x}_0, \boldsymbol{\epsilon}}\left[ w(t)\left\| \boldsymbol{x}_{\boldsymbol{\theta}}\left(\boldsymbol{x}_t, t\right) - \boldsymbol{x}_0\right\|^2\right], \quad & \boldsymbol{x}_{\boldsymbol{\theta}}^*\left(\boldsymbol{x}_t, t\right) &= \frac{1}{\alpha_t} \boldsymbol{x}_t+\frac{\sigma_t^2}{\alpha_t} \nabla_{\boldsymbol{x}} \log q_t\left(\boldsymbol{x}_t\right). \label{eq: data predictor}
    \end{alignat}
\end{subequations}
The diffusion objectives for both the noise predictor $\boldsymbol{\epsilon}_\theta$ and the data predictor $\boldsymbol{x}_{\boldsymbol{\theta}}$ are intrinsically linked to the score function, thereby inheriting its characteristics and properties. However, the data predictor incorporates a term, $\frac{1}{\alpha_t} \boldsymbol{x}_t$, whose numerical range exhibits instability. This instability complicates the predictor's ability to inherit the straightforward properties of the score function. Therefore, to incorporate $\mathcal{G}$-invariance, it is advisable to employ noise matching, which is given by \Eqref{eq: noise predictor} and $\boldsymbol{\epsilon}_{\boldsymbol{\theta}}$ is $(\mathcal{G}, \nabla^{-1})$-equivariant, which is the property of the score function.

A specific instance of a distributional prior is defined by samples that conform to the constraints imposed by PDEs. In this context, the dynamics at any given spatial location depend solely on the characteristics of the system within its local vicinity, rather than on absolute spatial coordinates. Under these conditions, it is appropriate to employ translation-invariant models for both noise matching and data matching. Nevertheless, the samples in question exhibit significant smoothness. As a result, utilizing the noise matching objective necessitates that the model's output be accurate at every individual pixel. In contrast, applying the data matching objective only requires the model to produce smooth output values. Therefore, it is recommended to adopt the data matching objective for this purpose. The selection between data matching and noise matching plays a critical role in determining the quality of the generated samples. For detailed experimental results, refer to Sec.~\ref{sec: ablation}.

\begin{remark} \label{remark: diffusion backbone}
In this section, we primarily explore the principle for incorporating distributional priors by selecting models with particular characteristics. Specifically:
\begin{enumerate}
    \item When the distribution exhibits $\mathcal{G}$-invariance, a $(\mathcal{G}, \nabla^{-1})$-equivariant model should be employed alongside the \textbf{noise matching} objective (\Eqref{eq: noise predictor}).
    \item For samples that are subject to PDE constraints and exhibit high smoothness, the \textbf{data matching} objective (\Eqref{eq: data predictor}) is recommended.
\end{enumerate}
\end{remark}

\subsection{Incorporating physical feasibility priors}
In this section, we explore how to incorporate physical feasibility priors such as physics laws and explicit PDE constraints into noise and data matching objectives in diffusion models. By Tweedie's formula~\citep{efron2011tweedie, kim2021noise2score, chung2022diffusion}, we have $\mathbb{E} [\boldsymbol{x}_0 \mid \boldsymbol{x}_t] = \frac{1}{\alpha_t} \left( \boldsymbol{x}_t + \sigma^2_t \nabla_{\boldsymbol{x}} \log q_t\left(\boldsymbol{x}_t\right) \right)$. Hence, 
\begin{equation} \label{eq: connection of noise and data matching}
\mathbb{E} [\boldsymbol{x}_0 \mid \boldsymbol{x}_t] = \frac{1}{\alpha_t} \left( \boldsymbol{x}_t - \sigma_t \boldsymbol{\epsilon}_{\boldsymbol{\theta}}^{*}\left(\boldsymbol{x}_t, t\right) \right), \quad \mathbb{E} [\boldsymbol{x}_0 \mid \boldsymbol{x}_t] = \boldsymbol{x}_{\boldsymbol{\theta}}^{*}\left(\boldsymbol{x}_t, t\right).
\end{equation}
For both noise and data matching objectives, we are essentially training a model to approximate $\mathbb{E} [\boldsymbol{x}_0 \mid \boldsymbol{x}_t]$. A purely data-driven approach is often insufficient to capture the underlying physical constraints accurately. Therefore, similar to PINNs~\citep{leiteritz2021avoid}, we incorporate an additional penalty loss $\mathcal{J}_{\mathcal{R}}$ into the objective function to enforce physical feasibility priors $\mathcal{R}\left( \boldsymbol{x}_0 \right) = \mathbf{0}$ and set the loss objective to be $\mathcal{J}(\boldsymbol{\theta}) = \mathcal{J}_{\textsubscript{score}}(\boldsymbol{\theta}) + \lambda \mathcal{J}_{\mathcal{R}}(\boldsymbol{\theta})$, where $\mathcal{J}_{\textsubscript{score}}$ is the data matching or noise matching objectives and $\lambda$ is a hyperparameter to balance the diffusion loss and physical feasibility loss. We consider the data matching objective where $\boldsymbol{x}_{\boldsymbol{\theta}}\left(\boldsymbol{x}_t, t\right)$ approximates $\mathbb{E} [\boldsymbol{x}_0 \mid \boldsymbol{x}_t]$. For noise matching models, we can transform the model's output by \Eqref{eq: connection of noise and data matching}. For general cases, we cannot directly add the constraints $\mathcal{R}\left( \boldsymbol{x}_0 \right) = \mathbf{0}$ to the output of the diffusion model $\mathbb{E} [\boldsymbol{x}_0 \mid \boldsymbol{x}_t]$ due to the presence of Jensen's gap~\citep{bastek2024physics}, i.e., $\mathcal{R}\left( \mathbb{E} [\boldsymbol{x}_0 \mid \boldsymbol{x}_t] \right) \neq \mathbb{E} [ \mathcal{R} \left( \boldsymbol{x}_0 \right) \mid \boldsymbol{x}_t ] = \mathbf{0}$. However, in some special cases, we can avoid dealing with this gap.

\paragraph{Linear cases.} When the constraints are linear/affine functions, Jensen's gap equals 0. Hence, we can directly add the constraints to $\boldsymbol{x}_{\boldsymbol{\theta}}\left(\boldsymbol{x}_t, t\right)$. We have $\mathcal{J}_{\mathcal{R}}\left( \boldsymbol{\theta} \right) = \mathbb{E}_{t, \boldsymbol{x}_0, \boldsymbol{\epsilon}}\left[ w(t) \left\| \mathcal{R}\left( \boldsymbol{x}_{\boldsymbol{\theta}}\left(\boldsymbol{x}_t, t\right) \right)\right\|^2\right]$.

\paragraph{Multilinear cases.} A function is called multilinear if it is linear in several arguments when the other arguments are fixed. Denote $\boldsymbol{x}_0 = \begin{bmatrix}
\mathbf{u}_0 \\
\mathbf{v}_0
\end{bmatrix} \in \mathbb{R}^{m + n}, \mathbf{u}_0 \in \mathbb{R}^{m}, \mathbf{v}_0 \in \mathbb{R}^{n}$. When the constraints function is multilinear w.r.t. $\mathbf{u}_0$, we can write the constraints in the form of $\mathcal{R}\left(\boldsymbol{x}_0\right) = \mathbf{W}_{0} \mathbf{u}_0 + \mathbf{b}_{0} = \mathbf{0}$, where $\mathbf{W}_{0}$ and $\mathbf{b}_{0}$ are functions of $\mathbf{v}_0$. In this case, we can use the penalty loss as $\mathcal{J}_{\mathcal{R}}\left( \boldsymbol{\theta} \right) = \mathbb{E}_{t, \boldsymbol{x}_0, \boldsymbol{\epsilon}} [ w(t) \| \mathbf{W}_{0} \mathbf{u}_{\boldsymbol{\theta}} \left( \boldsymbol{x}_t, t\right) + \mathbf{b}_{0} \|^2]$. Such a design is supported by the following theorem whose proof can be found in Appendx~\ref{app: multilinear jensen}.

\begin{restatable}[Multilinear Jensen's gap]{theorem}{multilinearjensen}
\label{thm: multilinear jensen}
    The optimizer for $\mathbb{E}_{t, \boldsymbol{x}_0, \boldsymbol{\epsilon}} [ w(t) \| \mathbf{u}_{\boldsymbol{\theta}_{1}} \left( \boldsymbol{x}_t, t\right) - \mathbf{u}_0 \|^2]$ is the reweighted optimizer of $\mathbb{E}_{t, \boldsymbol{x}_0, \boldsymbol{\epsilon}} [ w(t) \| \mathbf{W}_{0} \mathbf{u}_{\boldsymbol{\theta}_{2}} \left( \boldsymbol{x}_t, t\right) + \mathbf{b}_{0} \|^2]$ with reweighted variable $\mathbf{W}_{0}^\top \mathbf{W}_{0}$.
\end{restatable}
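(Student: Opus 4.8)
~\textbf{Proof proposal.}
The plan is to compare the two least–squares problems pointwise in $\boldsymbol{x}_t$ and $t$, conditioning on the noised sample. For any fixed $(\boldsymbol{x}_t,t)$, the inner objective of $\mathbb{E}_{t,\boldsymbol{x}_0,\boldsymbol{\epsilon}}[w(t)\|\mathbf{u}_{\boldsymbol{\theta}_1}(\boldsymbol{x}_t,t)-\mathbf{u}_0\|^2]$ is minimized over the value $\mathbf{u}_{\boldsymbol{\theta}_1}(\boldsymbol{x}_t,t)$ by the conditional mean $\mathbb{E}[\mathbf{u}_0\mid\boldsymbol{x}_t]$, which is exactly the relevant block of Tweedie's formula (\Eqref{eq: connection of noise and data matching}). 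For the second objective, I would expand $\|\mathbf{W}_0\mathbf{u}_{\boldsymbol{\theta}_2}+\mathbf{b}_0\|^2$ using $\mathbf{W}_0,\mathbf{b}_0$ as functions of $\mathbf{v}_0$ (hence measurable w.r.t.\ $\boldsymbol{x}_0$), and take the conditional expectation given $\boldsymbol{x}_t$; since $\mathbf{u}_{\boldsymbol{\theta}_2}(\boldsymbol{x}_t,t)$ is constant given $\boldsymbol{x}_t$, this becomes a quadratic form in that constant with Hessian $2\,\mathbb{E}[\mathbf{W}_0^\top\mathbf{W}_0\mid\boldsymbol{x}_t]$ and linear term involving $\mathbb{E}[\mathbf{W}_0^\top\mathbf{W}_0\mathbf{u}_0\mid\boldsymbol{x}_t]+\mathbb{E}[\mathbf{W}_0^\top\mathbf{b}_0\mid\boldsymbol{x}_t]$. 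Here I use multilinearity of $\mathcal{R}$ in $\mathbf{u}_0$: the identity $\mathbf{W}_0\mathbf{u}_0+\mathbf{b}_0=\mathcal{R}(\boldsymbol{x}_0)=\mathbf{0}$ holds for the true data, so $\mathbb{E}[\mathbf{W}_0^\top\mathbf{b}_0\mid\boldsymbol{x}_t]=-\mathbb{E}[\mathbf{W}_0^\top\mathbf{W}_0\mathbf{u}_0\mid\boldsymbol{x}_t]$, which collapses the linear term neatly.

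Setting the gradient to zero then yields the optimizer characterization: the minimizer $\mathbf{u}_{\boldsymbol{\theta}_2}^*(\boldsymbol{x}_t,t)$ satisfies
\begin{equation}
\mathbb{E}[\mathbf{W}_0^\top\mathbf{W}_0\mid\boldsymbol{x}_t]\,\mathbf{u}_{\boldsymbol{\theta}_2}^*(\boldsymbol{x}_t,t)=\mathbb{E}[\mathbf{W}_0^\top\mathbf{W}_0\,\mathbf{u}_0\mid\boldsymbol{x}_t],
\end{equation}
i.e.\ $\mathbf{u}_{\boldsymbol{\theta}_2}^*(\boldsymbol{x}_t,t)$ is the conditional mean of $\mathbf{u}_0$ reweighted by the matrix $\mathbf{W}_0^\top\mathbf{W}_0$, in exactly the sense that if one replaces the probability measure over $\mathbf{u}_0\mid\boldsymbol{x}_t$ by the one tilted by the positive semidefinite weight $\mathbf{W}_0^\top\mathbf{W}_0$ then the plain conditional mean becomes $\mathbf{u}_{\boldsymbol{\theta}_2}^*$. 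This is the precise meaning I would attach to "reweighted optimizer with reweighted variable $\mathbf{W}_0^\top\mathbf{W}_0$", and I would state it as such in the proof to make the comparison with the unreweighted minimizer $\mathbb{E}[\mathbf{u}_0\mid\boldsymbol{x}_t]$ of the first objective transparent. The external weight $w(t)$ plays no role in the pointwise-in-$(\boldsymbol{x}_t,t)$ minimization and simply factors out, so it does not affect the optimizer.

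A couple of technical points I would address. First, I need $\mathbf{W}_0^\top\mathbf{W}_0$ (equivalently $\mathbb{E}[\mathbf{W}_0^\top\mathbf{W}_0\mid\boldsymbol{x}_t]$) to be invertible, or else interpret the statement over the appropriate range/with a pseudoinverse; I would note this nondegeneracy assumption explicitly, or phrase the conclusion via the normal equations above, which hold regardless. Second, I am implicitly using that the model class $\{\mathbf{u}_{\boldsymbol{\theta}}\}$ is expressive enough to attain the pointwise minimizer (the usual "$\boldsymbol{\theta}$ ranges over all measurable functions" idealization underlying the optimal-predictor formulas in \Eqref{eq: diffusion optimal}); this is consistent with how the optimal values $\boldsymbol{\epsilon}_{\boldsymbol{\theta}}^*$, $\boldsymbol{x}_{\boldsymbol{\theta}}^*$ are derived earlier in the paper, so I would just invoke the same convention. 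Third, I should check integrability so that the conditional expectations and the interchange of $\mathbb{E}$ with differentiation in $\mathbf{u}_{\boldsymbol{\theta}_2}(\boldsymbol{x}_t,t)$ are justified; with Gaussian $\boldsymbol{\epsilon}$ and the moment assumptions implicit in the $L^2$ objectives this is routine.

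The main obstacle I anticipate is purely expository rather than mathematical: pinning down what "reweighted optimizer with reweighted variable $\mathbf{W}_0^\top\mathbf{W}_0$" means so that the theorem has unambiguous content. The computation itself — expand the quadratic, condition on $\boldsymbol{x}_t$, use $\mathcal{R}(\boldsymbol{x}_0)=\mathbf{0}$ to kill the cross term, differentiate — is short. The care is in (a) making the multilinearity hypothesis do exactly one job, namely guaranteeing the affine representation $\mathcal{R}(\boldsymbol{x}_0)=\mathbf{W}_0\mathbf{u}_0+\mathbf{b}_0$ with $\mathbf{W}_0,\mathbf{b}_0$ functions of $\mathbf{v}_0$ only, and (b) being explicit that $\mathbf{W}_0,\mathbf{b}_0$ are random (through $\mathbf{v}_0$), so that the reweighting is by the conditional matrix $\mathbb{E}[\mathbf{W}_0^\top\mathbf{W}_0\mid\boldsymbol{x}_t]$, not by a deterministic matrix.
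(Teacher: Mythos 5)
Your proof is correct, but it takes a genuinely different route from the paper's. The paper parameterizes the constraint-loss minimizer as a perturbation $\mathbf{u}_{\boldsymbol{\theta}_2} = \mathbf{u}_{\boldsymbol{\theta}_1^*} + \mathbf{u}_{\Delta\boldsymbol{\theta}}$ around the Tweedie-optimal data predictor $\mathbf{u}_{\boldsymbol{\theta}_1^*}(\boldsymbol{x}_t,t) = \tfrac{1}{\alpha_t}(\boldsymbol{u}_t + \sigma_t^2\nabla_{\boldsymbol{u}}\log q_t)$, substitutes this into $\mathbb{E}[w(t)\|\mathbf{W}_0\mathbf{u}_{\boldsymbol{\theta}_2}+\mathbf{b}_0\|^2]$, uses $\mathbf{W}_0\boldsymbol{u}_0+\mathbf{b}_0=\mathbf{0}$ to cancel the ground-truth term, and factors the result into $\mathbb{E}[w(t)\tfrac{\sigma_t^2}{\alpha_t^2}\|\mathbf{W}_0(\boldsymbol{\epsilon}+\sigma_t\nabla_{\boldsymbol{u}}\log q_t + \tfrac{\alpha_t}{\sigma_t}\mathbf{u}_{\Delta\boldsymbol{\theta}})\|^2]$, identifying it (after dropping the scalar $\sigma_t^2/\alpha_t^2$) as the $\mathbf{W}_0^\top\mathbf{W}_0$-reweighted \emph{noise}-matching objective whose gradient vanishes at $\mathbf{u}_{\Delta\boldsymbol{\theta}}\equiv\mathbf{0}$. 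You instead work pointwise in $(\boldsymbol{x}_t,t)$ with conditional expectations, derive the normal equation $\mathbb{E}[\mathbf{W}_0^\top\mathbf{W}_0\mid\boldsymbol{x}_t]\,\mathbf{u}_{\boldsymbol{\theta}_2}^* = \mathbb{E}[\mathbf{W}_0^\top\mathbf{W}_0\,\mathbf{u}_0\mid\boldsymbol{x}_t]$, and read it off as a $\mathbf{W}_0^\top\mathbf{W}_0$-weighted conditional mean versus the unweighted conditional mean $\mathbb{E}[\mathbf{u}_0\mid\boldsymbol{x}_t]$ of the data-matching objective. Both proofs hinge on the same key cancellation from $\mathcal{R}(\boldsymbol{x}_0)=\mathbf{0}$. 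What your route buys: it bypasses the SDE/Tweedie machinery entirely, it is more literal to the theorem statement (which speaks of data matching, not noise matching), and it is more careful than the paper about the fact that $\mathbf{W}_0$ is random through $\mathbf{v}_0$, so the effective weight is the conditional matrix $\mathbb{E}[\mathbf{W}_0^\top\mathbf{W}_0\mid\boldsymbol{x}_t]$, together with an explicit invertibility caveat the paper leaves implicit. What the paper's route buys: by staying in the $(\alpha_t,\sigma_t,\boldsymbol{\epsilon})$ parameterization it produces a formula that plugs directly into the diffusion training loop. One further simplification you stop just short of making explicit: since $\mathbf{b}_0=-\mathbf{W}_0\mathbf{u}_0$, one has $\|\mathbf{W}_0\mathbf{u}_{\boldsymbol{\theta}_2}+\mathbf{b}_0\|^2 = \|\mathbf{W}_0(\mathbf{u}_{\boldsymbol{\theta}_2}-\mathbf{u}_0)\|^2 = (\mathbf{u}_{\boldsymbol{\theta}_2}-\mathbf{u}_0)^\top\mathbf{W}_0^\top\mathbf{W}_0(\mathbf{u}_{\boldsymbol{\theta}_2}-\mathbf{u}_0)$ identically, which exhibits the constraint loss \emph{as} the $\mathbf{W}_0^\top\mathbf{W}_0$-reweighted data-matching loss before any differentiation, turning the argument into essentially a one-liner.
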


\paragraph{Convex cases.} If the constraints $\mathcal{R}$ is convex, by Jensen's inequality, $\mathcal{R}\left( \mathbb{E} [\boldsymbol{x}_0 \mid \boldsymbol{x}_t] \right) \leq \mathbb{E} [ \mathcal{R} \left( \boldsymbol{x}_0 \right) \mid \boldsymbol{x}_t] = \mathbf{0}$. Hence, $0 = \| \mathbb{E} [ \mathcal{R} \left( \boldsymbol{x}_0 \right) \mid \boldsymbol{x}_t] \|^2 \leq \| \mathcal{R}\left( \mathbb{E} [\boldsymbol{x}_0 \mid \boldsymbol{x}_t] \right) \|^2$. When a data matching model is approximately optimized, directly applying constraints to the model's output minimizes the upper bound of the constraints on $\boldsymbol{x}_0$. The upper bound of the Jensen's gap is related the absolute centered moment $\sigma_p = \sqrt[p]{\mathbb{E}[| \boldsymbol{x}_0 - \mu | \boldsymbol{x}_t |^p]}$, where $\mu = \mathbb{E}[\boldsymbol{x}_0 | \boldsymbol{x}_t]$. If the constraints function $\mathcal{R}$ approach $\mathcal{R}(\mu)$ no slower than $| \boldsymbol{x}_0 - \mu|^\eta$ and grow as $\boldsymbol{x}_0 \rightarrow \pm \infty$ no faster than $\pm | \boldsymbol{x}_0 |^n$ for $n \geq \eta$, then the Jensen's gap $\mathbb{E} [ \mathcal{R} \left( \boldsymbol{x}_0 \right) \mid \boldsymbol{x}_t] - \mathcal{R}\left( \mathbb{E} [\boldsymbol{x}_0 \mid \boldsymbol{x}_t] \right)$ approaches to 0 no slower than $\sigma_n^\eta$ as $\sigma_n \rightarrow 0$~\citep{gao2017bounds}. Usually, in the reverse diffusion process, $\sigma_n \rightarrow 0$ as $t \rightarrow 0$ since the generated noisy samples converge to a clean one. In this case, we use the penalty loss of $\mathcal{J}_{\mathcal{R}}\left( \boldsymbol{\theta} \right) = \mathbb{E}_{t, \boldsymbol{x}_0, \boldsymbol{\epsilon}}\left[ w(t) \left\| \mathcal{R}\left( \boldsymbol{x}_{\boldsymbol{\theta}}\left(\boldsymbol{x}_t, t\right) \right)\right\|^2\right]$.

In the aforementioned three scenarios, at the implementation level, the model's output may be directly considered as the ground-truth sample $\boldsymbol{x}_0$ itself, rather than the conditional expectation $\mathbb{E} [\boldsymbol{x}_0 \mid \boldsymbol{x}_t]$. These scenarios are referred to as ``elementary cases''. In the following, we will discuss how to deal with nonlinear cases using the above elementary cases.

\paragraph{Reducible nonlinear cases.} For nonlinear constraints, mathematically speaking, we cannot directly apply the constraints to $\mathbb{E} [\boldsymbol{x}_0 \mid \boldsymbol{x}_t]$. However, we may recursively use multilinear functions to decompose the nonlinear constraints into elementary ones as: $\mathcal{R}\left(\boldsymbol{x}_0\right) = \mathbf{g}_1 \circ \cdots \mathbf{g}_m \left(\boldsymbol{x}_0\right) = \mathbf{0}$, where all $\mathbf{g}_i$ are elementary. Using elementary functions for decomposition, we may 1) reduce nonlinear constraints into elementary ones by treating terms causing nonlinearity as constants, and 2) reduce the complex constraints into several simpler ones. In this case, the penalty loss is set to $\mathcal{J}_{\mathcal{R}}\left( \boldsymbol{\theta} \right) = \mathbb{E}_{t, \boldsymbol{x}_0, \boldsymbol{\epsilon}}\left[ w(t) \left\| \textbf{g}_1 \circ \cdots \textbf{g}_m \left(\boldsymbol{x}_{\boldsymbol{\theta}} \left( \boldsymbol{x}_t, t \right) \right) \right\|^2\right]$. See Sec.~\ref{sec: particle dynamics experiments} for concrete examples of nonlinear formulas for the conservation of energy.

\paragraph{General nonlinear cases.} For general nonlinear cases, if it is not feasible to decompose the nonlinear constraints into their elementary components, it may be necessary to consider alternative approaches where we may reparameterize the constraints variable into elementary cases. Given the nonlinear constraints, we reparameterize it as $\mathcal{R}\left(\boldsymbol{x}_0\right) = \mathbf{g} \left( \mathbf{h} (\boldsymbol{x}_0) \right) = \mathbf{0}$, where $\mathbf{g}$ is elementary and $\mathbf{h}$ is non-necessarily elementary functions. Subsequently, another diffusion model, denoted as $\tilde{\boldsymbol{x}}_{\boldsymbol{\theta}}\left( \boldsymbol{x}_t, t\right)$, is trained to predict $\mathbf{h} (\boldsymbol{x}_0)$, utilizing the same hidden states as model $\boldsymbol{x}_{\boldsymbol{\theta}}\left( \boldsymbol{x}_t, t\right)$. This training process employs the methods applicable to elementary cases. The objective is for model $\tilde{\boldsymbol{x}}_{\boldsymbol{\theta}}$ to learn the underlying physical constraints and encode these constraints into its hidden states. Consequently, when model $\boldsymbol{x}_{\boldsymbol{\theta}}$ predicts, it inherently incorporates the learned physical constraints $\mathbf{g}$ parameterized by $\mathbf{h}(\boldsymbol{x}_0)$. To train model $\tilde{\boldsymbol{x}}_{\boldsymbol{\theta}}$, we set the penalty loss to be $\mathcal{J}_{\mathcal{R}}\left( \boldsymbol{\theta} \right) = \mathbb{E}_{t, \boldsymbol{x}_0, \boldsymbol{\epsilon}}\left[ w(t) \left\| \tilde{\boldsymbol{x}}_{\boldsymbol{\theta}} \left( \boldsymbol{x}_t, t\right) - \mathbf{h} (\boldsymbol{x}_0) \right\|^2\right]$. See Appendix~\ref{app: training general nonlinear cases} for implementation details.

Notably, in our proposed methods for integrating constraints, the explicit form of prior knowledge, such as the physics constants required for energy calculations, is not necessary. Instead, it suffices to determine whether the model's output parameters are elementary w.r.t. the constraints. This approach enhances the applicability of our methods to a broader spectrum of constraints.

\begin{remark}
    In conclusion, incorporating the physics constraints can be achieved in different ways depending on their complexity. For elementary constraints, one can directly omit Jensen's gap and impose the penalty loss on the model's output. In the case of nonlinear constraints, decomposition or reparameterization techniques are utilized to transform constraints into elementary ones.
\end{remark}

\section{Experiments}
In this section, we assess the enhancement achieved by incorporating physics constraints into the fundamental diffusion model across various synthetic physics datasets. We conduct a grid search to identify an equivalent set of suitable hyperparameters for the network to perform the data/noise matching, ensuring a fair comparison between the baseline method (diffusion objectives without penalty loss) and our proposed approach of incorporating physics constraints. Appendix~\ref{app: experiments details} provides a detailed account of the selection of backbones and the training strategies employed for each dataset. We also provide ablation studies in Sec.~\ref{sec: ablation} of \textbf{1}) data matching and noise matching techniques for different datasets, revealing that incorporating a distributional prior enhances model performance; \textbf{2}) the effect of omitting Jensen's gap, finding that nonlinear constraints can hinder performance if not properly handled. However, appropriately managing these priors using our proposed methods can lead to significant performance improvements. 

\subsection{PDE datasets}
PDE datasets, including advection~\citep{zang1991rotation}, Darcy flow~\citep{li2024physics}, Burgers~\citep{rudy2017data}, and shallow water~\citep{Klower2018}, are fundamental resources for studying and modeling various physical phenomena. These datasets enable the simulation of complex systems, demonstrating the capability of models for broader application across a wide range of PDE datasets. Through this, they facilitate advances in understanding diverse natural and engineered processes. 

\paragraph{Experiment settings.} The PDE constraints for the above datasets are given by:
\begin{subequations}
    \begin{alignat}{3}
        \text{Advection:}& \quad \partial_t u(t, x) + \beta \partial_x u(t, x) &= & \; 0, \\
        \text{Darcy flow:}& \quad \partial_t u(x, t)-\nabla(a(x) \nabla u(x, t)) &= & \; f(x), \\
        \text{Burger:}& \quad \partial_t u(x, t) + u(x, t) \partial_x u(x, t) &= & \; 0, \\
        \text{Shallow water:}& \quad \partial_t u = -\partial_x h, \quad \partial v_t = -\partial h_y, & \quad &\partial_t h = -c^2\left(\partial_x u + \partial_y v\right).
    \end{alignat}
\end{subequations}
A detailed introduction and visualization of the datasets can be found in Appendix~\ref{app: PDE experiments settings}. In this study, we investigate the predictive capabilities of generative models applied to advection and Darcy flow datasets. Our experiments focus on evaluating the models' accuracy in forecasting future states given initial conditions. Additionally, we examine the models' ability to generate physically feasible samples that align with the distribution of the training set on advection, Burger, and shallow water datasets. The evaluation metrics are designed to assess to what extent the solutions adhere to the physical feasibility constraints imposed by the corresponding PDEs.

\paragraph{Injecting physical feasibility priors.} We train the models that apply the data matching objective as suggested in Remark~\ref{remark: diffusion backbone}. We employ finite difference methods to approximate the differential equations. This approach renders the PDE constraints linear for the advection, Darcy flow, and shallow water datasets. However, PDE constraints become multilinear for the Burgers' equation dataset (see Appendix~\ref{app: converting to elementary cases by finite difference approximation} for the proof). Thus, the first set of datasets: advection, Darcy flow, and shallow water—correspond to the linear case, while the Burgers' equation dataset corresponds to the multilinear case. We can directly apply the physical feasibility constraints on the model's output.

\paragraph{Experimental results.} Results can be seen in Tab.~\ref{tab: PDE prediction datasets},~\ref{tab: PDE generation datasets}. In Tab.~\ref{tab: PDE prediction datasets}, we analyze the performance of diffusion models in predicting physical dynamics, given initial conditions, within a generative framework that produces a Dirac distribution. The accuracy of these models is evaluated using the RMSE metric. The observed loss magnitude is comparable to the prediction loss using with FNO, U-Net, and PINN models~\citep{takamoto2022pdebench} (refer to Appendix~\ref{app: comparison with prediction methods} for further details). Our results indicate that the incorporation of constraints consistently enhances the accuracy of the prediction. In Tab.~\ref{tab: PDE generation datasets}, the feasibility of the generated samples is evaluated by calculating the RMSE of the PDE constraints, which determine the impact of incorporating physical feasibility priors on diffusion models. We also provide visualization of the generated samples in Fig.~\ref{fig: darcy visulization},~\ref{fig: burger visulization},~\ref{fig: shallow water visulization}.

\begin{table}[ht]
    \centering
    \begin{minipage}[t]{0.45\textwidth}
        \centering
        \captionof{table}{Performance comparison of diffusion models with/without priors for predicting physical dynamics. The models' accuracy is measured using the RMSE metric, highlighting the impact of incorporating constraints on improving prediction accuracy.}
        \label{tab: PDE prediction datasets}
        \resizebox{\textwidth}{!}{
        \begin{tabular}{|c|c|c|}
        \hline
        Method      & Advection ($\times 10^{-2}$) & Darcy flow ($\times 10^{-2}$)  \\ \hline
        w/o prior   & 1.7263$\pm$0.0491            & 2.0648$\pm$0.0600                         \\ \hline
        w/ prior    & \textbf{1.6536$\pm$0.0677}   & \textbf{1.9678$\pm$0.0651}                \\ \hline    
        \end{tabular}
        }

    \end{minipage}
    \;
    \begin{minipage}[t]{0.45\textwidth}
        \centering
        \captionof{table}{Comparative analysis of diffusion models, assessing the feasibility of generated samples with/without physical feasibility priors. We evaluate the RMSE of PDE constraints, demonstrating the effect of physical feasibility priors on the adherence to PDEs.}
        \label{tab: PDE generation datasets}
        \resizebox{\textwidth}{!}{
        \begin{tabular}{|c|c|c|c|}
        \hline
        Method       & Advection                 & Burger                     & Shallow water                \\ \hline
        w/o prior    & 2.398$\pm$0.024           & 6.862$\pm$0.060            & 8.0153$\pm$0.0960	         \\ \hline
        w/ prior     & \textbf{2.305$\pm$0.001}  & \textbf{6.610$\pm$0.012}   & \textbf{7.7618$\pm$0.0645}   \\ \hline    
        \end{tabular}
        }
    \end{minipage}
\end{table}


\subsection{Particle dynamics datasets} \label{sec: particle dynamics experiments}
We train diffusion models to simulate the dynamics of chaotic three-body systems in 3D~\citep{zhou2023learning} and five-spring systems in 2D~\citep{kuramoto1975self, kipf2018neural} (see Appenidx.~\ref{app: particle dynamics visualization} for visualizations of datasets). In the case of the three-body, we unconditionally generate the positions and velocities of three particles, where gravitational interactions govern their dynamics. The stochastic nature of this dataset arises from the random distribution of the initial positions and velocities. In five-spring systems, each pair of particles has a probability 50\% of being connected by a spring. The movements of the particles are influenced by the spring forces, which cause stretching or compression interactions. We conditionally generate the positions and velocities of the five particles based on their spring connectivity.

\paragraph{Notations.} The features of the datasets are represented as $\mathbf{X}^{(0)} = [\mathbf{C}^{(0)} \; \mathbf{V}^{(0)}] \in \mathbb{R}^{L \times K \times 2D}$, where $\mathbf{C}^{(0)}, \mathbf{V}^{(0)} \in \mathbb{R}^{L \times K \times D}$. Here, the matrix $\mathbf{C}^{(0)}$ encapsulates the coordinate features, while $\mathbf{V}^{(0)}$ encapsulates the velocity features. The superscript denotes the time for the diffusion process and the subscripts denote the matrix index. $L$ represents the temporal length of the physical dynamics, $K$ denotes the number of particles, and $D$ corresponds to the spatial dimensionality. We use the subscript $l$ to indicate time, while the subscripts $i, j$, and $k$ are used to denote the indices of particles. The subscript $d$ represents the index corresponding to the spatial axis. We also use the subscript of $\boldsymbol{\theta}$ to denote the corresponding values of the model's prediction of $\mathbb{E} [\mathbf{X}^{(0)} \mid \mathbf{X}^{(t)}]$ with inputs $\mathbf{X}^{(t)}$ and $t$, and $\mathbf{X}_{\boldsymbol{\theta}} = [\mathbf{C}_{\boldsymbol{\theta}} \; \mathbf{V}_{\boldsymbol{\theta}}]$.

\paragraph{Injecting $\sen$-invariance and permutation invariance.} Two physical dynamic systems are governed by the interactions between each pair of particles, resulting in a distribution that is $\sen$ and permutation invariant. Our objective is to develop models that are SO(n)-equivariant, translation invariant, and permutation equivariant. We intend to apply a noise matching objective to achieve the desired invariant distribution.
However, to the best of our knowledge, no such architecture satisfying the above properties has been established within the context of diffusion generative models. Therefore, we opt to utilize a data augmentation method to ensure the model's equivariance and invariance properties~\citep{chen2019augmentation, botev2022regularising}, i.e. we apply these group operations in the training process, which enforces models to be equivariant and invariant.

\paragraph{Conservation of momentum.} For both datasets, the momentum conservation is given by: 
\begin{equation}
    \sum_{k=1}^{K} m_k \mathbf{V}_{l, k, d}^{(0)} = \operatorname{constant}_{d}, \quad \forall l = 1, \ldots, L, d = 1, \ldots, D.
\end{equation}
Here, $m_k$ represents the mass of $k$-th particle, and $\mathbf{V}_{l, k, d}^{(0)}$ denotes the velocity along axis $d$ of the $k$-th particle at time $l$. The total momentum in each axis remains constant, as indicated by the equality. This constraint is linear w.r.t. $\mathbf{V}_{l, k, d}^{(0)}$, corresponding to the linear case. Let $\mathbf{f}: \mathbb{R}^{L \times K \times D} \times \mathbb{R}^{K} \rightarrow \mathbb{R}^D$ calculate the mean of the total momentum over time and set the penalty loss as 
\begin{equation} \label{eq: momentum loss}
    \mathcal{J}_{\mathcal{R}}\left( \boldsymbol{\theta} \right) = \mathbb{E}_{t, \boldsymbol{x}_0, \boldsymbol{\epsilon}}\left[ w(t) \sum_{l=1}^L \sum_{d=1}^D \left\| \left( \sum_{k=1}^{K} m_k \left(\mathbf{V}_{\boldsymbol{\theta}}\right)_{l, k, d} \right) - \mathbf{f}_d(\mathbf{V}_{\boldsymbol{\theta}}, \{ m_k \}_{k=1}^K)\right\|^2\right].
\end{equation}

\paragraph{Conservation of energy for the three-body dataset.} The total of gravitational potential energy and kinetic energy remains constant over time. The energy conservation equation is given by:
\begin{equation}
    - \sum_{i \neq j}^K \frac{G m_i m_j}{\mathbf{R}_{l, ij}^{(0)}} + \sum_{k = 1}^K \sum_{d = 1}^D \frac{1}{2} m_k (\mathbf{V}_{l, k, d}^{(0)})^2 = \operatorname{constant}, \quad \forall l = 1, \ldots, L,
\end{equation}
where $G$ denotes the gravitational constant. $\mathbf{R}_{l, ij}^{(0)} = \| \mathbf{C}^{(0)}_{l, i} - \mathbf{C}^{(0)}_{l, j} \|$ denotes the Euclidean distance between the $i$-th and $j$-th particle at time $l$. This constraint is nonlinear with $\mathbf{X}^{(0)}$ but can be decomposed into elementary cases. Note that the constraint is multilinear w.r.t. $1 / \mathbf{R}_{l, ij}^{(0)}$ and $(\mathbf{V}_{l, k, d}^{(0)})^2$. Hence, from the results of the general nonlinear cases, we can train another model sharing the same hidden size as the model for noise matching to predict these variables related to the conservation of energy. Furthermore, since these variables are convex w.r.t. $\mathbf{X}^{(0)}$, by the results of the convex case, we can directly apply the penalty loss as:
\begin{equation} \label{eq: reducible nonlinear three body}
    \scalebox{0.97}{$\displaystyle \mathcal{J}_{\mathcal{R}}\left( \boldsymbol{\theta} \right) = \mathbb{E}_{t, \boldsymbol{x}_0, \boldsymbol{\epsilon}} \left[ w_1(t) \sum_{l; i \neq j} \left\| \frac{1}{\left(\mathbf{R}_{\boldsymbol{\theta}}\right)_{l, ij}} - \frac{1}{\mathbf{R}_{l, ij}^{(0)}} \right\|^2 +  w_2(t) \sum_{l, k, d} \left\| \left( \mathbf{V}_{\boldsymbol{\theta}} \right)_{l, k, d}^2 - \left( \mathbf{V}_{l, k, d}^{(0)} \right)^2 \right\|^2\right]$,}
\end{equation}
where $\left(\mathbf{R}_{\boldsymbol{\theta}}\right)_{l, ij} = \| \mathbf{C}_{\boldsymbol{\theta}}\left( \mathbf{X}^{(t)}, t \right)_{l, i} - \mathbf{C}_{\boldsymbol{\theta}}\left( \mathbf{X}^{(t)}, t \right)_{l, j} \|$, i.e. model's prediction of the Euclidean distance between two particles calculated from its prediction of coordinates. This penalty loss can also be derived from the reducible case. The detailed derivation can be found in Appendix~\ref{app: conservation of energy three-body}.

\paragraph{Conservation of energy for the five-spring dataset.} The combined elastic potential energy and the kinetic energy are conserved throughout time. The equation for the conservation of energy is represented by:
\begin{equation}
    \sum_{(i, j) \in \mathcal{E}} \frac{1}{2} \kappa \left( \mathbf{R}_{l, ij}^{(0)} \right)^2 + \sum_{k = 1}^K \sum_{d = 1}^D \frac{1}{2} m_k \left( \mathbf{V}_{l, k, d}^{(0)} \right)^2 = \operatorname{constant}, \quad \forall l = 1, \ldots, L, 
\end{equation}
where $\kappa$ denotes the elastic constant, $\mathbf{R}_{l, ij}^{(0)} = \| \mathbf{C}^{(0)}_{l, i} - \mathbf{C}^{(0)}_{l, j} \|$ denotes the distance between the $i$-th and $j$-th particle at time $l$, and $\mathcal{E}$ denotes the edge set of springs connecting particles. $m_k$ represents the mass of the $k$-th particle. Analogue to the conservation of energy for the three-body dataset, we can reduce the nonlinear constraints into elementary cases.

\begin{figure}[t]
    \centering
    \hfill
    \includegraphics[width=0.29\textwidth, trim={0.5cm 0.5cm 0.5cm 0.5cm}, clip]{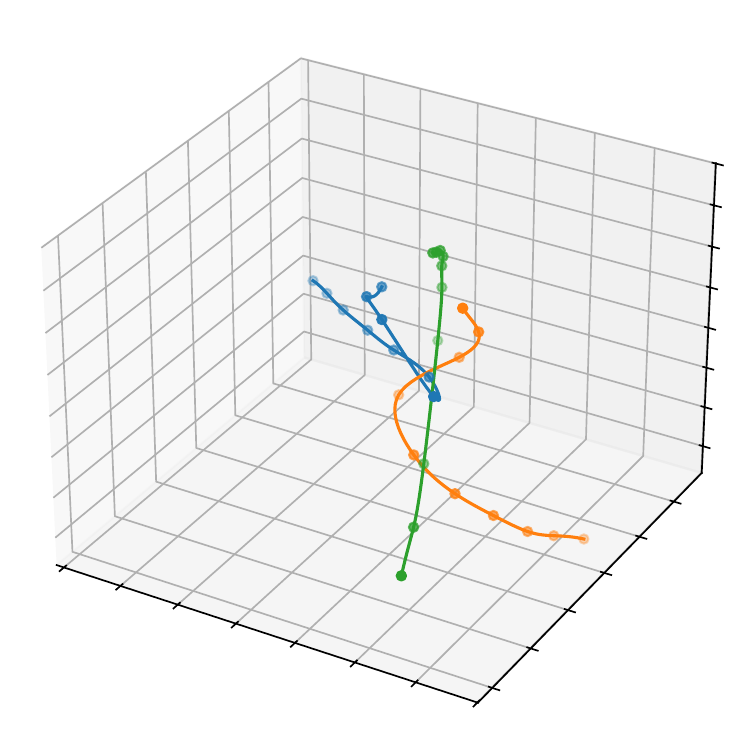} 
    \includegraphics[width=0.29\textwidth, trim={0.5cm 0.5cm 0.5cm 0.5cm}, clip]{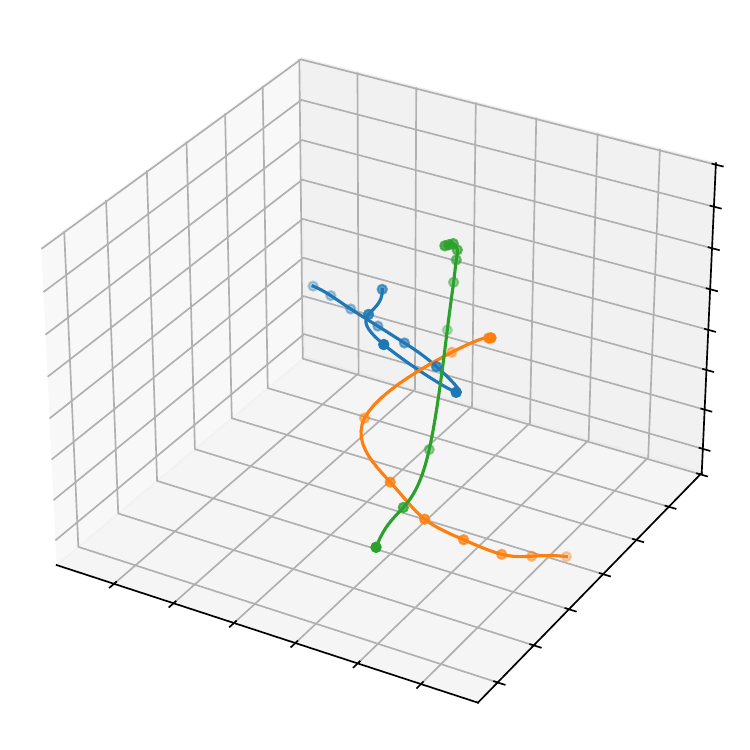} 
    \raisebox{0.25cm}{\includegraphics[width=0.39\textwidth, trim={0cm 0cm 0cm 1cm}, clip]{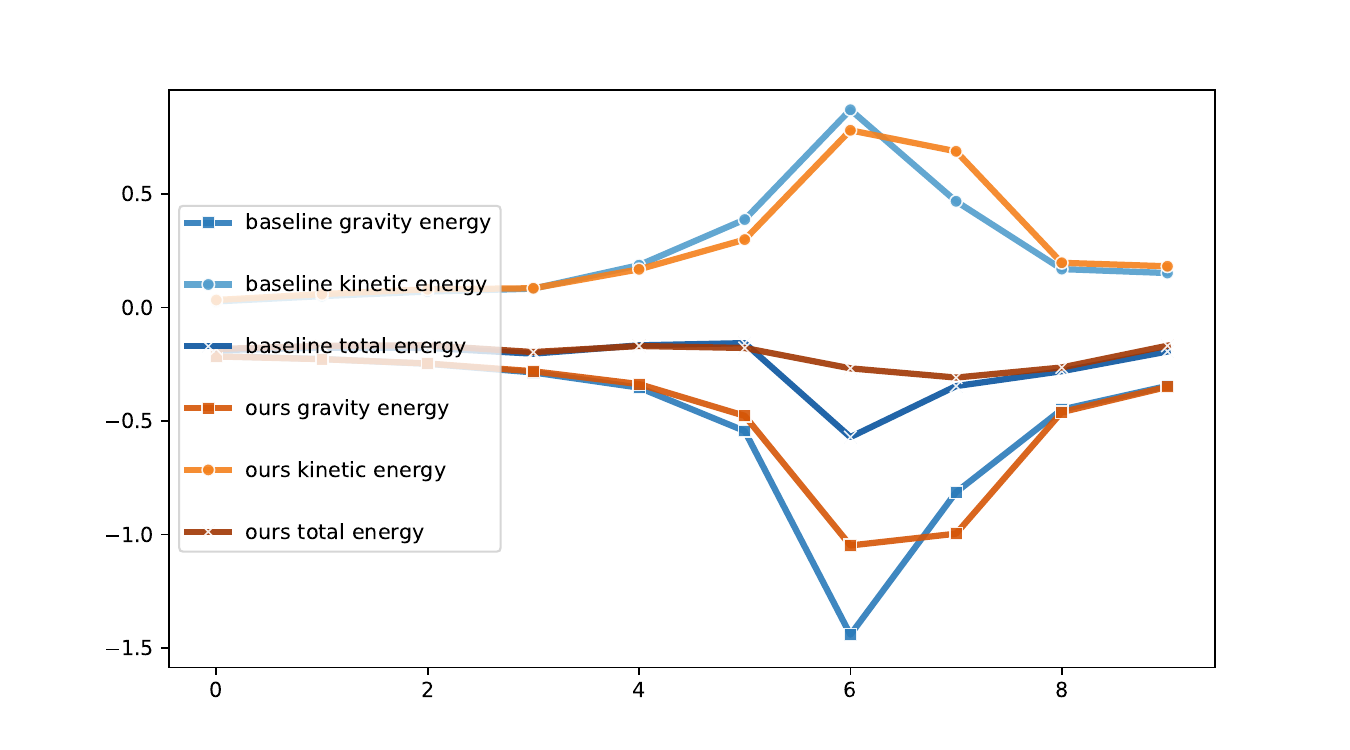}} \par

    \includegraphics[width=0.29\textwidth, trim={1cm 0cm 1cm 1cm}, clip]{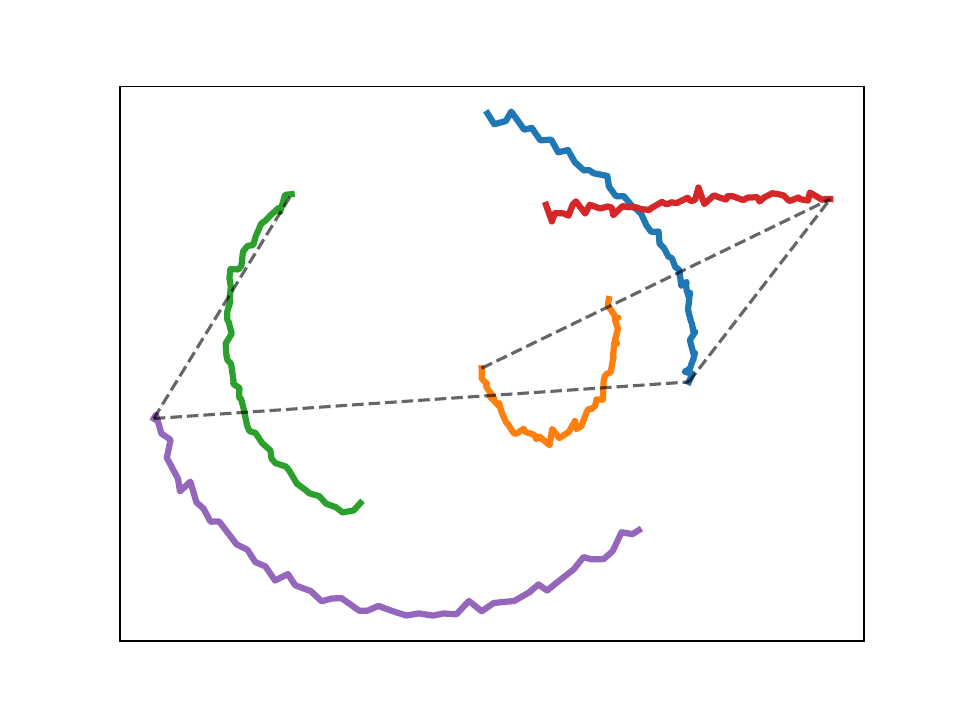}
    \includegraphics[width=0.29\textwidth, trim={1cm 0cm 1cm 1cm}, clip]{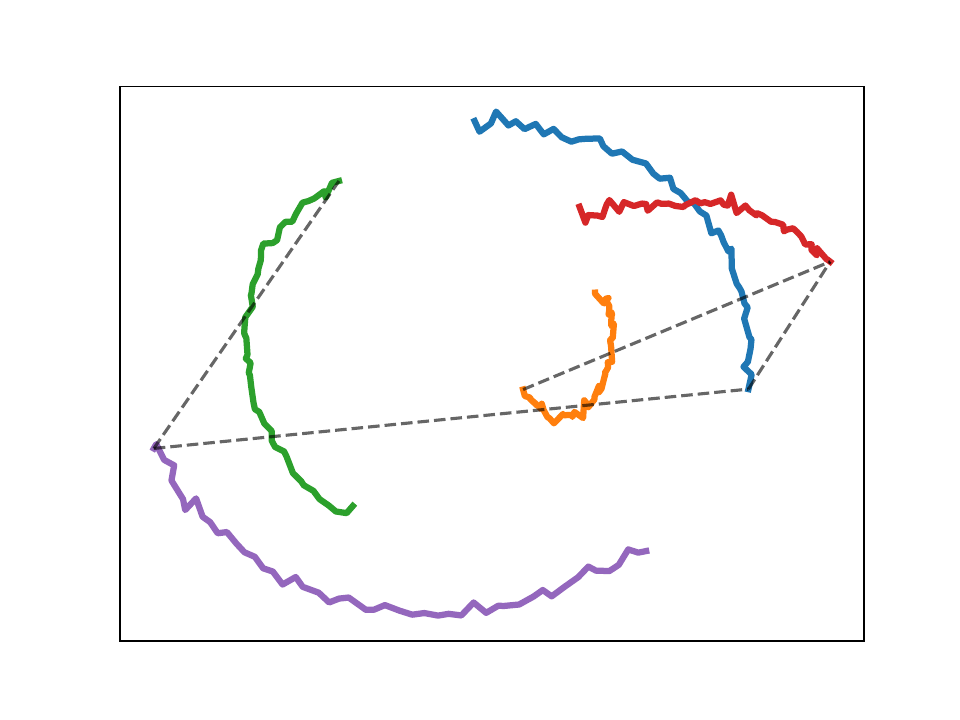}
    \includegraphics[width=0.39\textwidth, trim={2cm 0.5cm 2cm 0cm}, clip]{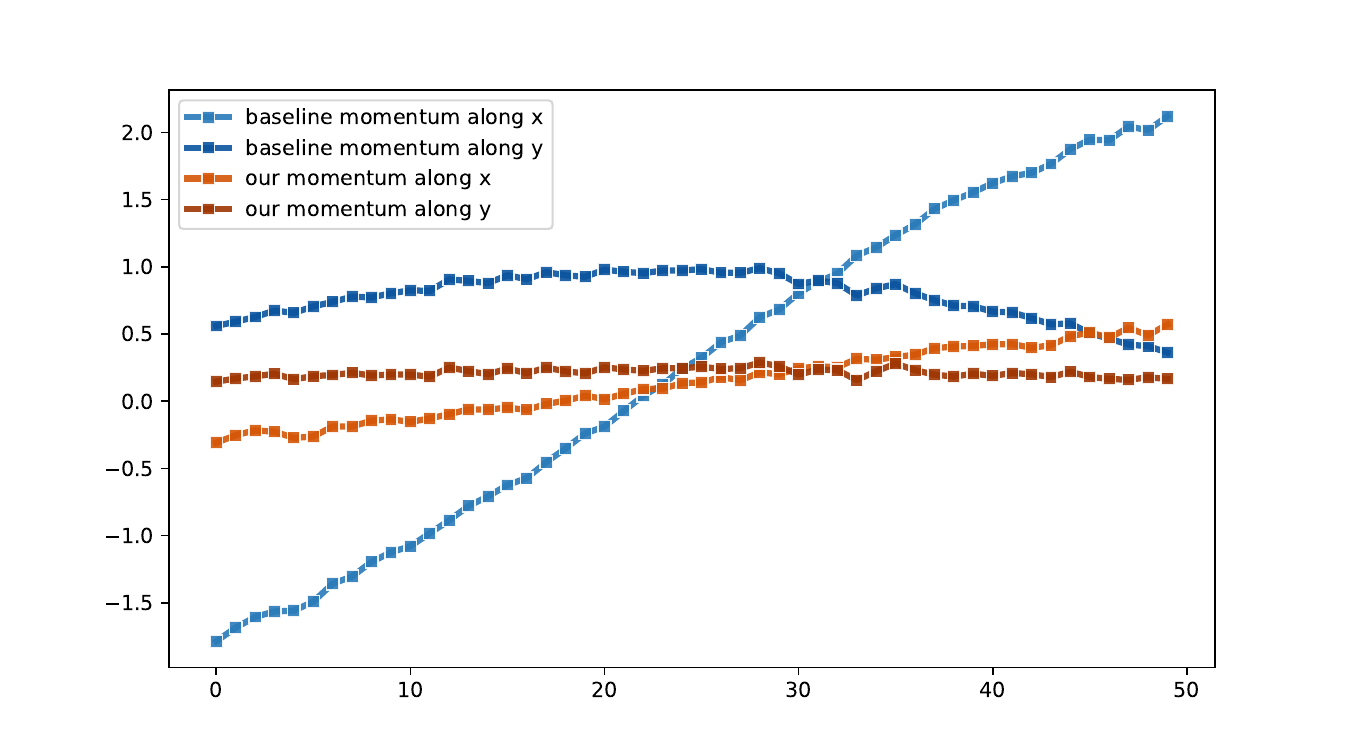}

    \caption{Visualization of generated samples from the three-body (first row) and five-spring (second row) datasets. The leftmost figures in each row represent methods without priors, the middle figures correspond to our proposed methods, and the rightmost figures illustrate the physical properties as they evolve over time. Both total momentum and total energy should remain conserved. The samples generated by our methods demonstrate stronger adherence to physical feasibility.}
    \label{fig: experiments physics datasets}
\end{figure}

\begin{table}[ht]
\centering
\caption{Sample quality of the three-body and five-spring datasets. For both datasets, we simulate the ground-truth future motion based on the current states of the generated samples and report the MSE error between the ground-truth motion and the generated ones. We also calculate the error of physical feasibility such as conservation of the energy and momentum, which should remain unchanged along the evolution of the systems. }
\label{tab: traj dataset performance}
\resizebox{\textwidth}{!}{
\begin{tabular}{|c|ccc|ccc|}
\hline
\multirow{2}{*}{Method}   & \multicolumn{3}{c|}{Three-body}            & \multicolumn{3}{c|}{Five-spring} \\ \cline{2-7} 
           & Traj error          & Vel error          & Energy error      & Dynamic error     & Momentum error  & Energy error   \\ \hline
w/o prior  & 2.4132$\pm$0.1208   & 2.5745$\pm$0.0790  & 4.3292$\pm$0.7235 & 5.1754$\pm$0.0286            & 5.3699$\pm$0.0462          &  1.0618$\pm$0.0243          \\ \hline
w/ prior   & \textbf{1.9880$\pm$0.3418}  & \textbf{0.8328$\pm$0.1042} & \textbf{0.5465$\pm$0.0705}  & \textbf{5.0731$\pm$0.0406}   & \textbf{0.3898$\pm$0.0118} &  \textbf{0.7418$\pm$0.0129} \\ \hline
\end{tabular}
}
\end{table}

\paragraph{Experimental results.} The results can be seen in Tab.~\ref{tab: traj dataset performance} and Fig.~\ref{fig: experiments physics datasets},~\ref{fig: 3body visualization},~\ref{fig: 5spring visualization}, and we refer readers to Appendix~\ref{app: details of experiment results} for a detailed account of the experimental settings, as well as a more extensive comparison of the effects of hyperparameters and various methods for injecting constraints across the discussed cases. Our analysis indicates that for the three-body dataset, the incorporation of the conservation of energy prior, via the reducible case method, substantially enhances the model's performance across all evaluated metrics. Similarly, applying the conservation of momentum prior to the five-spring dataset significantly reduces the momentum error in the generated samples. This also contributes to a reduction in the errors associated with dynamics and energies. Fig.~\ref{fig: experiments physics datasets} demonstrates that the total momentum and energy of samples generated with the incorporation of priors exhibit greater stability compared to those without priors. We also provide sampling results using the DPM-solvers~\citep{lu2022dpm} in Appendix~\ref{app: dpm solver fewer steps}, which significantly lower computational expenses.

\subsection{Ablation studies} \label{sec: ablation}

\begin{wrapfigure}{r}{0.55\textwidth}
\vspace{-20pt}
    \begin{minipage}[t]{0.55\textwidth}
        \centering
        \captionof{table}{Results of an ablation study comparing the effects of data matching and noise matching techniques. The findings show that incorporating a distributional prior improves model performance. We use the mean of trajectory error and velocity error as the metric for the three-body dataset.}
        \label{tab: ablation distribution}
        \resizebox{\textwidth}{!}{
            \begin{tabular}{|c|c|c|c|c|}
            \hline
            Method                   & Three-body         & Five-spring      & Darcy flow      & Shallow water  \\ \hline
            distributional prior     & \textbf{2.6084}    & \textbf{5.1929}  & \textbf{2.016}  & \textbf{8.150} \\ \hline
            alternative              & 4.7241             & 5.3120           & 7.268           & 27.40          \\ \hline
            \end{tabular}
        }
        \vspace{15pt}
        \centering
        \captionof{table}{Results show the impact of enforcing energy conservation constraints on the three-body dataset. Direct application of nonlinear constraints (prior by PINN) can degrade performance, while proper handling (prior by ours) improves accuracy.}
        \label{tab: ablation nonlinear constraints}
        \resizebox{\textwidth}{!}{
            \begin{tabular}{|c|c|c|c|}
            \hline
            Method                  & Traj error      & Vel error        & Energy error    \\ \hline
            w/o prior               & 2.5613          & 2.6555           & 3.8941          \\ \hline
            prior by PINN           & 2.6048          & 2.6437           & 4.2219          \\ \hline
            prior by ours           & \textbf{1.6072} & \textbf{0.7307}  & \textbf{0.5062} \\ \hline    
            \end{tabular}
        }
    \end{minipage}
\vspace{-15pt}
\end{wrapfigure}

\paragraph{Distributional priors through matching objective.} We employ data matching and noise matching techniques for the PDE and particle dynamics datasets, respectively. An ablation study is conducted to investigate the effects of applying the alternative matching objective on the particle dynamics and PDE datasets, both without physical feasibility priors. The results, presented in Tab.~\ref{tab: ablation distribution}, demonstrate that incorporating a distributional prior can significantly improve the model's performance. 

\paragraph{Omitting Jensen's gap.} In the three-body dataset, we employ a multilinear function to simplify constraints into convex scenarios. We now conduct an ablation study in which the output of a diffusion model is considered the ground-truth, and the constraint of energy conservation is imposed similarly to the injection of constraints by penalty loss in the prediction tasks of PINNs. This configuration is referred to as ``prior by PINN''. We define a penalty loss based on the variation of energy over time, analogous to the penalty loss used to enforce momentum conservation constraints. However, unlike the conservation of momentum, which is governed by a linear constraint and can thus be applied directly, the conservation of energy involves a nonlinear constraint. This introduces Jensen's gap, preventing the direct application of the constraint. The results, presented in Tab.~\ref{tab: ablation nonlinear constraints}, indicate that directly applying nonlinear constraints can degrade the model's performance. However, appropriately handling these constraints can significantly improve the sample quality.


\section{Conclusion} \label{sec:conclusion}

In conclusion, this paper presents a novel method for generating physically feasible dynamics using diffusion models by integrating distributional and physical feasibility priors. We inject distributional priors through equivariant models and noising matching, while incorporating physical feasibility priors through constraint decomposition. Empirical results demonstrate the robustness of our method across various physical phenomena, highlighting its promise for data-driven AI4Physics research. This work emphasizes the importance of embedding domain knowledge into learning systems, bridging physics and machine learning through innovative use of physical priors.

\subsubsection*{Acknowledgments}
This work was supported by the National Science and Technology Major Project of China under Grant 2022ZD0116408. This work is also supported by a grant from ChemLex Technology Co., Ltd..

\bibliography{ref}
\bibliographystyle{iclr2025_conference}

\newpage
\appendix
\section{Related work} \label{app: related work}
\subsection{Generative methods for physics}
Numerous studies have been conducted on the development of surrogate models to supplant numerical solvers for physics dynamics with GANs~\citep{farimani2017deep, de2017learning, wu2020enforcing, yang2020physics, bode2021using} and VAEs~\citep{cang2018improving}. Nevertheless, to generate realistic physics dynamics, one must accurately learn the data distribution or inject physics prior~\citep{cuomo2022scientific}. Recent advancements in diffusion models~\citep{song2020score} have sparked increased interest in their direct application to the generation and prediction of physical dynamics, circumventing the need for specific physics-based formulations~\citep{shu2023physics, lienen2023zero, yang2023denoising, apte2023diffusion, jadhav2023stressd, bastek2024physics}. However, these approaches, which do not incorporate prior physical knowledge, may exhibit limited performance, potentially leading to suboptimal results.

\subsection{Score-based diffusion models}
Score-based diffusion models are a class of generative models that create high-quality data samples by progressively refining noise into detailed data through a series of steps~\citep{song2020score}. These models estimate the score function, the gradient of the log-probability density of the data, using a neural network~\citep{song2019generative}. By applying this score function iteratively to noisy samples, the model reverses the diffusion process, effectively denoising the data incrementally, and generates samples following the same distribution as the training set. Although numerous studies on diffusion models have focused on generating SE(3)-invariant distributions~\citep{xu2022geodiff, yim2023se, zhou2024diffusion}, there remains a lack of comprehensive research on the generation of general invariant distributions under group operations. Meanwhile, in contrast to GANs, the outputs produced by diffusion models represent the distributional properties of the data samples. This fundamental difference means that physical feasibility priors cannot be added directly to the model output due to the presence of a Jensen gap~\citep{chung2022diffusion}, i.e. $\mathcal{R}\left( \mathbb{E} [\boldsymbol{x}_0 \mid \boldsymbol{x}_t] \right) \neq \mathbb{E} [ \mathcal{R} \left( \boldsymbol{x}_0 \right) \mid \boldsymbol{x}_t ]$. A potential solution to this problem involves iterating and drawing samples during the training process and subsequently incorporating the loss of physics feasibility on the generated samples~\citep{bastek2024physics}. However, this approach necessitates numerous iterations, often in the hundreds, rendering the training process inefficient.

\subsection{Physics-informed neural networks}
Physics-Informed Neural Networks (PINNs) are a class of deep learning models that incorporate physical laws and constraints into their training process~\citep{lawal2022physics}. Unlike traditional training processes, which learn patterns solely from data, PINNs leverage priors including PDEs that describe physical phenomena to guide the learning process. By incorporating these physical feasibility equations as part of the penalty loss, alongside the data prediction loss, PINNs enhance their ability to model complex systems. This integration allows PINNs to be applied across various fields, including fluid dynamics~\citep{cai2021physics}, electromagnetism~\citep{khan2022physics}, and climate modeling~\citep{hwang2021climate}. Their ability to integrate domain knowledge with data-driven learning makes them a powerful tool for tackling complex scientific and engineering challenges.

\section{Extension on equivalence class manifold} \label{app: equivalent class manifold}
\subsection{Formal definition}
Let $X$ be a set and $\sim$ be an equivalence relation on $X$. The equivalence class manifold $\mathcal{M}$ is defined as the set of equivalence classes under the relation $\sim$. Formally,
\begin{equation}
    \mathcal{M} = \{ x \mid x \in X, y \in [x] \Rightarrow y = x \},
\end{equation}
where $\mathcal{M}$ is a Riemannian manifold and $[x]$ denotes the equivalence class of $x$, defined as:
\begin{equation}
    [x] = \{ y \in X \mid y \sim x \}.
\end{equation}

\subsection{Equivalence class manifold of SE(3)-invariant distribution}
The following theorem provides a method to use a set of coordinates to represent all other coordinates having the same pairwise distances.

\begin{theorem}[Equivalence class manifold of SE(3)-invariant distribution~\citep{dokmanic2015euclidean, hoffmann2019generating, zhou2024diffusion}]
    Given any pairwise distance matrix $D \in \mathbb{R}_{+}^{n \times n}$, there exists a corresponding Gram matrix $M \in \mathbb{R}^{n \times n}$ defined by
    \begin{equation}
    M_{ij} = \frac{1}{2} (D_{1j} + D_{i1} - D_{ij})
    \end{equation}
    and conversely
    \begin{equation}
    D_{ij} = M_{ii} + M_{jj} - 2 M_{ij}.
    \end{equation}
    By performing the singular value decomposition (SVD) on the Gram matrix $M \in \mathbb{R}^{n \times n}$ (associated with $D$), we obtain exactly three positive eigenvalues $\lambda_1, \lambda_2, \lambda_3$ and their respective eigenvectors $\mathbf{v}_1, \mathbf{v}_2, \mathbf{v}_3$, where $\lambda_1 \geq \lambda_2 \geq \lambda_3 > 0$. The set of coordinates
    \begin{equation}
        \mathcal{C} = [\mathbf{v}_1, \mathbf{v}_2, \mathbf{v}_3] \left[\begin{array}{ccc}
        \sqrt{\lambda_1} & 0 & 0 \\
        0 & \sqrt{\lambda_2} & 0 \\
        0 & 0 & \sqrt{\lambda_3}
        \end{array}\right]
    \end{equation}
    satisfies has the same pairwise distance matrix as $D$.
\end{theorem}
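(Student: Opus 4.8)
The plan is to recognize this as the classical multidimensional scaling (MDS) reconstruction and prove it in three moves: (i) show the matrix $M$ built from $D$ is symmetric positive semidefinite of rank exactly $3$; (ii) read off coordinates from its spectral decomposition; (iii) verify those coordinates reproduce $D$ via the stated ``converse'' identity. Throughout I would fix the convention — forced by the two displayed formulas — that $D_{ij}$ records \emph{squared} Euclidean distances, and adopt as standing hypotheses that $D$ is realized by points $x_1,\dots,x_n$ that affinely span $\mathbb{R}^3$ (this is exactly what makes ``exactly three positive eigenvalues'' true).

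First I would establish the structure of $M$. Writing $y_i = x_i - x_1$, a direct expansion of $\tfrac12(D_{1j} + D_{i1} - D_{ij})$ using $D_{ij} = \|x_i - x_j\|^2$ gives $M_{ij} = \langle y_i, y_j\rangle$; equivalently $M = Y Y^\top$ with $Y \in \mathbb{R}^{n\times 3}$ having rows $y_i^\top$. Hence $M$ is symmetric positive semidefinite, and $\operatorname{rank} M = \dim \operatorname{span}\{y_i\} = 3$ precisely because the $x_i$ affinely span $\mathbb{R}^3$ (the vanishing of row/column $1$ of $M$ is harmless). Since $M$ is symmetric PSD, its SVD coincides with its eigendecomposition, so it has exactly three positive singular values $\lambda_1 \ge \lambda_2 \ge \lambda_3 > 0$ with orthonormal eigenvectors $v_1, v_2, v_3$ and $M = \sum_{k=1}^3 \lambda_k v_k v_k^\top$.

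Next, setting $\mathcal{C} = [v_1\ v_2\ v_3]\,\operatorname{diag}(\sqrt{\lambda_1},\sqrt{\lambda_2},\sqrt{\lambda_3}) \in \mathbb{R}^{n\times 3}$, a one-line computation gives $\mathcal{C}\mathcal{C}^\top = \sum_{k=1}^3 \lambda_k v_k v_k^\top = M$, so the rows $c_i$ of $\mathcal{C}$ satisfy $\langle c_i, c_j\rangle = M_{ij}$. Applying the converse identity $D_{ij} = M_{ii} + M_{jj} - 2M_{ij}$ — immediate from $M_{ij} = \langle y_i,y_j\rangle$ together with $\|y_i - y_j\|^2 = \|x_i - x_j\|^2$ — yields $\|c_i - c_j\|^2 = \langle c_i,c_i\rangle + \langle c_j,c_j\rangle - 2\langle c_i,c_j\rangle = M_{ii} + M_{jj} - 2M_{ij} = D_{ij}$, so $\mathcal{C}$ has the same pairwise distance matrix as $D$.

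The only genuinely delicate point is the eigenvalue count: ``exactly three positive eigenvalues'' is not automatic for an arbitrary nonnegative $D$, but needs both that $D$ be a bona fide Euclidean squared-distance matrix (so $M \succeq 0$) and that the generating configuration be non-degenerate in $\mathbb{R}^3$ (so $\operatorname{rank} M = 3$); I would state these explicitly rather than leave them implicit. Everything else — the expansion identities for $M$ and $D$ and the factorization $\mathcal{C}\mathcal{C}^\top = M$ — is routine linear algebra. Finally, to connect with the ``equivalence class manifold'' framing, I would remark that the representative $\mathcal{C}$ is unique up to right multiplication by an element of $O(3)$, since two $n\times 3$ matrices of full column rank with equal Gram matrices differ by such a transformation, which is an isometry of $\mathbb{R}^3$ fixing the origin.
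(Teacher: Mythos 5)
The paper does not actually prove this theorem; it is stated as a cited classical result (Euclidean distance geometry / classical multidimensional scaling) drawn from \citet{dokmanic2015euclidean,hoffmann2019generating,zhou2024diffusion}. Your proof is therefore not competing with anything in the paper, and on its own terms it is correct: the computation $M_{ij} = \langle x_i - x_1,\, x_j - x_1\rangle$, the factorization $M = YY^\top \succeq 0$ of rank $3$, the observation that for a symmetric PSD matrix the SVD equals the eigendecomposition, the identity $\mathcal{C}\mathcal{C}^\top = M$, and the final polarization step recovering $D$ are exactly the standard MDS argument. You are also right to flag the two hypotheses that the theorem as literally stated suppresses: $D$ must contain \emph{squared} distances of an actual point configuration in $\mathbb{R}^3$ (otherwise $M$ need not be PSD and the ``exactly three positive eigenvalues'' claim fails), and that configuration must affinely span $\mathbb{R}^3$ (otherwise $\operatorname{rank} M < 3$). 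Making those explicit is a genuine improvement over the bare statement, and the closing remark that the representative $\mathcal{C}$ is unique only up to right multiplication by $O(3)$ correctly ties the result back to the ``equivalence class manifold'' framing used in the surrounding appendix.
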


Define the above mapping from the pairwise distances to coordinates as $f$. Then, the equivalence class manifold of SE(3)-invariant distribution can be given by 
\begin{equation}
    \mathcal{M} = \{ f(D) \mid D \text{ is a pairwise distance matrix} \}.
\end{equation}

$\mathcal{M}$ satisfies the property of being a Riemannian manifold~\citep{zhou2024diffusion}.

\section{PDE datasets}
A summary of the important properties of datasets can be found in Tab.~\ref{tab: dataset summary}.

\begin{table}[ht]
\centering
\caption{Comparative summary of datasets. The table highlights key aspects such as the type of generation (conditional or unconditional), the matching objective, the distributional priors, and the nature of the constraint cases.}
\label{tab: dataset summary}
\resizebox{\textwidth}{!}{
\begin{tabular}{|cc|c|c|c|c|}
\hline
\multicolumn{2}{|c|}{Datasets}                                          & Cond/Uncond generation    & Matching objective                       & Distributional priors          & Constraint cases        \\ \hline
\multicolumn{1}{|c|}{\multirow{4}{*}{PDE}}              & advection     & both          & \multirow{4}{*}{data (\Eqref{eq: data predictor})}           & \multirow{4}{*}{PDE constraints}                   & linear                  \\ \cline{2-3} \cline{6-6} 
\multicolumn{1}{|c|}{}                                  & Darcy flow    & conditional   &                                 &                    & linear                  \\ \cline{2-3} \cline{6-6} 
\multicolumn{1}{|c|}{}                                  & Burger        & unconditional &                                 &                    & multilinear             \\ \cline{2-3} \cline{6-6} 
\multicolumn{1}{|c|}{}                                  & shallow water & conditional   &                                 &                    & linear                  \\ \hline
\multicolumn{1}{|c|}{\multirow{2}{*}{particle dynamics}} & three-body    & unconditional & \multirow{2}{*}{noise (\Eqref{eq: noise predictor})}          & SE(3) + permutation invariant & all cases \\ \cline{2-3} \cline{6-6} 
\multicolumn{1}{|c|}{}                                  & five-spring   & conditional   &                                 & SE(2) + permutation invariant & all cases \\ \hline
\end{tabular}
}
\end{table}

\subsection{Dataset settings} \label{app: PDE experiments settings}
\paragraph{Advection.} The advection equation is a fundamental model in fluid dynamics, representing the transport of a scalar quantity by a velocity field. The dataset presented herein consists of numerical solutions to the linear advection equation, characterized by 
\begin{equation}
    \partial_t u(t, x) + \beta \partial_x u(t, x) = 0, \quad x \in (0, 1), t \in (0, 2]
\end{equation}
where $u$ denotes the scalar field and $\beta=0.1$ is a constant advection speed. The visualization of training samples can be seen in Fig.~\ref{fig: advection dataset}. Based on the initial conditions provided for the advection equation, our model utilizes a generative framework to predict the subsequent dynamics, with the specific aim of forecasting the next 40 frames. We then compare these predictions with the ground-truth to assess performance. Additionally, we evaluate the model's capability to generate samples unconditionally, without initial conditions, and measure performance using the physical feasibility implied by the PDE constraints.

\begin{figure}[ht]
    \centering
    \foreach \i in {0, ..., 4}{
        \includegraphics[width=.18\linewidth, trim={2cm 0cm 2cm 0cm}, clip]{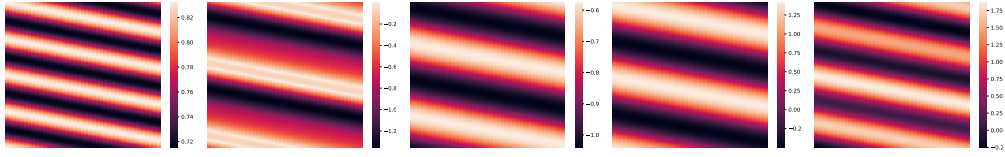} 
    }
\caption{Samples from the advection dataset with varying initial conditions. The horizontal axis represents the spatial coordinate $x$, while the vertical axis represents the parameter $t$.}
\label{fig: advection dataset}
\end{figure}

\paragraph{Darcy flow.} Darcy's law describes the flow of fluid through a porous medium, which is a fundamental principle in hydrogeology, petroleum engineering, and other fields involving subsurface flow. The mathematical formulation of the Darcy flow PDE is given by:
\begin{equation}
    \partial_t u(x, t)-\nabla(a(x) \nabla u(x, t))=f(x), \quad x \in(0,1)^2,
\end{equation}
where $u(x, t)$ represents the fluid pressure at location $x$ and time $t$, $a(x)$ is the permeability or hydraulic conductivity, and $f(x)$ denotes sources or sinks within the medium. Given the initial state at $t=0$, we use the generative scheme to forecast the state at $t=1$. Fig.~\ref{fig: Darcy flow dataset} provides a visualization of training samples. The accuracy of these predictions is evaluated by comparing them with the ground-truth values.

\begin{figure}[ht]
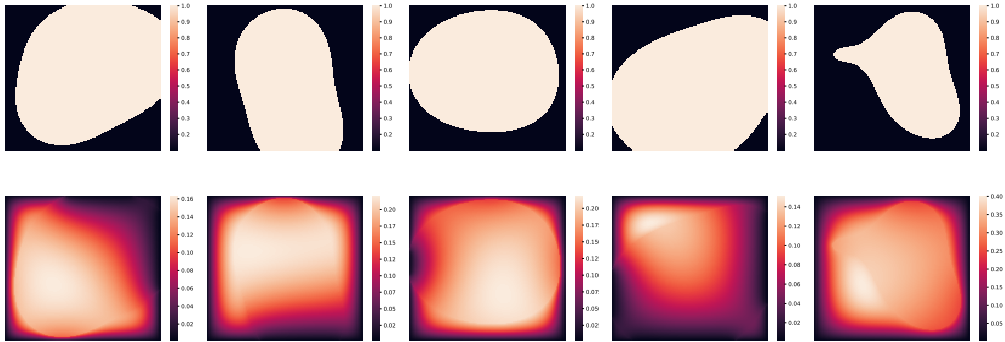

    \centering
    \foreach \i in {0, ..., 4}{
        \includegraphics[width=.18\linewidth, trim={2cm 0cm 2cm 0cm}, clip]{figs/darcy/condition_\i.pdf} 
    }
    \par
    \foreach \i in {0, ..., 4}{
        \includegraphics[width=.18\linewidth, trim={2cm 0cm 2cm 0cm}, clip]{figs/darcy/func_value_\i.pdf}
    }
\caption{The figure illustrates representative training samples from the Darcy flow dataset. The first row displays the values for the function $a(x)$, while the second row shows the values of $u(x, t)$ at time $t=1$.}
\label{fig: Darcy flow dataset}
\end{figure}

\paragraph{Burger.} The Burgers' equation is a fundamental PDE that appears in various fields such as fluid mechanics, nonlinear acoustics, and traffic flow. It is a simplified model that captures essential features of convection and diffusion processes. The equation is given by:
\begin{equation}
    \partial_t u(x, t) + u(x, t) \partial_x u(x, t) = 0,
\end{equation}
where $u(x, t)$ represents the velocity field, $x$ and $t$ denote spatial and temporal coordinates, respectively. We unconditionally generate samples following the distribution of the training set and evaluate feasibility within the realm of physics as dictated by the constraints of PDE.

\begin{figure}[ht]
    \centering
    \foreach \i in {0, ..., 4}{
        \includegraphics[width=.18\linewidth, trim={2cm 0cm 2cm 0cm}, clip]{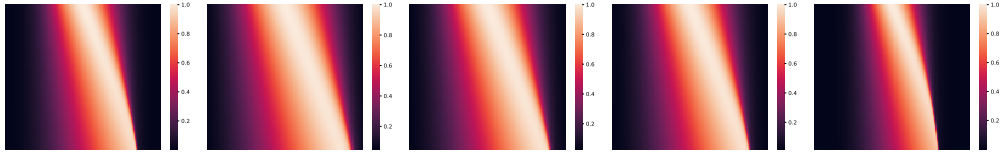} 
    }
\caption{The figure depicts representative training samples from the Burger dataset. The samples within this dataset exhibit minimal variability. The horizontal axis denotes the spatial coordinate $x$, whereas the vertical axis represents the parameter $t$.}
\label{fig: Burger dataset}
\end{figure}

\paragraph{Shallow water.} The linearized 2D shallow water equations describe the dynamics of fluid flows under the assumption that the horizontal scale is significantly larger than the vertical depth. These equations are instrumental in fields such as oceanography, meteorology, and hydrology for modeling wave and current phenomena in shallow water regions. Let $u$ and $v$ denote the components of the velocity field in the $x$- and $y$-directions, respectively. The variable $h$ represents the perturbation in the free surface height of the fluid from a mean reference level. The parameter $c$ denotes the phase speed of shallow water waves, which is a function of the gravitational acceleration and the mean water depth. The equations are expressed as follows:
\begin{equation}
    \frac{\partial u}{\partial t} = -\frac{\partial h}{\partial x}, \quad \frac{\partial v}{\partial t} = -\frac{\partial h}{\partial y}, \quad \frac{\partial h}{\partial t} = -c^2\left(\frac{\partial u}{\partial x} + \frac{\partial v}{\partial y}\right).
\end{equation}

We conditionally generate the dynamics of shallow water expressed by $h, u, v$ conditioned on the given $c$. We provide a visualization of one sample in Fig.~\ref{fig: shallow water dataset}.

\begin{figure}[ht]
    \centering
    \foreach \i in {1, 6, 11, 16, 21, 26, 31, 36, 41, 46}{
        \includegraphics[width=.18\linewidth, trim={2cm 0cm 2cm 0cm}, clip]{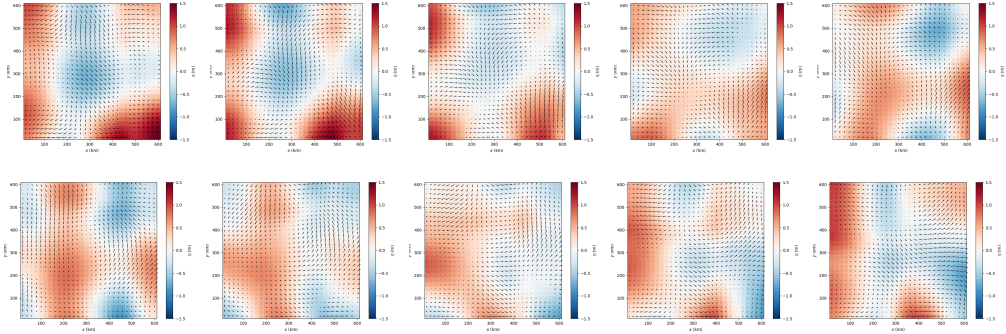} 
    }
\caption{In the figure, the sequence from left to right represents a sample of the dynamics of shallow water. Each sample consists of 50 frames, from which 10 frames have been uniformly selected for visualization.}
\label{fig: shallow water dataset}
\end{figure}

\subsection{Converting to elementary cases by finite difference approximation} \label{app: converting to elementary cases by finite difference approximation}
\paragraph{Advection and shallow water.} The original constraint of the advection equation is given by
\begin{equation}
    \partial_t u(t, x) + \beta \partial_x u(t, x) = 0.
\end{equation}
If we use the finite difference method to approximate the derivative, assume a grid with time steps $t_n$ and spatial points $x_i$. Let $u_n^i$ denote the approximation to $u(t_n, x_i)$. For the time derivative, use a forward difference approximation: $\partial_t u \approx \frac{u_{n+1}^i - u_n^i}{\Delta t}$. For the spatial derivative, use a central difference approximation: $\partial_x u \approx \frac{u^{i+1}_{n} - u^i_{n}}{\Delta x}$. Substituting these approximations into the PDE, we have
\begin{equation}
    \frac{u_{n+1}^i - u_n^i}{\Delta t} + \beta \frac{u^{i+1}_{n} - u^i_{n}}{\Delta x} = 0.
\end{equation}
Rearrange to obtain an equation that involves only \( u \) values and constants:
\begin{equation}
     u_{n+1}^i - (1 + \beta \frac{\Delta t}{\Delta x}) u_n^i + \beta \frac{\Delta t}{\Delta x} u^{i+1}_{n} = 0.
\end{equation}

In this form, the constraint is a linear equation involving $u_{n+1}^i, u_n^i, u^{i+1}_{n}$. The linearization of the shallow water constraints can be performed in an analogous manner.

\paragraph{Darcy flow.}
The given Darcy flow equation is:
\begin{equation}
    \partial_t u(x, t) - \nabla (a(x) \nabla u(x, t)) = f(x),
\end{equation}
Using the finite difference method, we discretize the domain into a grid with grid spacing $\Delta x = \Delta y$ and $\Delta t$. Let $u_{i,j}$ represent $u(x_i, y_j, t_n)$ and $u^n$ represent $u(x_i, y_j, t_n)$. The finite difference approximations for the gradients and divergence are: 
\begin{subequations}
    \begin{align}
        \partial_t u &\approx \frac{u^{n+1} - u^{n-1}}{2\Delta t}, \\
        \partial_x u &\approx \frac{u_{i+1,j} - u_{i-1,j}}{2\Delta x}, \\
        \partial_y u &\approx \frac{u_{i,j+1} - u_{i,j-1}}{2\Delta y}.
    \end{align}
\end{subequations}

The Hessian matrix of $\nabla^2 u(x, t)$ is also linear w.r.t. $u(x, t)$ when approximated by the finite difference method. Hence, the left-hand side of the Darcy flow equation is the sum of terms linear w.r.t. $u(x, t)$ and thus the constraint is linear. 

\paragraph{Burger.} The partial differential equation 
\begin{equation}
    \partial_t u(x, t) + u(x, t) \partial_x u(x, t) = 0
\end{equation}
can be approximated using finite differences as follows: time derivative (forward difference): $\partial_t u(x, t) \approx \frac{u(x, t + \Delta t) - u(x, t)}{\Delta t}$, spatial derivative (central difference): $\partial_x u(x, t) \approx \frac{u(x + \Delta x, t) - u(x - \Delta x, t)}{2\Delta x}$. Substituting these into the PDE gives:
\begin{equation}
\frac{u(x, t + \Delta t) - u(x, t)}{\Delta t} + u(x, t) \cdot \frac{u(x + \Delta x, t) - u(x - \Delta x, t)}{2\Delta x} = 0
\end{equation}
Hence, the constraint is multilinear w.r.t. $u(x, t)$ if we consider values of $u$ at other points as constants.

\section{Particle dynamics datasets}
\subsection{Dataset introduction} \label{app: particle dynamics visualization}
The dataset features are structured as $\mathbf{X}^{(0)} = [\mathbf{C}^{(0)} \; \mathbf{V}^{(0)}]$, where $\mathbf{C}^{(0)}$ and $\mathbf{V}^{(0)}$ are both elements of $\mathbb{R}^{L \times K \times D}$. In this context, $L$ refers to the temporal length of the physical dynamics, $K$ represents the number of particles, and $D$ denotes the spatial dimensionality. Specifically, $\mathbf{C}^{(0)}$ captures the coordinate features, and $\mathbf{V}^{(0)}$ captures the velocity features. For the three-body dataset, the parameters are set as $L = 10$, $K = 3$, and $D = 3$, indicating a temporal length of 10, with 3 particles in a 3-dimensional space. Similarly, for the five-spring dataset, $L = 50$, $K = 5$, and $D = 2$, corresponding to a temporal length of 50, 5 particles, and a 2-dimensional space. For both datasets, we generated 50k samples for training. We aim to generate samples following the same distribution as in the training dataset.

Fig.~\ref{fig: particle dynamics visualization} provides visual representations of two samples from the particle dynamics dataset, showcasing the behavior of systems within the three-body and five-spring datasets. 

\begin{figure}[ht]
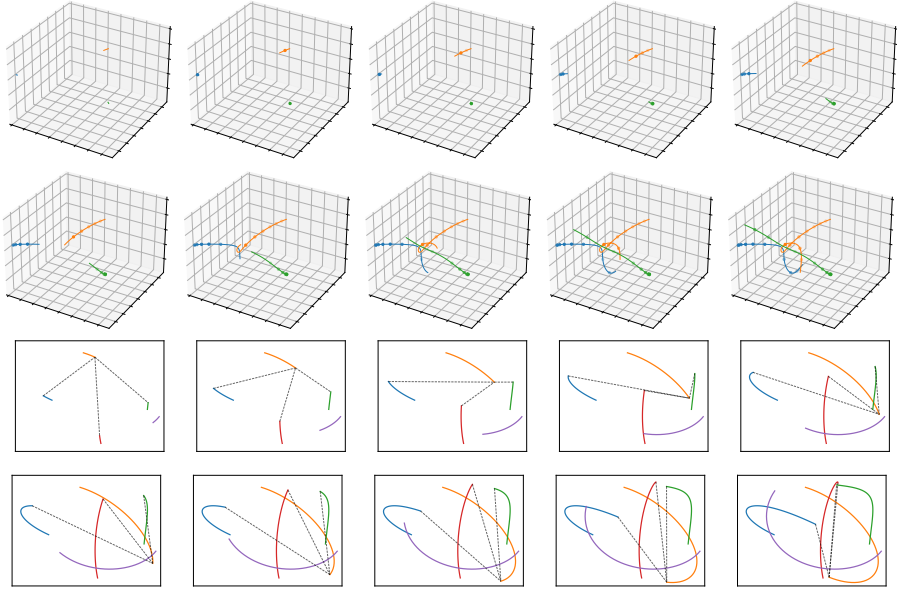

    \centering
    \foreach \i in {0, ..., 9}{
        \includegraphics[width=.16\linewidth, trim={0.5cm 0.5cm 0.5cm 0.5cm}, clip]{figs/3body/2/\i.pdf} 
    } \par
    \foreach \i in {0, ..., 9}{
        \includegraphics[width=.16\linewidth, trim={1cm 0cm 1cm 1cm}, clip]{figs/5spring/4/\i.pdf} 
    }
\caption{The presented figures illustrate samples from the particle dynamics dataset. The first two rows depict a sample from the three-body dataset, while the subsequent two rows represent a sample from the five-spring dataset.}
\label{fig: particle dynamics visualization}
\end{figure}

\subsection{Details of injecting the conservation of energy for the three-body dataset} \label{app: conservation of energy three-body}
The total of gravitational potential energy (GPE) and kinetic energy (KE) remains constant over time. The formula of the energy conservation equation is given by:
\begin{equation}
    - \sum_{i \neq j}^K \frac{G m_i m_j}{\mathbf{R}_{l,ij}^{(0)}} + \sum_{k = 1}^K \sum_{d = 1}^D \frac{1}{2} m_k (\mathbf{V}_{l, k, d}^{(0)})^2 = \operatorname{constant}, \quad \forall l = 1, \ldots, L,
\end{equation}
where $G$ denotes the gravitational constant, and all three bodies have the same mass $m_k$. $\mathbf{R}_{l,ij}^{(0)} = \| \mathbf{C}^{(0)}_{l, i} - \mathbf{C}^{(0)}_{l, j} \|$ denotes the Euclidean distance between the $i$-th and $j$-th mass at time $l$. This constraint is nonlinear with $\mathbf{X}^{(0)}$ but can be decomposed into elementary cases. Note that the constraint is multilinear w.r.t. $1 / \mathbf{R}_{l,ij}^{(0)}$ and $(\mathbf{V}_{l, k, d}^{(0)})^2$. Hence, from the results of the general nonlinear cases, we can apply the penalty loss $\mathcal{J}_{\mathcal{R}}\left( \boldsymbol{\theta} \right) = \mathcal{J}_{\textsubscript{GPE}}\left( \boldsymbol{\theta} \right) + \mathcal{J}_{\textsubscript{KE}}\left( \boldsymbol{\theta} \right)$, and 
\begin{subequations} \label{eq: app general nonlinear three body}
    \begin{align}
        \mathcal{J}_{\textsubscript{GPE}}\left( \boldsymbol{\theta} \right) &= \mathbb{E}_{t, \boldsymbol{x}_0, \boldsymbol{\epsilon}} \left[ w_1(t) \sum_{l; i \neq j} \left\| \left( \tilde{\mathbf{R}}_{\boldsymbol{\theta}} \right)_{l, ij} - \frac{1}{\mathbf{R}_{l,ij}^{(0)}} \right\|^2 \right], \\
        \mathcal{J}_{\textsubscript{KE}}\left( \boldsymbol{\theta} \right) &= \mathbb{E}_{t, \boldsymbol{x}_0, \boldsymbol{\epsilon}} \left[ w_2(t) \sum_{l, k, d} \left\| \left( \tilde{\mathbf{V}}_{\boldsymbol{\theta}} \right)_{l, k, d} - \left( \mathbf{V}_{l, k, d}^{(0)} \right)^2 \right\|^2 \right],
    \end{align}
\end{subequations}
where $\tilde{\mathbf{R}}_{\boldsymbol{\theta}}$ and $\tilde{\mathbf{V}}_{\boldsymbol{\theta}}$ share the same hidden state as the model $\mathbf{X}_{\boldsymbol{\theta}}$ and $\mathbf{X}_{\boldsymbol{\theta}}$ predicts $\mathbb{E}[\mathbf{X}^{(0)} \mid \mathbf{X}^{(t)}]$. The setting details of these models are introduced in Appendix~\ref{app: training general nonlinear cases}. We refer such a setting as ``\emph{noise matching + conservation of energy (general nonlinear)}''.  Meanwhile, note that $1 / \mathbf{R}_{l, ij}^{(0)}$ and $(\mathbf{V}_{l, k, d}^{(0)})^2$ are convex w.r.t. model's output. From the results in the convex case, we can directly apply the penalty loss to the output of $\mathbf{X}_{\boldsymbol{\theta}}$ and set the penalty loss to be
\begin{subequations} \label{eq: app reducible nonlinear three body}
    \begin{align}
        \mathcal{J}_{\textsubscript{GPE}}\left( \boldsymbol{\theta} \right) &= \mathbb{E}_{t, \boldsymbol{x}_0, \boldsymbol{\epsilon}} \left[ w_1(t) \sum_{l; i \neq j} \left\| \frac{1}{\left(\mathbf{R}_{\boldsymbol{\theta}}\right)_{l, ij}} - \frac{1}{\mathbf{R}_{l,ij}^{(0)}} \right\|^2 \right], \\
        \mathcal{J}_{\textsubscript{KE}}\left( \boldsymbol{\theta} \right) &= \mathbb{E}_{t, \boldsymbol{x}_0, \boldsymbol{\epsilon}} \left[ w_2(t) \sum_{l, k, d} \left\| \left( \mathbf{V}_{\boldsymbol{\theta}} \right)_{l, k, d}^2 - \left( \mathbf{V}_{l, k, d}^{(0)} \right)^2 \right\|^2 \right], 
    \end{align}
\end{subequations}
where $\left(\mathbf{R}_{\boldsymbol{\theta}}\right)_{l, ij} = \| \mathbf{C}_{\boldsymbol{\theta}}\left( \mathbf{X}^{(t)}, t \right)_{l, i} - \mathbf{C}_{\boldsymbol{\theta}}\left( \mathbf{X}^{(t)}, t \right)_{l, j} \|$, i.e. model's prediction of the Euclidean distance between two masses. We refer such a setting as ``\emph{noise matching + conservation of energy (reducible nonlinear)}'', since this penalty loss function can be derived using a multilinear function composed with convex functions. In comparison to the penalty loss described in \Eqref{eq: app general nonlinear three body}, the penalty loss presented in \Eqref{eq: app reducible nonlinear three body} is applied directly to the output of $\mathbf{X}_{\boldsymbol{\theta}}$. This direct application imposes a stronger constraint, thereby more effectively ensuring that the model adheres to the specified physical constraints.

\section{Experiments Details} \label{app: experiments details}
\paragraph{Configuration.} We conduct experiments of advection, Darcy flow, three-body, and five-spring datasets on NVIDIA GeForce RTX 3090 GPUs and Intel(R) Xeon(R) Silver 4210R CPU @ 2.40GHz CPU. For the rest of the datasets, we conduct experiments on NVIDIA A100-SXM4-80GB GPUs and Intel(R) Xeon(R) Platinum 8358P CPU @ 2.60GHz CPU. 

\paragraph{Training details.} We use the Adam optimizer for training, with a maximum of 1000 epochs. We set the learning rate to 1e-3 and betas to 0.95 and 0.999. The learning rate scheduler is ReduceLROnPlateau with factor=0.6 and patience=10. When the learning rate is less than 5e-7, we stop the training. 

\paragraph{Diffusion details.} As for diffusion configuration, we set the steps of the forward diffusion process to 1000, the noise scheduler to $\sigma_t=\operatorname{sigmoid}(\operatorname{linspace}(-5, 5, 1000))$ and $\alpha_t = \sqrt{1 - \sigma_t^2}$. The loss weight $w(t)$ is set to $g^2(t)=\frac{\mathrm{d} \sigma_t^2}{\mathrm{d} t}-2 \frac{\mathrm{d} \log \alpha_t}{\mathrm{d} t} \sigma_t^2$ \citep{song2021maximum}. We generate samples using the DPM-Solver-1~\citep{lu2022dpm}.

\paragraph{Experiment summary.} We summarize the choice for backbones and properties of datasets in Tab.~\ref{tab: dataset summary}. We conducted an equivalent search for the hyperparameters of both the baseline methods and the proposed methods. The specific search ranges for each dataset and the corresponding hyperparameters are summarized in Tab.~\ref{tab: hyper summary}.

\begin{table}[ht]
\centering
\caption{Summary of the model hyperparameters.}
\label{tab: hyper summary}
\resizebox{\textwidth}{!}{
\begin{tabular}{|cc|c|c|c|}
\hline
\multicolumn{2}{|c|}{Datasets}                                          & Backbone    & Model hyperparameters                                        & Batch size \\ \hline
\multicolumn{1}{|c|}{\multirow{4}{*}{PDE}}              & advection     & GRU~\citep{chung2014empirical}         & hidden size: {[}128, 256, 512{]}, layers: {[}3, 4, 5{]}      & 128        \\ \cline{2-5} 
\multicolumn{1}{|c|}{}                                  & Darcy flow    & Karras Unet~\citep{ho2020denoising} & dim: {[}128{]}                                               & 8          \\ \cline{2-5} 
\multicolumn{1}{|c|}{}                                  & Burger        & Karras Unet~\citep{ho2020denoising} & dim: {[}32{]}                                                & 128        \\ \cline{2-5} 
\multicolumn{1}{|c|}{}                                  & shallow water & 3D Karras Unet~\citep{ho2020denoising}     & dim: {[}16{]}                                                & 64         \\ \hline
\multicolumn{1}{|c|}{\multirow{2}{*}{particle dynamics}} & three-body    & NN+GRU~\citep{chung2014empirical}      & \shortstack[c]{RNN hidden size: {[}64, 128, 256, 512, 1024{]},\\ layers: {[}3, 4, 5{]}} & 64         \\ \cline{2-5} 
\multicolumn{1}{|c|}{}                                  & five-spring   & EGNN~\citep{satorras2021n}+GRU~\citep{chung2014empirical}    & RNN hidden size: {[}256, 512, 1024{]}                       & 64         \\ \hline
\end{tabular}
}
\vspace{-10pt}
\end{table}

\paragraph{Error bars.} Error bars are computed by varying both training and sampling seeds (3-5 runs), except for Darcy flow, where only the sampling seed is varied due to the long training time.

\subsection{Training general nonlinear cases} \label{app: training general nonlinear cases}
\paragraph{Three-body dataset.} To reduce the nonlinear conservation of the energy by general nonlinear cases, we apply the penalty loss $\mathcal{J}_{\mathcal{R}}\left( \boldsymbol{\theta} \right) = \mathcal{J}_{\textsubscript{GPE}}\left( \boldsymbol{\theta} \right) + \mathcal{J}_{\textsubscript{KE}}\left( \boldsymbol{\theta} \right)$, and 
\begin{subequations} \label{eq: three body general nonlinear}
    \begin{align}
        \mathcal{J}_{\textsubscript{GPE}}\left( \boldsymbol{\theta} \right) &= \mathbb{E}_{t, \boldsymbol{x}_0, \boldsymbol{\epsilon}} \left[ w_1(t) \sum_{l; i \neq j} \left\| \left( \tilde{\mathbf{R}}_{\boldsymbol{\theta}} \right)_{l, ij} - \frac{1}{\mathbf{R}_{l,ij}^{(0)}} \right\|^2 \right], \\
        \mathcal{J}_{\textsubscript{KE}}\left( \boldsymbol{\theta} \right) &= \mathbb{E}_{t, \boldsymbol{x}_0, \boldsymbol{\epsilon}} \left[ w_2(t) \sum_{l, k, d} \left\| \left( \tilde{\mathbf{V}}_{\boldsymbol{\theta}} \right)_{l, k, d} - \left( \mathbf{V}_{l, k, d}^{(0)} \right)^2 \right\|^2 \right].
    \end{align}
\end{subequations}
The models $\tilde{\mathbf{R}}_{\boldsymbol{\theta}}$ and $\tilde{\mathbf{V}}_{\boldsymbol{\theta}}$ share the same hidden state as the model $\mathbf{X}_{\boldsymbol{\theta}}$, where $\mathbf{X}_{\boldsymbol{\theta}}$ is tasked with predicting $\mathbb{E}[\mathbf{X}^{(0)} \mid \mathbf{X}^{(t)}]$. The GRU architecture serves as the backbone for $\mathbf{X}_{\boldsymbol{\theta}}$. Consequently, we have designed the outputs of the models $\tilde{\mathbf{R}}_{\boldsymbol{\theta}}$ and $\tilde{\mathbf{V}}_{\boldsymbol{\theta}}$ to be generated by an additional linear layer that takes the hidden state of the GRU within $\mathbf{X}_{\boldsymbol{\theta}}$ as input.

\paragraph{Five-spring dataset.} For the five-spring dataset, we apply the penalty loss $\mathcal{J}_{\mathcal{R}}\left( \boldsymbol{\theta} \right) = \mathcal{J}_{\textsubscript{PE}}\left( \boldsymbol{\theta} \right) + \mathcal{J}_{\textsubscript{KE}}\left( \boldsymbol{\theta} \right)$, and 
\begin{subequations} \label{eq: five spring general nonlinear}
    \begin{align}
        \mathcal{J}_{\textsubscript{PE}}\left( \boldsymbol{\theta} \right) &= \mathbb{E}_{t, \boldsymbol{x}_0, \boldsymbol{\epsilon}} \left[ w_1(t) \sum_{(i, j) \in \mathcal{E}, l} \left\| \left( \tilde{\mathbf{R}}_{\boldsymbol{\theta}} \right)_{l, ij} - \left( \mathbf{R}_{l,ij}^{(0)} \right)^2 \right\|^2 \right], \\
        \mathcal{J}_{\textsubscript{KE}}\left( \boldsymbol{\theta} \right) &= \mathbb{E}_{t, \boldsymbol{x}_0, \boldsymbol{\epsilon}} \left[ w_2(t) \sum_{l, k, d} \left\| \left( \tilde{\mathbf{V}}_{\boldsymbol{\theta}} \right)_{l, k, d} - \left( \mathbf{V}_{l, k, d}^{(0)} \right)^2 \right\|^2 \right].
    \end{align}
\end{subequations}
The models $\tilde{\mathbf{R}}_{\boldsymbol{\theta}}$ and $\tilde{\mathbf{V}}_{\boldsymbol{\theta}}$ utilize the same hidden state as the model $\mathbf{X}_{\boldsymbol{\theta}}$, with $\mathbf{X}_{\boldsymbol{\theta}}$ responsible for predicting $\mathbb{E}[\mathbf{X}^{(0)} \mid \mathbf{X}^{(t)}]$. The underlying structure of $\mathbf{X}_{\boldsymbol{\theta}}$ is based on EGNN for extracting node and edge features and a GRU network for dealing with time series. As a result, the outputs of $\tilde{\mathbf{R}}_{\boldsymbol{\theta}}$ are produced by an additional linear layer that processes the edge features generated by EGNN within $\mathbf{X}_{\boldsymbol{\theta}}$, and the outputs of $\tilde{\mathbf{V}}_{\boldsymbol{\theta}}$ are produced by an additional linear layer that takes the hidden state of the GRU within $\mathbf{X}_{\boldsymbol{\theta}}$ as input.

\subsection{Details of experiment results} \label{app: details of experiment results}
Tab.~\ref{tab: 3body} and Tab.~\ref{tab: 5spring} present the outcomes of the grid search conducted on both the three-body and five-spring datasets. For the three-body datasets, the top three combinations of hyperparameters—hidden size and the number of layers—are highlighted for each training method. For the five-spring datasets, the top three hidden size hyperparameters identified for each training method are provided.

\begin{table}[ht]
\centering
\caption{The outcomes of the grid search conducted on the three-body datasets are summarized. For each training method, we highlight the top three combinations of hyperparameters, focusing on hidden size and the number of layers. ``linear'' refers to the settings in \ref{eq: momentum loss}, and ``reducible nonlinear'' and ``general nonlinear'' refer to the settings in \Eqref{eq: app reducible nonlinear three body} \Eqref{eq: three body general nonlinear}, respectively. We first perform a grid search to find the optimal hyperparameters. Then, we conduct experiments with the optimal parameters, running each experiment five times using the seeds 42, 0, 1, 2, and 3. However, we observe that using seed 2 for noise matching results in poor performance, so we exclude this trial from all methods.}
\label{tab: 3body}
\resizebox{\textwidth}{!}{
\begin{tabular}{|c|c|c|c|c|}
\hline
Method                                           & Hyperparameter & Trajectory error & Velocity error & Energy error \\ \hline
\multirow{3}{*}{data matching}                
& 256, 4         & 5.2455              & 4.2028               & 12.758            \\ \cline{2-5} 
& 512, 5         & 5.7765              & 3.8985               & 13.636            \\ \cline{2-5} 
& 256, 5         & 5.5098              & 4.4144               & 11.643            \\ \hline

\multirow{3}{*}{noise matching}                
& 256, 4         & 2.4132$\pm$0.1208   & 2.5745$\pm$0.0790    & 4.3292$\pm$0.7235 \\ \cline{2-5} 
& 256, 5         & 2.5695              & 2.6713               & 3.8944            \\ \cline{2-5} 
& 512, 3         & 2.6368              & 2.7192               & 3.5427            \\ \hline

\multirow{3}{*}{\shortstack{noise matching \\ + \\ conservation of momentum (linear)}} 
& 512, 5         & 2.1261$\pm$0.0533   & 2.2815$\pm$0.0340    & 3.8698$\pm$0.2486 \\ \cline{2-5} 
& 1024, 4        & 2.4179              & 2.5261               & 3.9003            \\ \cline{2-5} 
& 512, 4         & 2.4188              & 2.5264               & 6.8971            \\ \hline

\multirow{3}{*}{\shortstack{noise matching \\ + \\ conservation of energy (reducible nonlinear)}} 
& 128, 3         & \textbf{1.9880$\pm$0.3418}     & \textbf{0.8328$\pm$0.1042}      & \textbf{0.5465$\pm$0.0705}   \\ \cline{2-5} 
& 128, 4         & \textbf{1.6659}     & \textbf{0.7605}      & \textbf{0.5198}   \\ \cline{2-5} 
& 128, 5         & \textbf{1.7821}     & \textbf{0.8030}      & \textbf{0.4532}   \\ \hline

\multirow{3}{*}{\shortstack{noise matching \\ + \\ conservation of energy (general nonlinear)}} 
& 512, 4         & 2.2166$\pm$0.0669   & 2.4089$\pm$0.0941    & 4.5945$\pm$0.3960 \\ \cline{2-5} 
& 512, 3         & 2.5335              & 2.6234               & 3.8091            \\ \cline{2-5} 
& 1024, 3        & 2.5068              & 2.6737               & 5.2131            \\ \hline
\end{tabular}
}
\end{table}

\begin{table}[ht]
\centering
\caption{Grid search results for the five-spring datasets are provided. We present the top two hyperparameter hidden size identified for each training method. The results of our experiment indicate that the output of the GRU model is inadequate to accurately predict the distance between particles. This limitation renders the methods designed for reducible nonlinear cases inapplicable, and consequently, their results have been excluded from our study. In contrast, the convoluted edge features generated by the EGNN model are sufficiently informative for predicting particle distances. Moreover, the application of methods suitable for general nonlinear cases improves performance. ``linear'' refers to the settings in \Eqref{eq: momentum loss} and ``general nonlinear'' refers to the settings in \Eqref{eq: five spring general nonlinear}.}
\label{tab: 5spring}
\resizebox{\textwidth}{!}{
\setlength{\extrarowheight}{5pt}
\begin{tabular}{|c|c|c|c|c|}
\hline
Method             & Hyperparameter & Dynamic error ($\times 10^{-2}$) & Momentum error & Energy error  \\ \hline
\multirow{2}{*}{data matching}                
& 1024       & 5.3120            & 5.2320            &  1.1204               \\ \cline{2-5} 
& 256        & 5.3872            & 5.1448            &  1.1030               \\ \hline 

\multirow{2}{*}{noise matching}                
& 512        & 5.1754$\pm$0.0286 & 5.3699$\pm$0.0462 &  1.0618$\pm$0.0243    \\ \cline{2-5} 
& 256        & 5.1950            & 5.3468            &  1.0805               \\ \hline 

\multirow{2}{*}{\shortstack{noise matching \\ + \\ conservation of momentum (linear)}} 
& 256        & \textbf{5.0731$\pm$0.0406}   & \textbf{0.3898$\pm$0.0118}   &  \textbf{0.7418$\pm$0.0129}      \\ \cline{2-5} 
& 512        & \textbf{5.0990}   & \textbf{0.4335}   &  \textbf{0.7652}      \\ \hline

\multirow{2}{*}{\shortstack{noise matching \\ + \\ conservation of energy (general nonlinear)}} 
& 256        & 5.1643$\pm$0.0528 & 5.3359$\pm$0.0811 &  1.0500$\pm$0.0473    \\ \cline{2-5} 
& 1024       & 5.1809            & 5.3902            &  1.0879               \\ \hline 
\end{tabular}
\setlength{\extrarowheight}{0pt}
}
\end{table}

\subsection{Sensitivity of penalty loss weight}
Regarding the hyperparameters of the penalty loss weight, our experimental results show that the model performance is not highly sensitive to variations in the loss weight. Specifically, we conducted a search for the loss weight across a logarithmic scale, testing it approximately five times. To further illustrate this, we provide an example of how the loss weight affects model performance on the three-body dataset using conservation of momentum in Fig.~\ref{fig: three-body momentum penalty weight}.
\begin{figure}[t]
    \centering
    \hfill
    \includegraphics[width=0.9\linewidth]{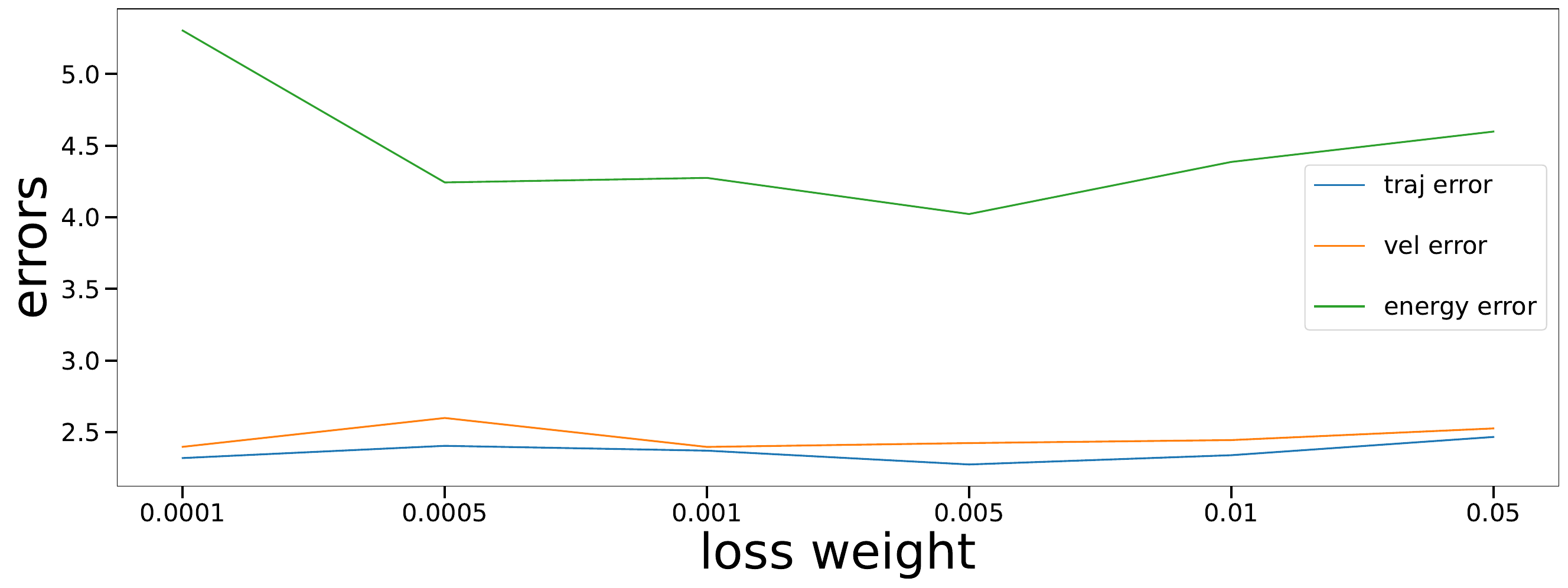} 
    \caption{Sensitivity of model performance to variations in the penalty loss weight on the three-body dataset. The table shows the results of testing different loss weights on a logarithmic scale, with performance metrics recorded at five distinct points. The results indicate that model performance is relatively stable across a range of loss weight values.}
    \label{fig: three-body momentum penalty weight}
\end{figure}

\subsection{Comparsion with prediction methods} \label{app: comparison with prediction methods}
We conduct a comparison between the performance of the generation and prediction methods. In Tab.~\ref{tab: comparison of generation and prediction}, we present a comparative analysis of generative models and prediction models for predicting physical dynamics, specifically advection and Darcy flow. The results of the prediction methods are taken from \cite{takamoto2022pdebench}, which performs a comprehensive comparison of FNO~\citep{li2020fourier}, Unet~\citep{ronneberger2015u}, and PINN~\citep{raissi2019physics} (using DeepXDE~\citep{lu2021deepxde}). Generative models that use diffusion techniques, both with and without prior information, exhibit comparable performance in both tasks. The diffusion model with priors shows an improvement over the one without priors. In this work, we do not conduct the procedure of super-resolution or denoising~\citep{wang2018esrgan, saharia2022image}, which are critical in practical applications to produce high-quality, clean, and detailed images from diffusion models. Hence, the performance of diffusion models can be further enhanced by the introduction of super-resolution and denoising.

Despite the inherent minor noise in the data, the model successfully captures the underlying patterns of the system, identifying low-frequency components and representing broad trends over time. However, challenges arise in accurately modeling high-frequency details, especially in highly nonlinear systems (Darcy flow dataset). This limitation becomes apparent in multi-step forecasting, where errors in resolving finer details can accumulate, leading to significant deviations from the true system behavior as the prediction horizon extends~\citep{lippe2024pde}.

\begin{table}[ht]
\centering
\caption{Performance comparison of diffusion generative models with prediction models. The results of the prediction methods are brought from \cite{takamoto2022pdebench}.}
\label{tab: comparison of generation and prediction}
\resizebox{\textwidth}{!}{
\begin{tabular}{|cc|c|c|c|}
\hline
\multicolumn{2}{|c|}{Method}                                                         & Backbone                     & Advection              & Darcy flow \\ \hline
\multicolumn{1}{|c|}{\multirow{2}{*}{Generation}} & diffusion w/o prior              & \multirow{2}{*}{ \shortstack{Karras Unet}} & $1.716 \times 10^{-2}$ & $2.016 \times 10^{-2}$ \\ \cline{2-2} \cline{4-5} 
\multicolumn{1}{|c|}{}                            & diffusion w/ prior               &                              & $1.621 \times 10^{-2}$ & $1.954 \times 10^{-2}$ \\ \hline
\multicolumn{1}{|c|}{\multirow{3}{*}{Prediction}} & forward propagator approximation & FNO                          & $\mathbf{4.9 \times 10^{-3}}$   & $1.2 \times 10^{-2}$  \\ \cline{2-5} 
\multicolumn{1}{|c|}{}                            & autoregressive method            & Unet                         & $3.8 \times 10^{-2}$   & $\mathbf{6.4 \times 10^{-3}}$  \\ \cline{2-5} 
\multicolumn{1}{|c|}{}                            & PINN                             & DeepXDE                      & $7.8 \times 10^{-1}$   & -          \\ \hline
\end{tabular}
}
\end{table}

We also test how predictive models perform on the five-spring dataset. We test the performance of NRI decoder in \cite{kipf2018neural}, which is specially designed for the dynamics and interacting systems. The results can be seen in Tab.~\ref{tab: 5spring gen v.s. pred}.
\begin{table}[ht]
\centering
\caption{Performance comparison of diffusion generative models with prediction models on the five-spring dataset. We use the model (decoder only) in \cite{kipf2018neural} to predict the dynamics and use EGNN~\citep{satorras2021n} and GRU~\citep{chung2014empirical} for diffusion methods. The metric of dynamic error is different from those in other tables. For other tables, we use the mean of the error between the generated dynamics and the ground truth dynamics solved by finite element methods in the following 8 steps, while in this table, we use only 1 step. The definition of the momentum error and energy error are the same. Note that diffusion methods cannot surpass predictive methods without priors. After the incorporation of the priors, diffusion models generate high-quality samples compared with the predictive methods.}
\label{tab: 5spring gen v.s. pred}
\begin{tabular}{|c|c|c|c|}
\hline
Method             & Dynamic error ($\times 10^{-3}$) & Momentum error & Energy error  \\ \hline
noise matching          & 2.5178   & 5.3511   &  1.0891    \\ \hline  
noise matching + priors & \textbf{2.4329}   & \textbf{0.3687}   &  \textbf{0.7448}      \\ \hline 
NRI predict             & 2.4471   & 5.2234   &  1.4577      \\ \hline 
\end{tabular}
\end{table}

\subsection{Why not transformer?} 
We attempted to implement a transformer architecture as the backbone for sequential data in particle dynamics datasets. However, our results indicate that the transformer-based model does not achieve performance comparable to that of recurrent structure backbones. This discrepancy is likely due to the nature of physical dynamics, where the next state is strongly dependent on the current state. The attention mechanism employed by transformers may reduce performance in this context, as it does not inherently account for the temporal evolution of states.

\subsection{Ablation studies on distributional prior through data augmentation} \label{app: ablation data augmentation}
Some existing works argue that considering the invariance property of the distribution may sacrifice the empirical performance\citep{yan2023swingnn}. We perform further ablation studies to examine the significance of the distributional priors. Specifically, we evaluate two scenarios: 1) with permutational data augmentation, and 2) without permutational augmentation. To perform the permutational data augmentation, we randomly permute particles along with other properties such as their edge connectivity. The results are presented in Tab.~\ref{tab: ablation data aug three-body} and~\ref{tab: ablation data aug five-spring} below.

\begin{table}[ht]
\caption{Ablation study on data augmentation for the equivalence of the model on the three-body dataset.}
\label{tab: ablation data aug three-body}
\begin{tabular}{|c|cc|cc|cc|}
\hline
\multirow{2}{*}{Method}       & \multicolumn{2}{c|}{Traj error}       & \multicolumn{2}{c|}{Velocity error}   & \multicolumn{2}{c|}{Energy error}     \\ \cline{2-7} 
                              & \multicolumn{1}{c|}{w/o aug} & w/ aug & \multicolumn{1}{c|}{w/o aug} & w/ aug & \multicolumn{1}{c|}{w/o aug} & w/ aug \\ \hline
momentum conservation         & 2.1953  & \textbf{2.1409} & 2.2974 & \textbf{2.2529} & \textbf{3.4364} &  4.1116      \\ \hline
reducible energy conservation & 1.8170  & \textbf{1.6072} & 1.9239 & \textbf{0.7307} & 3.2382     & \textbf{0.5062}   \\ \hline
\end{tabular}
\end{table}

\begin{table}[ht]
\caption{Ablation study on data augmentation for the equivalence of the model on the five-spring dataset.}
\label{tab: ablation data aug five-spring}
\begin{tabular}{|c|cc|cc|cc|}
\hline
\multirow{2}{*}{Method}       & \multicolumn{2}{c|}{Dynamic error}       & \multicolumn{2}{c|}{Momentum error}   & \multicolumn{2}{c|}{Energy error}     \\ \cline{2-7} 
                              & \multicolumn{1}{c|}{w/o aug} & w/ aug & \multicolumn{1}{c|}{w/o aug} & w/ aug & \multicolumn{1}{c|}{w/o aug} & w/ aug \\ \hline
momentum conservation         & 6.3972 & \textbf{5.0919} & \textbf{0.3564}  & 0.3687 & 1.5536 & \textbf{0.7448} \\ \hline
reducible energy conservation & 6.6211 & \textbf{5.1615} & 7.0716  & \textbf{5.3032} & 2.4910 & \textbf{1.0548} \\ \hline
\end{tabular}
\end{table}

We also test the equivalence of the trained models, one trained with data augmentation and the other without. Results show that, even without the introduction of data augmentation for equivalence, the model still learns to exhibit the desired equivalence, which aligns with our mathematical analysis. Furthermore, when equivalence data augmentation is applied, the model achieves a significant reduction in equivalence error, decreasing from 1.6e-3 to 3.5e-4. This further supports the correctness of our analysis, demonstrating that, when using two models with the same capacity, a $\left(\mathcal{G}, \nabla^{-1}\right)$-equivariant model should be employed for noise matching.

\subsection{Sampling in fewer steps using DPM-solvers} \label{app: dpm solver fewer steps}
We conduct experiments of using the DPM-solvers~\citep{lu2022dpm} to sample in fewer steps. By reducing the number of diffusion steps required, DPM solvers significantly lower computational expenses in generating physics dynamics. This efficiency is achieved with minimal degradation in performance, ensuring that the resulting dynamics remain closely aligned with the underlying physical principles. We apply the DPM-Solver-3 (Algorithm 2 in \cite{lu2022dpm}) and the results can be seen in Fig.~\ref{fig: dpm 3body} and \ref{fig: dpm 5spring}.

\begin{figure}[ht]
    \centering
    \begin{minipage}[b]{0.45\textwidth}
        \centering
        \includegraphics[width=\textwidth, trim={2cm 0cm 2cm 0cm}, clip]{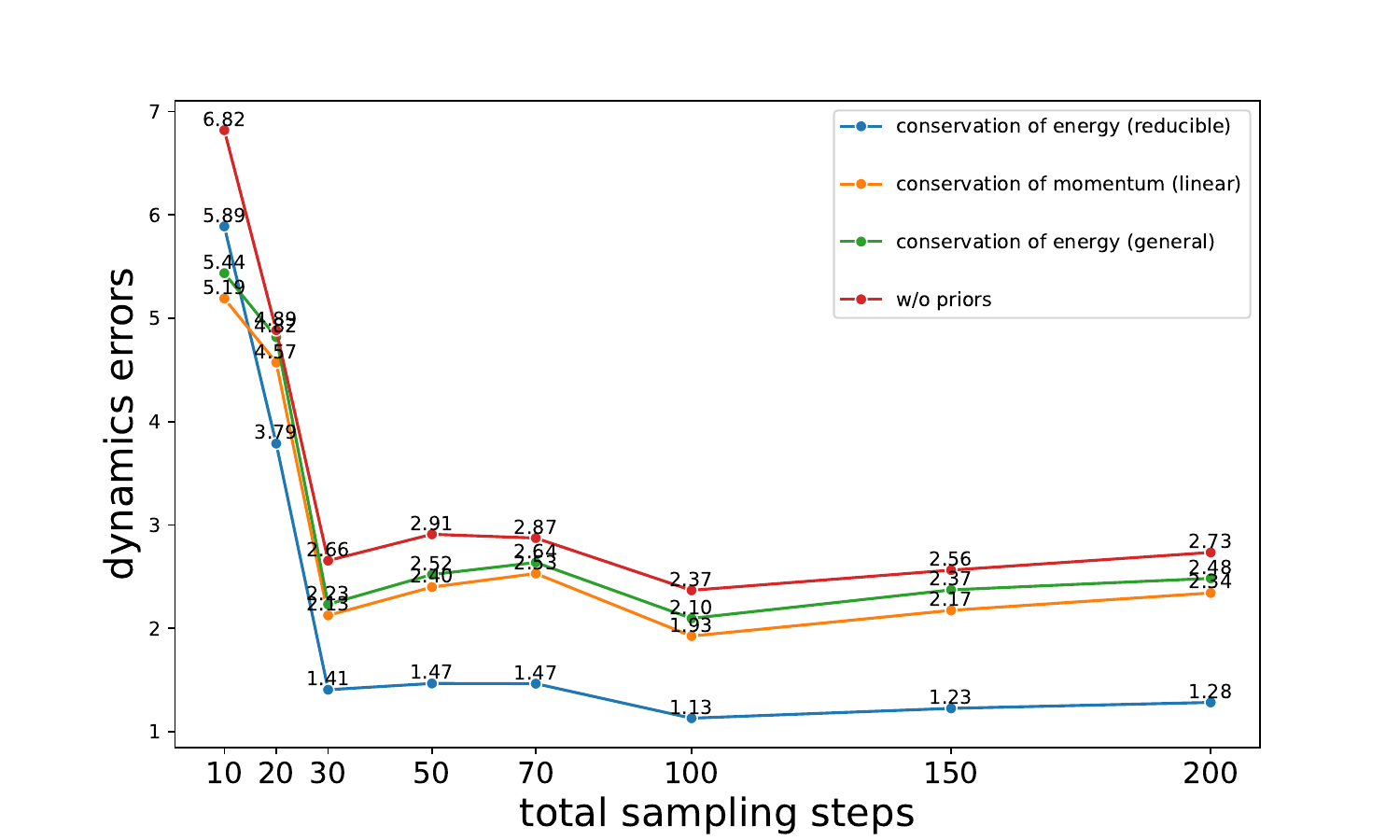}
        \caption{Results of sampling through DPM-Solver-3 on the three-body dataset. The x-axis denotes the values for $M$ in Algorithm 2 in \cite{lu2022dpm}.}
        \label{fig: dpm 3body}
    \end{minipage}
    \hfill
    \begin{minipage}[b]{0.45\textwidth}
        \centering
        \includegraphics[width=\textwidth, trim={2cm 0cm 2cm 0cm}, clip]{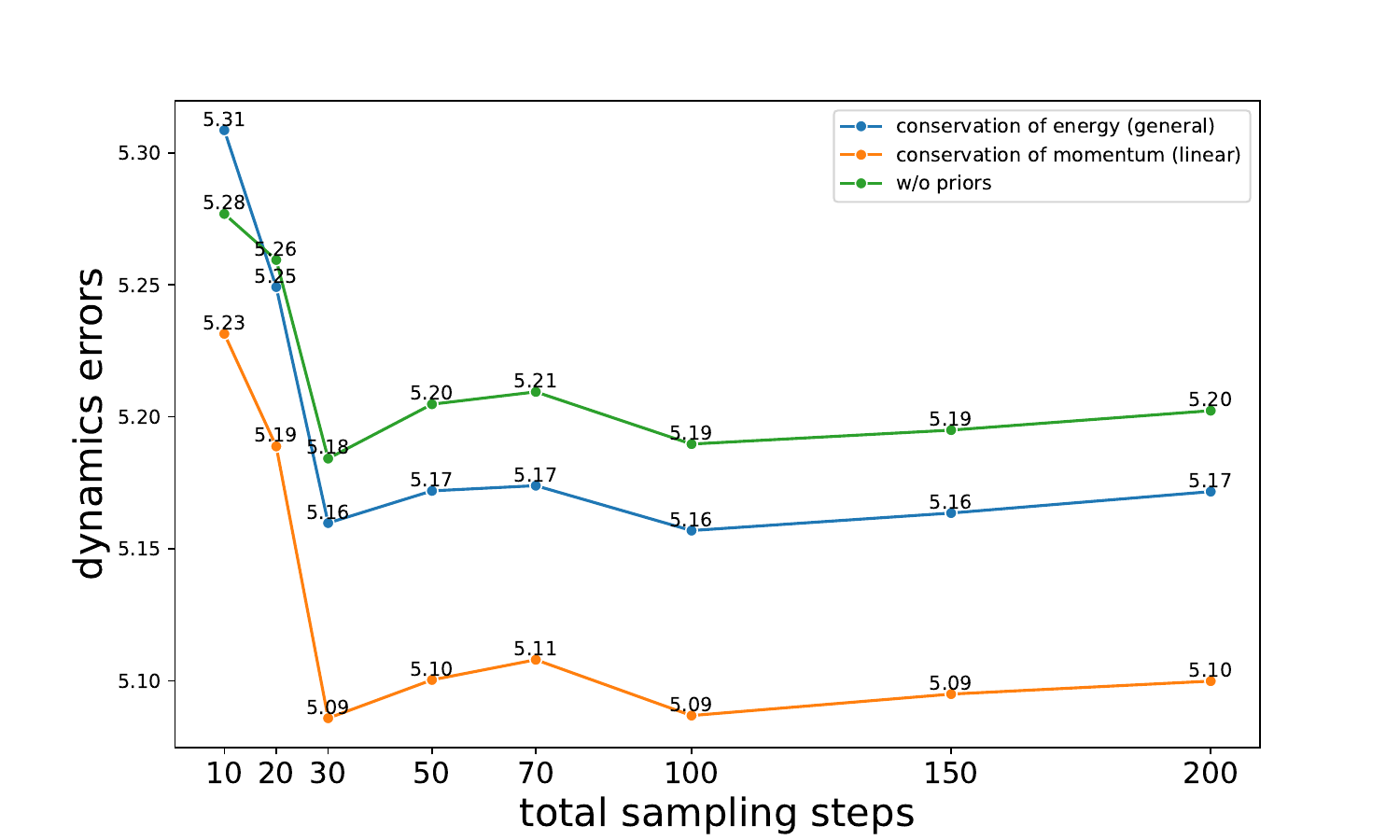}
        \caption{Results of sampling through DPM-Solver-3 on the five-spring dataset. The x-axis denotes the values for $M$ in Algorithm 2 in \cite{lu2022dpm}.}
        \label{fig: dpm 5spring}
    \end{minipage}
\end{figure}

\section{Proofs} 
\allowdisplaybreaks
\subsection{Sufficient conditions for the invariance of marginal distribution} \label{app: qt invariance}

\begin{definition}[volume-preserving]
    A function whose derivative has a determinant equal to 1 is known as a \textbf{volume-preserving} function.
\end{definition}

\begin{definition}[isomorphism]
    An \textbf{isomorphism} is a structure-preserving mapping between two structures of the same type that can be reversed by an inverse mapping.
\end{definition}

\begin{definition}[diffeomorphism]
    A \textbf{diffeomorphism} is an isomorphism of differentiable manifolds. It is an invertible function that maps one differentiable manifold to another such that both the function and its inverse are continuously differentiable.
\end{definition}

\begin{definition}[isometry]
Let $\mathbf{X}$ and $\mathbf{Y}$ be metric spaces with metrics (e.g., distances) $d_{\mathbf{X}}$ and $d_{\mathbf{Y}}$. A map $\mathbf{f}: \mathbf{X} \rightarrow \mathbf{Y}$ is called an \textbf{isometry} if for any $\mathbf{a}, \mathbf{b} \in \mathbf{X}$, $d_{\mathbf{X}}(\mathbf{a}, \mathbf{b})=d_{\mathbf{Y}}(\mathbf{f}(\mathbf{a}), \mathbf{f}(\mathbf{b}))$.
\end{definition}

\begin{definition}[homothety]
If for all $\mathbf{G} \in \mathcal{G}$ and for all scalar $0 < a < 1$, there exists $\mathbf{H} \in \mathcal{G}$ such that $\mathbf{H}(a \boldsymbol{x})=a \mathbf{G}(\boldsymbol{x})$. $\mathbf{H}$ would be a \textbf{homothety} (a transformation that scales distances by a constant factor but does not necessarily preserve angles). The group $\mathcal{G}$ formed by all such transformations is called the \textbf{homothety group}.
\end{definition}

\qtinvariance*

\begin{proof} 
    For VE diffusion (defined in Sec.~3.4 in \cite{song2020score}), $q_{t}\left(\boldsymbol{x}_t \mid \boldsymbol{x}_0\right)=\mathcal{N}\left(\boldsymbol{x}_t \mid \boldsymbol{x}_0, \sigma^2_t \mathbf{I}\right)$. For any $\mathcal{G}$-invariant distribution $q_0$ and $\mathbf{G} \in \mathcal{G}$, we have 
    \begin{subequations}
        \begin{alignat}{2}
            q_t(\mathbf{G}(\boldsymbol{x}_t)) &= \int q_t (\mathbf{G}(\boldsymbol{x}_t) \mid \boldsymbol{x}_0) q_0(\boldsymbol{x}_0) \mathrm{d} \boldsymbol{x}_0 \quad  & \text{probability chain rule} \\
            &= \int q_t (\mathbf{G}(\boldsymbol{x}_t) \mid \mathbf{G}(\boldsymbol{x}_0)) q_0(\mathbf{G}(\boldsymbol{x}_0)) \mathrm{d} \mathbf{G}(\boldsymbol{x}_0) \quad  & \text{change of variables} \\
            &= \int \mathcal{N}\left(\mathbf{G}(\boldsymbol{x}_t) \mid \mathbf{G}(\boldsymbol{x}_0), \sigma^2_t \mathbf{I}\right) q_0(\boldsymbol{x}_0) \mathrm{d} \boldsymbol{x}_0 & \text{volume-preserving diffeomorphism} \\
            &= \int \mathcal{N}\left(\boldsymbol{x}_t \mid \boldsymbol{x}_0, \sigma^2_t \mathbf{I}\right) q_0(\boldsymbol{x}_0) \mathrm{d} \boldsymbol{x}_0 & \text{isotropic Gaussian $\mathcal{N}$ and isometry $\mathbf{G}$} \\
            &= \int q_t (\boldsymbol{x}_t \mid \boldsymbol{x}_0) q_0(\boldsymbol{x}_t) \mathrm{d} \boldsymbol{x}_0 \\
            &= q_t(\boldsymbol{x}_t)
        \end{alignat}
    \end{subequations}
    Hence, the marginal distribution $q_t$ at any time $t$ is an $\mathcal{G}$-invariant distribution. 
    
    For VP diffusion (defined in Sec.~3.4 in \cite{song2020score}), assume $\alpha_t > 0$ at any time $t$. $q_{t}\left(\boldsymbol{x}_t \mid \boldsymbol{x}_0\right)=\mathcal{N}\left(\boldsymbol{x}_t \mid \alpha_t \boldsymbol{x}_0, (1 - \alpha_t) \mathbf{I}\right)$. Define $\hat{q}_t(\boldsymbol{x}_t) = q_t(\frac{1}{\alpha_t} \boldsymbol{x}_t)$. Note that $\frac{1}{\alpha_t} \boldsymbol{x}_t = \boldsymbol{x}_0 + \frac{\sqrt{1 - \alpha_t}}{\alpha_t} \boldsymbol{\epsilon}, \boldsymbol{\epsilon} \sim \mathcal{N}(\mathbf{0}, \mathbf{I})$, is a random variable generated by some VE diffusion process. Hence, its marginal distribution at any time $t$ is $\mathcal{G}$-invariant. For any $\mathcal{G}$-invariant distribution $q_0$, we have 
    \begin{subequations}
        \begin{alignat}{2}
            q_t(\mathbf{G}(\boldsymbol{x}_t)) &= \hat{q}_t(\alpha_t \mathbf{G}(\boldsymbol{x}_t)) \quad \quad \quad \quad \quad  & \text{by definition} \\
            &= \hat{q}_t( \mathbf{H}(\alpha_t \boldsymbol{x}_t)) & \text{by sufficient conditions} \\
            &= \hat{q}_t(\alpha_t \boldsymbol{x}_t) & \text{$\mathcal{G}$-invariance} \\
            &= q_t(\boldsymbol{x}_t)
        \end{alignat}
    \end{subequations}
\end{proof}

\paragraph{Discussion of related theorems.}
\begin{theorem}[Proposition 1 in \cite{xu2022geodiff}] \label{thm: ref geodiff theorem}
    Let $q\left(\boldsymbol{x}_T\right)$ be an SE(3)-invariant density function, i.e., $q\left(\boldsymbol{x}_T\right)=q\left(\mathbf{G}\left(\boldsymbol{x}_T\right)\right)$ for $\mathbf{G} \in \operatorname{SE}(3)$. If Markov transitions $q\left(\boldsymbol{x}_{t-1} \mid \boldsymbol{x}_t\right)$ are SE(3)-equivariant, i.e., $q\left(\boldsymbol{x}_{t-1} \mid \boldsymbol{x}_t\right)=q\left(\mathbf{G}\left(\boldsymbol{x}_{t-1}\right) \mid \mathbf{G}\left(\boldsymbol{x}_t\right)\right)$, then we have that the density $q_\theta\left(\boldsymbol{x}_0\right)=\int q\left(\boldsymbol{x}_T\right) q_\theta\left(\boldsymbol{x}_{0: T-1} \mid \boldsymbol{x}_T\right) \mathrm{d} \boldsymbol{x}_{1: T}$ is also $\operatorname{SE}(3)$-invariant.
\end{theorem}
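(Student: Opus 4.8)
The plan is to prove the claim by backward induction along the reverse Markov chain, showing that \emph{every} marginal $q_\theta(\boldsymbol{x}_t)$ is $\operatorname{SE}(3)$-invariant and then reading off the case $t=0$. Write the joint reverse density as $q_\theta(\boldsymbol{x}_{0:T}) = q(\boldsymbol{x}_T)\prod_{t=1}^{T} q_\theta(\boldsymbol{x}_{t-1}\mid \boldsymbol{x}_t)$, so that integrating out all but two consecutive variables gives the one-step recursion $q_\theta(\boldsymbol{x}_{t-1}) = \int q_\theta(\boldsymbol{x}_{t-1}\mid\boldsymbol{x}_t)\, q_\theta(\boldsymbol{x}_t)\,\mathrm{d}\boldsymbol{x}_t$. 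The base case is immediate: $q_\theta(\boldsymbol{x}_T) = q(\boldsymbol{x}_T)$ is $\operatorname{SE}(3)$-invariant by hypothesis.

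For the inductive step, fix $\mathbf{G}\in\operatorname{SE}(3)$ and assume $q_\theta(\mathbf{G}(\boldsymbol{x}_t)) = q_\theta(\boldsymbol{x}_t)$. Apply $\mathbf{G}$ to the argument of the recursion and perform the substitution $\boldsymbol{x}_t \mapsto \mathbf{G}(\boldsymbol{x}_t)$ inside the integral. Two facts make this legitimate: (i) a rigid motion is a bijection of the (center-of-mass-free) configuration space onto itself, and (ii) its Jacobian has determinant $1$ — a rotation contributes $\det\mathbf{R}=1$ and a translation contributes the identity — so Lebesgue measure is preserved, $\mathrm{d}\mathbf{G}(\boldsymbol{x}_t)=\mathrm{d}\boldsymbol{x}_t$. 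After substitution the integrand is $q_\theta(\mathbf{G}(\boldsymbol{x}_{t-1})\mid\mathbf{G}(\boldsymbol{x}_t))\,q_\theta(\mathbf{G}(\boldsymbol{x}_t))$; now I invoke $\operatorname{SE}(3)$-equivariance of the transition kernel, $q_\theta(\mathbf{G}(\boldsymbol{x}_{t-1})\mid\mathbf{G}(\boldsymbol{x}_t))=q_\theta(\boldsymbol{x}_{t-1}\mid\boldsymbol{x}_t)$, together with the induction hypothesis on $q_\theta(\boldsymbol{x}_t)$. The integral then collapses back to $q_\theta(\boldsymbol{x}_{t-1})$, i.e. $q_\theta(\mathbf{G}(\boldsymbol{x}_{t-1}))=q_\theta(\boldsymbol{x}_{t-1})$. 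Iterating from $t=T$ down to $t=1$ gives $\operatorname{SE}(3)$-invariance of $q_\theta(\boldsymbol{x}_0)$, which is precisely the stated density $q_\theta(\boldsymbol{x}_0)=\int q(\boldsymbol{x}_T)\,q_\theta(\boldsymbol{x}_{0:T-1}\mid\boldsymbol{x}_T)\,\mathrm{d}\boldsymbol{x}_{1:T}$.

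The step I would be most careful about — the only genuine subtlety — is the change of variables under the non-compact translation subgroup: a density on the full ambient $\mathbb{R}^{3N}$ cannot be normalized while being translation-invariant, so the statement must be read on the translation quotient (center-of-mass-free subspace), on which rigid motions still act volume-preservingly and the manipulation above is valid. Everything else is routine: the probability chain rule, a change of variables with unit Jacobian, and the assumed equivariance; no further machinery is needed. This is the discrete-time analogue of the continuous-time computation done for Theorem~\ref{thm: q0 and qt invariance} in Appendix~\ref{app: qt invariance}, with the Gaussian forward kernel replaced by a general equivariant reverse kernel.
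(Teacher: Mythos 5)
The paper does not supply a proof of this statement: it is quoted verbatim as Proposition~1 of \cite{xu2022geodiff} in the ``Discussion of related theorems'' paragraph, purely to contrast with the paper's own Theorem~\ref{thm: q0 and qt invariance}, and readers are directed to the cited source for its justification. There is therefore no in-paper argument for me to compare you against.

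On its own merits your backward-induction argument is correct, and it is in substance the proof in the GeoDiff appendix. The base case $q_\theta(\boldsymbol{x}_T)=q(\boldsymbol{x}_T)$, the one-step Chapman--Kolmogorov recursion $q_\theta(\boldsymbol{x}_{t-1})=\int q_\theta(\boldsymbol{x}_{t-1}\mid\boldsymbol{x}_t)\,q_\theta(\boldsymbol{x}_t)\,\mathrm{d}\boldsymbol{x}_t$, the change of variables $\boldsymbol{x}_t\mapsto\mathbf{G}(\boldsymbol{x}_t)$ with unit Jacobian, and the final appeal to kernel equivariance plus the induction hypothesis form exactly the right chain. You are also right to flag translations as the one genuine subtlety: a density on the full $\mathbb{R}^{3N}$ cannot simultaneously be normalized and invariant under the non-compact translation group, so the claim must be read on the zero-center-of-mass quotient. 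One refinement worth stating explicitly: on that quotient the translation part of $\operatorname{SE}(3)$ acts trivially, so the integral substitution in your inductive step is really only a rotation (for which $\det\mathbf{R}=1$ is what makes the Jacobian disappear), while translation-invariance is secured by the quotienting itself rather than by a change of variables. With that reading the proof is complete and agrees with the cited source.
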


\cite{xu2022geodiff} explore the integration of invariance during the sampling process while disregarding it during the forward process. \cite{xu2022geodiff} propose that sampling through equivalent translational kernel results in invariant distributions. In contrast, our Theorem~\ref{thm: q0 and qt invariance} demonstrates that even when the transition probabilities of the Markov chain are not $\sen$-equivariant, the resulting composed distribution can still be $\sen$-invariant. This result offers a stronger conclusion than that presented by \cite{xu2022geodiff}.

\begin{theorem}[Proposition 3.6 in \cite{yim2023se}, Proposition 3.1 in \cite{mathieu2024geometric}] \label{thm: ref se3 theorem}
    Let $\mathcal{G}$ be a Lie group and $\mathcal{H}$ a subgroup of $\mathcal{G}$. Let $\mathbf{X}^{(0)} \sim q_0$ for an $\mathcal{H}$ invariant distribution $q_0$. If $\mathrm{d} \mathbf{X}^{(t)}=b\left(t, \mathbf{X}^{(t)}\right) \mathrm{d} t+$ $\Sigma^{1 / 2} \mathrm{~d} \mathbf{B}^{(t)}$ for bounded, $\mathcal{H}$-equivariant coefficients $b$ and $\Sigma$ satisfying $b \circ L_h=\mathrm{d} L_h(b)$ and $\Sigma \mathrm{d} L_h(\cdot)=\mathrm{d} L_h(\Sigma \cdot)$, and where $\mathbf{B}^{(t)}$ is a Brownian motion associated with a left-invariant metric. Then the distribution $q_t$ of $\mathbf{X}^{(t)}$ is $\mathcal{H}$-invariant.
\end{theorem}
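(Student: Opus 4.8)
The plan is to reduce the statement to \emph{weak uniqueness} for the SDE --- equivalently, well-posedness of the associated martingale problem / forward Kolmogorov equation --- combined with the observation that left translation by an element of $\mathcal{H}$ is a symmetry of the generator. Write $L_h:\mathcal{G}\to\mathcal{G}$ for left translation by $h\in\mathcal{H}$, and let $\mathcal{A}_t$ denote the time-dependent generator of the diffusion,
\begin{equation}
    \mathcal{A}_t f \;=\; \langle b(t,\cdot),\nabla f\rangle \;+\; \tfrac12\,\mathrm{tr}\!\left(\Sigma\,\nabla^2 f\right),
\end{equation}
where $\nabla,\nabla^2$ are the gradient and Hessian of the left-invariant Riemannian metric for which $\mathbf{B}^{(t)}$ is Brownian motion (the pure-noise part being $\tfrac12$ the Laplace--Beltrami operator when $\Sigma=\mathrm{Id}$, any additional first-order term from the anisotropy of $\Sigma$ transforming like the drift below). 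I would first record the standard facts that $L_h$, being an isometry of this metric, preserves its volume form, maps Brownian motion to Brownian motion, and conjugates $\nabla,\nabla^2,\Delta$ to their pushforwards; and that the law $q_t$ is characterized as the unique solution, with initial datum $q_0$, of $\partial_t q_t=\mathcal{A}_t^{*}q_t$.

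The core computation is that $\mathcal{A}_t$ commutes with the pullback $L_h^{*}$, i.e.\ $\mathcal{A}_t(f\circ L_h)=(\mathcal{A}_t f)\circ L_h$ for every smooth $f$. For the drift term, the isometry property gives $\nabla(f\circ L_h)(x)=(dL_h)_x^{-1}\nabla f(L_h x)$, whence
\begin{equation}
    \langle b(t,x),\nabla(f\circ L_h)(x)\rangle_x = \langle (dL_h)_x\,b(t,x),\nabla f(L_h x)\rangle_{L_h x} = \langle b(t,L_h x),\nabla f(L_h x)\rangle_{L_h x},
\end{equation}
using the isometry in the first step and the equivariance hypothesis $b\circ L_h=dL_h(b)$ in the second. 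For the diffusion term, $L_h$ is affine for the Levi--Civita connection, so $\nabla^2(f\circ L_h)(x)=(dL_h)_x^{*}\,\nabla^2 f(L_h x)\,(dL_h)_x$; inserting this together with $\Sigma\,dL_h(\cdot)=dL_h(\Sigma\,\cdot)$ into the trace and using cyclicity and $(dL_h)_x^{*}=(dL_h)_x^{-1}$ yields $\mathrm{tr}(\Sigma(x)\nabla^2(f\circ L_h)(x))=\mathrm{tr}(\Sigma(L_h x)\nabla^2 f(L_h x))$. Hence $\mathcal{A}_t(f\circ L_h)=(\mathcal{A}_t f)\circ L_h$, and dually --- since $L_h$ preserves the reference volume --- the adjoint $\mathcal{A}_t^{*}$ commutes with the pushforward $(L_h)_{*}$ on densities.

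The argument then closes in two lines: if $q_t$ solves $\partial_t q_t=\mathcal{A}_t^{*}q_t$ with initial datum $q_0$, then $(L_h)_{*}q_t$ solves the same equation, with initial datum $(L_h)_{*}q_0=q_0$ by $\mathcal{H}$-invariance of $q_0$; uniqueness forces $(L_h)_{*}q_t=q_t$ for all $t$ and all $h\in\mathcal{H}$, i.e.\ $q_t$ is $\mathcal{H}$-invariant. (Equivalently, pathwise: $L_h(\mathbf{X}^{(t)})$ solves the same SDE driven by the image Brownian motion $L_h\circ\mathbf{B}^{(t)}$ and started from law $q_0$, so weak uniqueness identifies its law with that of $\mathbf{X}^{(t)}$.) The main obstacle is not this symmetry bookkeeping --- the hypotheses on $b$ and $\Sigma$ are precisely what make the coefficients $L_h$-invariant, and left-invariance of the metric handles the noise --- but the analytic underpinning: giving the SDE on the manifold $\mathcal{G}$ rigorous meaning (e.g.\ via the orthonormal frame bundle / Eells--Elworthy--Malliavin construction of the Brownian part), verifying that an isometry genuinely conjugates that construction to the one based at $L_h(\mathbf{X}^{(t)})$, and invoking a uniqueness-in-law theorem for the martingale problem that is valid under the stated boundedness (and implicit regularity/nondegeneracy) of $b,\Sigma$ on a manifold. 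Once that machinery is granted, the theorem reduces to the generator-commutation computation above.
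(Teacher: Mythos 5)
The paper does not actually prove this statement: it is quoted verbatim from \cite{yim2023se} (their Proposition~3.6) purely to contrast its hypotheses with those of Theorem~\ref{thm: q0 and qt invariance}, so there is no in-paper proof to compare against. Your sketch is nevertheless the right argument, and it is essentially the same one used in the cited source, just phrased on the dual side: Yim et al.\ argue pathwise that $L_h(\mathbf{X}^{(t)})$ solves the same SDE started from $q_0$ (using coefficient equivariance plus the fact that left translation maps Brownian motion for a left-invariant metric to Brownian motion) and then invoke uniqueness in law, which is exactly your parenthetical remark; your main line of attack instead verifies $\mathcal{A}_t(f\circ L_h)=(\mathcal{A}_t f)\circ L_h$ and pushes the symmetry through the Fokker--Planck equation. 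The generator computation is correct: $\nabla(f\circ L_h)=(dL_h)^{-1}\nabla f\circ L_h$ for an isometry, the drift term closes using $b\circ L_h=dL_h(b)$, and the trace term closes using $\Sigma\,dL_h(\cdot)=dL_h(\Sigma\,\cdot)$ together with the fact that an isometry is affine for the Levi--Civita connection. The one genuine dependency you should not wave away is the uniqueness step: boundedness of $b$ and $\Sigma$ alone does not give uniqueness in law (or uniqueness for the forward equation) on a manifold, so the argument implicitly imports the regularity/nondegeneracy assumptions under which the martingale problem is well posed --- you flag this honestly, and the cited source likewise assumes it. With that caveat recorded, the proposal is a correct proof of the stated result.
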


In contrast to our Theorem~\ref{thm: q0 and qt invariance}, Theorem~\ref{thm: ref se3 theorem} in \cite{yim2023se} and \cite{mathieu2024geometric} imposes specific conditions on the relationship between the forward diffusion scheduler and the group operators, whereas our theorem does not. However, \cite{yim2023se} impose fewer constraints on the properties of the group operations.

\subsection{Invariant distribution examples} \label{app: invariant distribution examples}
\subsubsection{\texorpdfstring{$\sen$-invariant distribution}{SE(n)-invariant distribution}} If $q_0$ is an $\sen$-invariant distribution, then $q_t$ is also $\sen$-invariant. 
\begin{proof}
    Given any $\mathbf{G} \in \sen$, let $\mathbf{G}(\boldsymbol{x}) = \mathbf{R}\boldsymbol{x} + \mathbf{b}$, where $\mathbf{R} \in \operatorname{SO}(n), \mathbf{b} \in \mathbb{R}^n$. 
    \begin{itemize}
        \item \textbf{volume-preserving}: $\det \left( \frac{\mathrm{d} \mathbf{G}(\boldsymbol{x})}{\mathrm{d} \boldsymbol{x}} \right) = \det \left( \mathbf{R}^\top \right) = 1$.
        \item \textbf{diffeomorphism}: \emph{smoothness}: The transformation $\mathbf{G}(\boldsymbol{x}) = \mathbf{R}\boldsymbol{x} + \mathbf{b}$ is smooth because it involves linear operations (rotation and translation) that are smooth. Specifically, the rotation $\mathbf{R}$ and the translation $\mathbf{b}$ are smooth functions of their parameters; \emph{bijectivity}: The function $\mathbf{G}(\boldsymbol{x})$ is bijective. For any $\boldsymbol{x} \in \mathbb{R}^n$, the function $\mathbf{G}(\boldsymbol{x})$ is one-to-one and onto. The inverse function is given by: $\mathbf{G}^{-1}(\boldsymbol{y}) = \mathbf{R}^{-1} \left( \boldsymbol{y} - \mathbf{b} \right)$, where $\boldsymbol{y} \in \mathbb{R}^n$. Since $\mathbf{R}$ is a rotation matrix, it is invertible, and its inverse $\mathbf{R}^{-1}$ is also smooth. Therefore, the inverse function is smooth.
        \item \textbf{isometry}: for all $\boldsymbol{x}, \boldsymbol{y} \in \mathbb{R}^n, \| \mathbf{G}(\boldsymbol{x}) - \mathbf{G}(\boldsymbol{y}) \|^2 = \| \mathbf{R}\boldsymbol{x} + \mathbf{b} - \left( \mathbf{R}\boldsymbol{y} + \mathbf{b} \right) \|^2 = \left( \boldsymbol{x} - \boldsymbol{y} \right)^\top \mathbf{R}^\top \mathbf{R}\left( \boldsymbol{x} - \boldsymbol{y} \right) = \| \boldsymbol{x} - \boldsymbol{y} \|^2$. Hence, $\| \mathbf{G}(\boldsymbol{x}) - \mathbf{G}(\boldsymbol{y}) \| = \| \boldsymbol{x} - \boldsymbol{y} \|$.
        \item \textbf{homothety}: Given any $\mathbf{G} \in \sen$ and $0 < a < 1$, let $\mathbf{H}(\boldsymbol{x}) = \mathbf{R}\boldsymbol{x} + a \mathbf{b} \in \sen$. Then, $\mathbf{H}(a \boldsymbol{x}) = \mathbf{R}\left( a \boldsymbol{x} \right) + a \mathbf{b} = a \left( \mathbf{R} \boldsymbol{x} + \mathbf{b} \right) = a \mathbf{G}(\boldsymbol{x})$.
    \end{itemize}
    Hence, sufficient conditions are satisfied and $q_t$ is also $\sen$-invariant. 
\end{proof}

Let $q$ be an $\sen$-invariant distribution. Given a set of $m$ points $\mathcal{C} \in \mathbb{R}^{n \times m}$, we write it in the vector form $\boldsymbol{x} \defeq \veco (\mathcal{C}) \in \mathbb{R}^{mn}$. For any $\mathbf{G}(\mathcal{C}) = \mathbf{R} \mathcal{C} + \mathbf{b}, \mathbf{R} \in \operatorname{SO}(n)$, let $\hat{\mathbf{R}} \in \mathbb{R}^{mn \times mn}$ be a block matrix with $\mathbf{R}$ on its diagonal block and $\hat{\mathbf{b}} = [\mathbf{b}^\top \cdots, \mathbf{b}^\top]^\top \in \mathbb{R}^{mn}$. We have $\hat{\mathbf{G}}(\boldsymbol{x}) = \hat{\mathbf{R}} \boldsymbol{x} + \hat{\mathbf{b}}$. Then, $(\nabla_{\boldsymbol{x}} \hat{\mathbf{G}}(\boldsymbol{x}))^{-1} = (\hat{\mathbf{R}}^\top)^{-1} = \hat{\mathbf{R}}$. Hence, $\nabla_{\hat{\mathbf{G}}(\boldsymbol{x})} \log q(\hat{\mathbf{G}}(\boldsymbol{x})) = \hat{\mathbf{R}} \nabla_{\boldsymbol{x}} \log q(\boldsymbol{x})$, which imples $\nabla_{\mathbf{G}(\mathcal{C})} \log q(\mathbf{G}(\mathcal{C})) = \mathbf{R} \nabla_{\mathcal{C}} \log q(\mathcal{C})$. Thus, the score function of an $\sen$-invariant distribution is $\operatorname{SO(n)}$-equivariant and translational-invariant.

\subsubsection{Permutation-invariant distribution}
We first list some useful properties of the Kronecker product:
\begin{itemize}
    \item $\veco (\mathbf{A} \mathbf{X} \mathbf{B})=\left(\mathbf{B}^T \kron \mathbf{A}\right) \veco (\mathbf{X})$.
    
    \item $\det\left( \mathbf{A} \kron \mathbf{B} \right) = \det \left( \mathbf{A} \right)^m \det \left( \mathbf{B} \right)^n$, where $\mathbf{A} \in \mathbb{R}^{m \times m}, \mathbf{B} \in \mathbb{R}^{n \times n}$.
    
    \item $(\mathbf{A} \kron \mathbf{B})^T=\mathbf{A}^T \kron \mathbf{B}^T$.
    
    \item $(\mathbf{A} \kron \mathbf{B})(\mathbf{C} \kron \mathbf{D})=(\mathbf{A} \mathbf{C}) \kron(\mathbf{B} \mathbf{D})$.
    
    \item For square nonsingular matrices $\mathbf{A}$ and $\mathbf{B}$: $(\mathbf{A} \kron \mathbf{B})^{-1}=\mathbf{A}^{-1} \kron \mathbf{B}^{-1}$.
\end{itemize}

If $q_0$ is a permutation-invariant, then $q_t$ is also permutation-invariant. 
\begin{proof}
    Given any $\mathbf{G} \in \mathcal{G}$ with $\mathbf{G}(\mathbf{X}) = \mathbf{P} \mathbf{X} \mathbf{P}^\top$, we consider its vector form of $\veco\left( \mathbf{G}(\mathbf{X}) \right) = \veco \left( \mathbf{P} \mathbf{X} \mathbf{P}^\top \right) = \left( \mathbf{P} \kron \mathbf{P} \right) \veco \left( \mathbf{X} \right)$.
    \begin{itemize}
        \item \textbf{volume-preserving}: $\det \left( \frac{\mathrm{d} \veco\left( \mathbf{G}(\mathbf{X}) \right)}{\mathrm{d} \veco \left( \mathbf{X} \right)} \right) = \det \left( \left( \mathbf{P} \kron \mathbf{P} \right)^\top \right) = \det \left( \mathbf{P} \kron \mathbf{P} \right) = \det \left( \mathbf{P} \right)^n \det \left( \mathbf{P} \right)^n = \det \left( \mathbf{P} \right)^{2n} = \left( \pm 1 \right)^{2n} = 1$.
        
        \item \textbf{diffeomorphism}: \emph{smoothness}: $\mathbf{G}$ is smooth because it involves matrix multiplication, which is a smooth operation in $\mathbf{X}$. Since $\mathbf{P}$ and $\mathbf{P}^\top$ are constant matrices (not functions of $\mathbf{X}$), $\mathbf{G}$ inherits the smoothness from the matrix operations; \emph{bijectivity}: Suppose $\mathbf{Y} = \mathbf{G}(\mathbf{X}) = \mathbf{P} \mathbf{X} \mathbf{P}^\top$. To recover $\mathbf{X}$ from $\mathbf{Y}$, we compute: $\mathbf{X} = \mathbf{P}^\top \mathbf{Y} \mathbf{P}$, which is a valid operation because $\mathbf{P}$ is invertible.
        
        \item \textbf{isometry}: note that $\left( \mathbf{P} \kron \mathbf{P} \right)^\top \left( \mathbf{P} \kron \mathbf{P} \right) = \left( \mathbf{P}^\top \kron \mathbf{P}^\top \right) \left( \mathbf{P} \kron \mathbf{P} \right) = \left( \mathbf{P}^\top \mathbf{P} \right) \kron \left( \mathbf{P}^\top \mathbf{P} \right) = \mathbf{I} \kron \mathbf{I} = \mathbf{I}$. For all $\mathbf{X}, \mathbf{Y} \in \mathbb{R}^{n \times n}$,
        \begin{subequations}
            \begin{align}
                & \quad \| \veco\left( \mathbf{G}(\mathbf{X}) \right) - \veco\left( \mathbf{G}(\mathbf{Y}) \right) \|^2 \\
                &= \| \left( \mathbf{P} \kron \mathbf{P} \right) \veco \left( \mathbf{X} \right) - \left( \mathbf{P} \kron \mathbf{P} \right) \veco \left( \mathbf{Y} \right) \|^2 \\
                &= \left( \veco \left( \mathbf{X} \right) - \veco \left( \mathbf{Y} \right) \right)^\top \left( \mathbf{P} \kron \mathbf{P} \right)^\top \left( \mathbf{P} \kron \mathbf{P} \right) \left( \veco \left( \mathbf{X} \right) - \veco \left( \mathbf{Y} \right) \right) \\
                &= \left( \veco \left( \mathbf{X} \right) - \veco \left( \mathbf{Y} \right) \right)^\top \mathbf{I} \left( \veco \left( \mathbf{X} \right) - \veco \left( \mathbf{Y} \right) \right) \\
                &= \| \veco \left( \mathbf{X} \right) - \veco \left( \mathbf{Y} \right) \|^2
            \end{align}
        \end{subequations}
        Hence, $\| \mathbf{G}(\mathbf{X}) - \mathbf{G}(\mathbf{Y}) \|_{\textsubscript{F}} = \| \mathbf{X} - \mathbf{Y} \|_{\textsubscript{F}}$.
        
        \item \textbf{homothety}: Given any $\mathbf{G} \in \mathcal{G}$ and $0 < a < 1$, let $\mathbf{H} = \mathbf{G} \in \mathcal{G}$. Then, $\mathbf{H}(a \mathbf{X}) = \mathbf{P}\left( a \mathbf{X} \right)\mathbf{P}^\top = a \mathbf{P} \mathbf{X} \mathbf{P}^\top = a \mathbf{G}(\mathbf{X})$.
    \end{itemize}
    Hence, sufficient conditions are satisfied and $q_t$ is also permutation-invariant. 
\end{proof}

Let $q$ be a permutation-invariant distribution of feature $\mathbf{X} \in \mathbb{R}^{n \times n}$ such as affinity/connectivity matrices representing relationships or connections between pairs of entities (e.g., nodes in a graph) or Gram matrices in kernel methods representing similarities between a set of vectors. Let $q(\mathbf{\mathbf{P}} \mathbf{X} \mathbf{P}^\top) = q(\mathbf{X})$ for any permutational matrix $\mathbf{P}$. Consider the vectorization of $\mathbf{X}$. Let $\hat{q}(\veco (\mathbf{X})) \defeq q (\mathbf{X})$.  Note that $\veco (\mathbf{P} \mathbf{X} \mathbf{P}^\top) = (\mathbf{P} \kron \mathbf{P}) \veco(\mathbf{X})$. Hence, $\nabla_{\veco (\mathbf{P} \mathbf{X} \mathbf{P}^\top)} \log \hat{q}(\veco (\mathbf{P} \mathbf{X} \mathbf{P}^\top)) = (\mathbf{P} \kron \mathbf{P})^{-T} \nabla_{\veco(\mathbf{X})} \log \hat{q}(\veco(\mathbf{X})) = (\mathbf{P} \kron \mathbf{P}) \nabla_{\veco(\mathbf{X})} \log \hat{q}(\veco(\mathbf{X}))$. This implies that $\nabla_{\mathbf{P} \mathbf{X} \mathbf{P}^\top} \log q(\mathbf{P} \mathbf{X} \mathbf{P}^\top) = \mathbf{P} \nabla_{\mathbf{X}} \log q(\mathbf{X}) \mathbf{P}^\top$. Thus, the score function of a permutation-invariant distribution is permutation-equivariant.

\subsection{ECM equivalence} \label{app: ECM equivalence}
\thmECMequi*

\begin{proof} \label{proof: ECM equivalence}
    Suppose we have a $(\mathcal{G}, \nabla^{-1})$-equivariant model $\mathbf{s}_{\boldsymbol{\theta}}$ and $\mathbf{s}_{\boldsymbol{\theta}} (\varphi(\boldsymbol{x})) = \nabla_{\varphi(\boldsymbol{x})} \log q_{\textsubscript{ECM}}(\varphi(\boldsymbol{x}))$ almost surely. Then, we have
    \begin{subequations} 
        \begin{alignat}{2}
            \mathbf{s}_{\boldsymbol{\theta}} (\boldsymbol{x}) &= \frac{\partial \varphi(\boldsymbol{x})}{\partial \boldsymbol{x}} \mathbf{s}_{\boldsymbol{\theta}} (\varphi(\boldsymbol{x})) & \text{by equivariance of the model} \\
            &= \frac{\partial \varphi(\boldsymbol{x})}{\partial \boldsymbol{x}} \nabla_{\varphi(\boldsymbol{x})} \log q_{\textsubscript{ECM}}(\varphi(\boldsymbol{x})) & \\
            &= \nabla_{\boldsymbol{x}} \log q_t(\boldsymbol{x}) & \text{by~\Eqref{eq: score essential}}
        \end{alignat}
    \end{subequations}
\end{proof}

\subsection{Multilinear Jensen's gap} \label{app: multilinear jensen}

The following lemma is directly from the results of the optimal values of noise matching, which will be used in proving Theorem.~\ref{thm: multilinear jensen}.

\begin{lemma}
    The gradient of $\mathbb{E}_{t, \boldsymbol{x}_0, \boldsymbol{\epsilon}} [ w(t) \| \boldsymbol{\epsilon}_{\boldsymbol{\theta}}\left(\boldsymbol{x}_t, t\right)-\boldsymbol{\epsilon} \|^2]$ w.r.t. $\boldsymbol{\theta}$ at $\boldsymbol{\epsilon}_{\boldsymbol{\theta}}\left(\boldsymbol{x}_t, t\right) = -\sigma_t \nabla_{\boldsymbol{x}} \log q_t\left(\boldsymbol{x}_t\right)$ equals $\mathbf{0}$.
\end{lemma}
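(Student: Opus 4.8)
The plan is to differentiate the noise-matching objective in $\boldsymbol{\theta}$, move the gradient inside the expectation, and then use the tower property to condition on $(\boldsymbol{x}_t,t)$. Write $\mathcal{J}_{\textsubscript{noise}}(\boldsymbol{\theta}) = \mathbb{E}_{t,\boldsymbol{x}_0,\boldsymbol{\epsilon}}\big[ w(t)\, \|\boldsymbol{\epsilon}_{\boldsymbol{\theta}}(\boldsymbol{x}_t,t) - \boldsymbol{\epsilon}\|^2 \big]$, where $\boldsymbol{x}_t = \alpha_t \boldsymbol{x}_0 + \sigma_t \boldsymbol{\epsilon}$ as in \Eqref{eq: diffusion forward SDE}. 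Under the mild regularity needed to exchange $\nabla_{\boldsymbol{\theta}}$ with the expectation (e.g.\ $\boldsymbol{\epsilon}_{\boldsymbol{\theta}}$ is $C^1$ in $\boldsymbol{\theta}$ with a locally integrable dominating bound on the integrand and its $\boldsymbol{\theta}$-derivative), differentiation yields
\begin{equation*}
\nabla_{\boldsymbol{\theta}} \mathcal{J}_{\textsubscript{noise}}(\boldsymbol{\theta}) \;=\; \mathbb{E}_{t,\boldsymbol{x}_0,\boldsymbol{\epsilon}}\!\left[ 2\, w(t) \left(\frac{\partial \boldsymbol{\epsilon}_{\boldsymbol{\theta}}(\boldsymbol{x}_t,t)}{\partial \boldsymbol{\theta}}\right)^{\!\top}\!\big(\boldsymbol{\epsilon}_{\boldsymbol{\theta}}(\boldsymbol{x}_t,t) - \boldsymbol{\epsilon}\big) \right],
\end{equation*}
with $\partial \boldsymbol{\epsilon}_{\boldsymbol{\theta}}/\partial\boldsymbol{\theta}$ the parameter-Jacobian of the model output.

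Next I would condition on the pair $(\boldsymbol{x}_t,t)$. Both $\boldsymbol{\epsilon}_{\boldsymbol{\theta}}(\boldsymbol{x}_t,t)$ and its parameter-Jacobian are deterministic functions of $(\boldsymbol{x}_t,t)$, so they factor out of the inner conditional expectation, leaving
\begin{equation*}
\nabla_{\boldsymbol{\theta}} \mathcal{J}_{\textsubscript{noise}}(\boldsymbol{\theta}) \;=\; \mathbb{E}_{t,\boldsymbol{x}_t}\!\left[ 2\, w(t) \left(\frac{\partial \boldsymbol{\epsilon}_{\boldsymbol{\theta}}(\boldsymbol{x}_t,t)}{\partial \boldsymbol{\theta}}\right)^{\!\top}\!\big(\boldsymbol{\epsilon}_{\boldsymbol{\theta}}(\boldsymbol{x}_t,t) - \mathbb{E}[\boldsymbol{\epsilon}\mid\boldsymbol{x}_t,t]\big) \right].
\end{equation*}
It then remains to identify $\mathbb{E}[\boldsymbol{\epsilon}\mid\boldsymbol{x}_t,t]$: from $\boldsymbol{\epsilon} = (\boldsymbol{x}_t - \alpha_t\boldsymbol{x}_0)/\sigma_t$ combined with Tweedie's formula $\mathbb{E}[\boldsymbol{x}_0\mid\boldsymbol{x}_t] = \tfrac{1}{\alpha_t}(\boldsymbol{x}_t + \sigma_t^2\nabla_{\boldsymbol{x}}\log q_t(\boldsymbol{x}_t))$ one gets $\mathbb{E}[\boldsymbol{\epsilon}\mid\boldsymbol{x}_t,t] = \tfrac{1}{\sigma_t}\big(\boldsymbol{x}_t - \alpha_t\mathbb{E}[\boldsymbol{x}_0\mid\boldsymbol{x}_t]\big) = -\sigma_t\nabla_{\boldsymbol{x}}\log q_t(\boldsymbol{x}_t)$, which is precisely the optimal noise predictor $\boldsymbol{\epsilon}_{\boldsymbol{\theta}}^{*}$ recorded in \Eqref{eq: noise predictor}.

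Finally, substituting the hypothesis $\boldsymbol{\epsilon}_{\boldsymbol{\theta}}(\boldsymbol{x}_t,t) = -\sigma_t\nabla_{\boldsymbol{x}}\log q_t(\boldsymbol{x}_t)$ (read as holding for a.e.\ $(\boldsymbol{x}_t,t)$ in the support, i.e.\ the model attains the optimal predictor) makes the factor $\boldsymbol{\epsilon}_{\boldsymbol{\theta}}(\boldsymbol{x}_t,t) - \mathbb{E}[\boldsymbol{\epsilon}\mid\boldsymbol{x}_t,t]$ vanish almost everywhere, so the integrand is identically $\mathbf{0}$ and hence $\nabla_{\boldsymbol{\theta}}\mathcal{J}_{\textsubscript{noise}}(\boldsymbol{\theta}) = \mathbf{0}$. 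I expect the only genuinely delicate points to be the justification of differentiating under the expectation and the measurability bookkeeping that allows the $\boldsymbol{\theta}$-Jacobian to be pulled out of the conditional expectation in the second step; the remaining algebra is routine, and the identification of $\mathbb{E}[\boldsymbol{\epsilon}\mid\boldsymbol{x}_t,t]$ is a direct restatement of the optimal-value formula already established for noise matching.
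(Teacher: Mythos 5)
Your proof is correct, and it is in fact more self-contained than what the paper offers: the paper does not give a proof at all, but simply notes that the lemma follows directly from the established optimal-value formula for noise matching (\Eqref{eq: noise predictor}), i.e.\ that $\boldsymbol{\epsilon}^*_{\boldsymbol{\theta}}(\boldsymbol{x}_t,t)=-\sigma_t\nabla_{\boldsymbol{x}}\log q_t(\boldsymbol{x}_t)$ is the minimizer, so the gradient must vanish there. Your argument instead derives the vanishing gradient from first principles: differentiate under the expectation, condition on $(\boldsymbol{x}_t,t)$ so the parameter-Jacobian factors out, identify $\mathbb{E}[\boldsymbol{\epsilon}\mid\boldsymbol{x}_t,t]=-\sigma_t\nabla_{\boldsymbol{x}}\log q_t(\boldsymbol{x}_t)$ via Tweedie, and conclude the integrand is identically zero. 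What this buys you is that you never need to invoke the general fact that ``gradient vanishes at a minimizer'' in a function space realized by a parametrization (which would implicitly require the parametrized family to locally cover a neighborhood of the optimal function); you show directly that every conditional residual $\boldsymbol{\epsilon}_{\boldsymbol{\theta}}-\mathbb{E}[\boldsymbol{\epsilon}\mid\boldsymbol{x}_t,t]$ is zero under the hypothesis, so the gradient vanishes term by term regardless of the parametrization. The paper's one-line justification buys brevity; your version makes explicit the conditioning step and the precise sense (a.e.\ on the support) in which the hypothesis must hold, at the cost of the usual regularity assumptions for differentiating under the integral sign.
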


\multilinearjensen*

\begin{proof} \label{proof: multilinear jensen}
Without loss of generality, suppose that the optimizer of $\mathbf{u}_{\boldsymbol{\theta}_{2}}$ is given by $\mathbf{u}_{\boldsymbol{\theta}_{1}^{*}} + \mathbf{u}_{\Delta \boldsymbol{\theta}}$. The loss optimizer of data matching is given by $\mathbf{u}_{\boldsymbol{\theta}_1^{*}} \left( \boldsymbol{x}_t, t\right) = \frac{1}{\alpha_t} \left( \boldsymbol{u}_t + \sigma^2_t \nabla_{\boldsymbol{u}} \log q_t\left(\boldsymbol{x}_t\right) \right)$. Substituting into the PDE loss term, we have 
\begin{subequations}
    \begin{align}
        & \quad \mathbb{E}_{t, \boldsymbol{x}_0, \boldsymbol{\epsilon}} [ w(t) \| \mathbf{W}_{0} \mathbf{u}_{\boldsymbol{\theta}_2} \left( \boldsymbol{x}_t, t\right) + \mathbf{b}_{0} \|^2] \\
        &= \mathbb{E}_{t, \boldsymbol{x}_0, \boldsymbol{\epsilon}} [ w(t) \| \frac{1}{\alpha_t} \mathbf{W}_{0} \left( \boldsymbol{u}_t + \sigma^2_t \nabla_{\boldsymbol{u}} \log q_t\left(\boldsymbol{x}_t\right) \right) + \mathbf{W}_{0} \mathbf{u}_{\Delta \boldsymbol{\theta}} + \mathbf{b}_{0} \|^2] \\
        &= \mathbb{E}_{t, \boldsymbol{x}_0, \boldsymbol{\epsilon}} [ w(t) \| \frac{1}{\alpha_t} \mathbf{W}_{0} \left( \alpha_t \boldsymbol{u}_0 + \sigma_t \boldsymbol{\epsilon} + \sigma^2_t \nabla_{\boldsymbol{u}} \log q_t\left(\boldsymbol{x}_t\right) \right) + \mathbf{W}_{0} \mathbf{u}_{\Delta \boldsymbol{\theta}} + \mathbf{b}_{0} \|^2] \\
        &= \mathbb{E}_{t, \boldsymbol{x}_0, \boldsymbol{\epsilon}} [ w(t) \| \mathbf{W}_{0} \boldsymbol{u}_0 + \mathbf{b}_{0} + \frac{\sigma_t}{\alpha_t} \mathbf{W}_{0} \boldsymbol{\epsilon} + \frac{\sigma^2_t}{\alpha_t} \mathbf{W}_{0} \nabla_{\boldsymbol{u}} \log q_t\left(\boldsymbol{x}_t\right) + \mathbf{W}_{0} \mathbf{u}_{\Delta \boldsymbol{\theta}} \|^2] \\
         &= \mathbb{E}_{t, \boldsymbol{x}_0, \boldsymbol{\epsilon}} [ w(t) \| \frac{\sigma_t}{\alpha_t} \mathbf{W}_{0} \boldsymbol{\epsilon} + \frac{\sigma^2_t}{\alpha_t} \mathbf{W}_{0} \nabla_{\boldsymbol{u}} \log q_t\left(\boldsymbol{x}_t\right) + \mathbf{W}_{0} \mathbf{u}_{\Delta \boldsymbol{\theta}} \|^2] \\
         &= \mathbb{E}_{t, \boldsymbol{x}_0, \boldsymbol{\epsilon}} [ w(t) \frac{\sigma_t^2}{\alpha_t^2} \| \mathbf{W}_{0} \left( \boldsymbol{\epsilon} + \sigma_t \nabla_{\boldsymbol{u}} \log q_t\left(\boldsymbol{x}_t\right) + \frac{\alpha_t}{\sigma_t} \mathbf{u}_{\Delta \boldsymbol{\theta}} \right) \|^2]
    \end{align}
\end{subequations}
Dropping the reweighting term $\sigma_t^2 / \alpha_t^2$ does not change the optimal solution. When $\mathbf{u}_{\Delta \boldsymbol{\theta}} \equiv \mathbf{0}$, observing the above objective and the noise matching objective, the above objective is the reweighted objective of noise matching by replacing $\mathbf{I}$ with $\mathbf{W}_{0}^\top \mathbf{W}_{0}$.
\end{proof}

\section{Visualization of generated samples} \label{app: generated sample visualization}
In this study, we refrain from applying super-resolution and denoising procedures, which are essential in practical applications for generating high-quality, clear, and detailed images from diffusion models. Consequently, the generated samples contain noise. For the three-body dataset, since we only generate 10 frames, we apply the cubic spline to visualize a smooth trajectory and this is not applied when evaluating the quality of samples.

\begin{figure}[ht]
    \centering
    \includegraphics[width=.45\linewidth, trim={90 0 90 0, clip}]{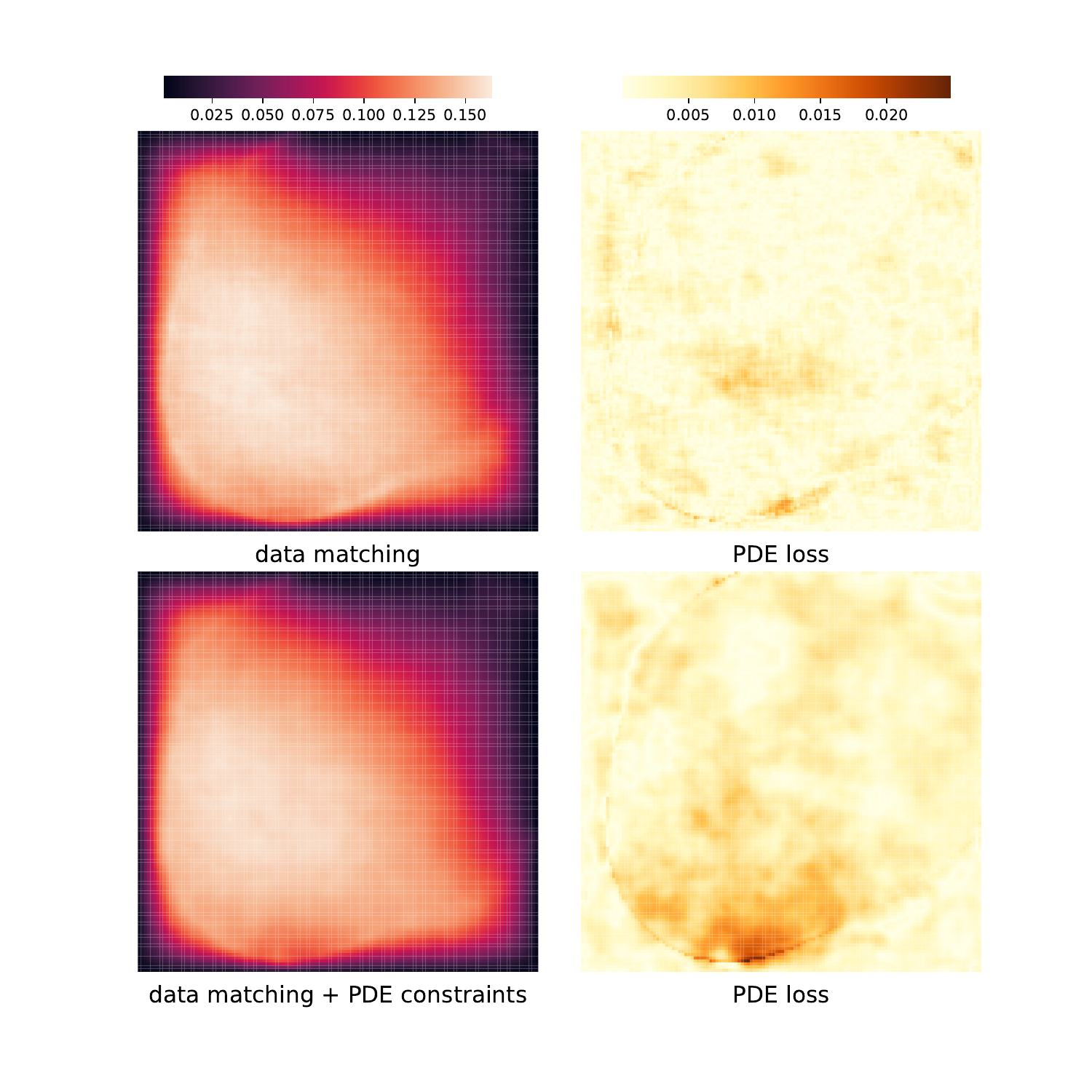} \hfill
    \includegraphics[width=.45\linewidth, trim={90 0 90 0, clip}]{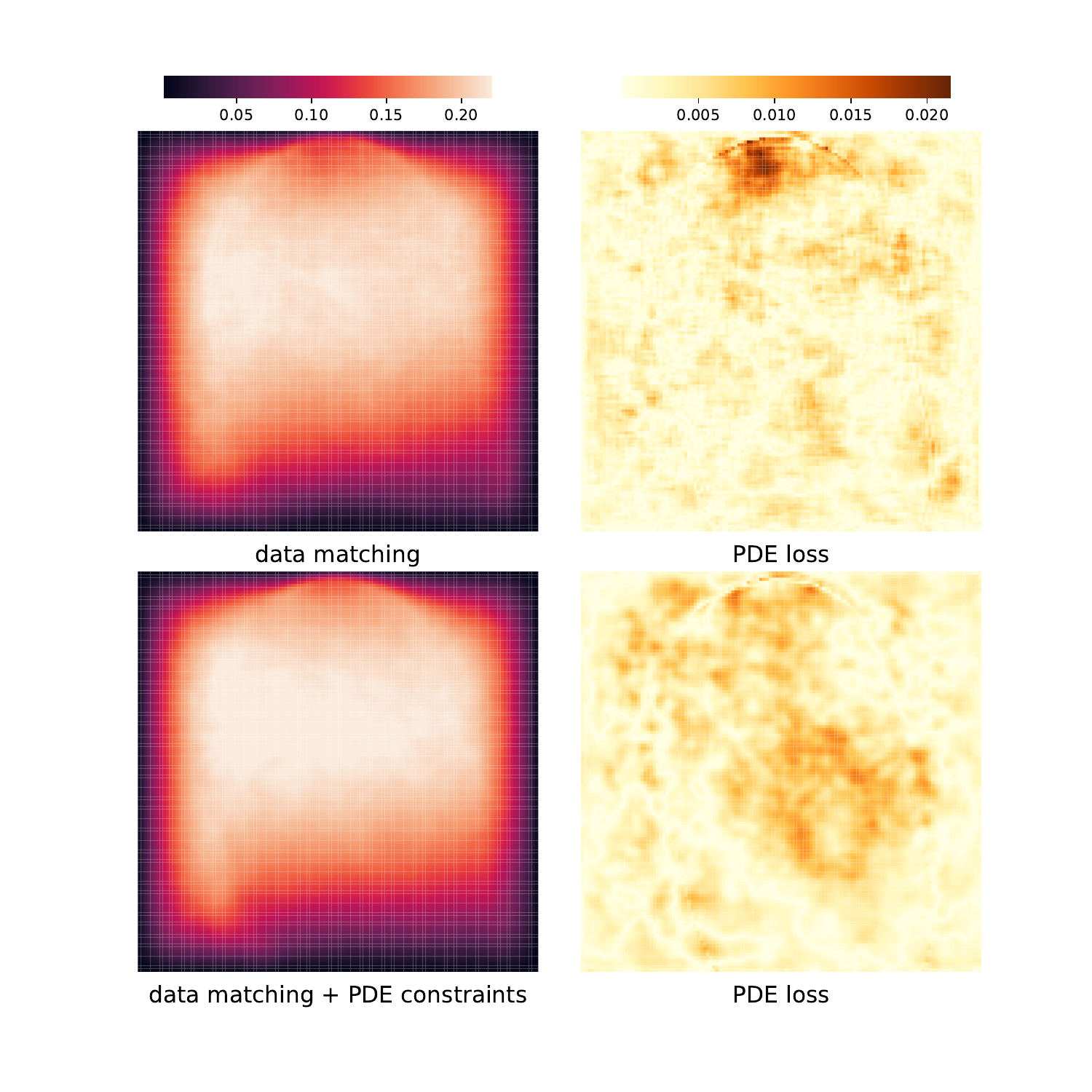} \hfill
    \includegraphics[width=.45\linewidth, trim={90 0 90 0, clip}]{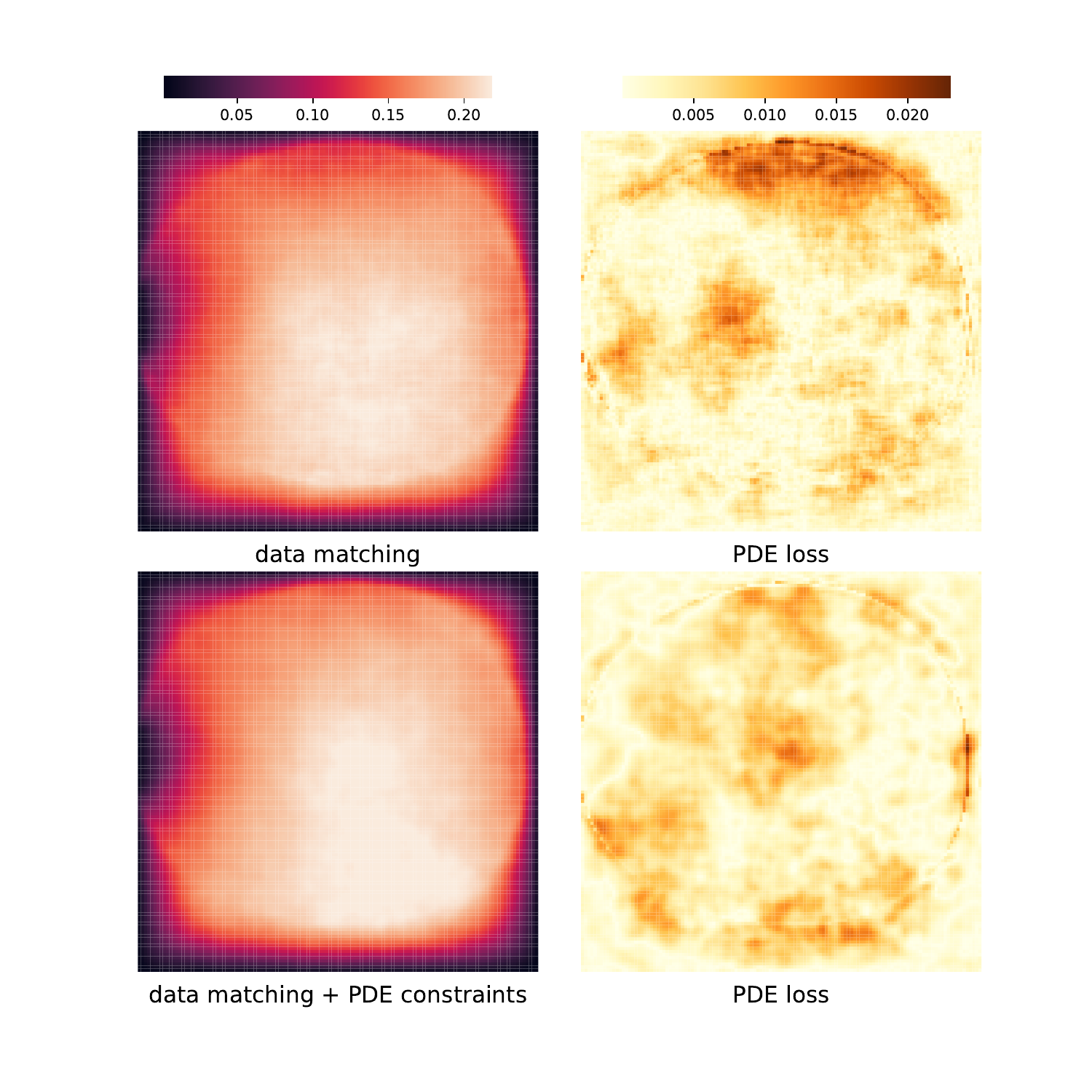} \hfill
    \includegraphics[width=.45\linewidth, trim={90 0 90 0, clip}]{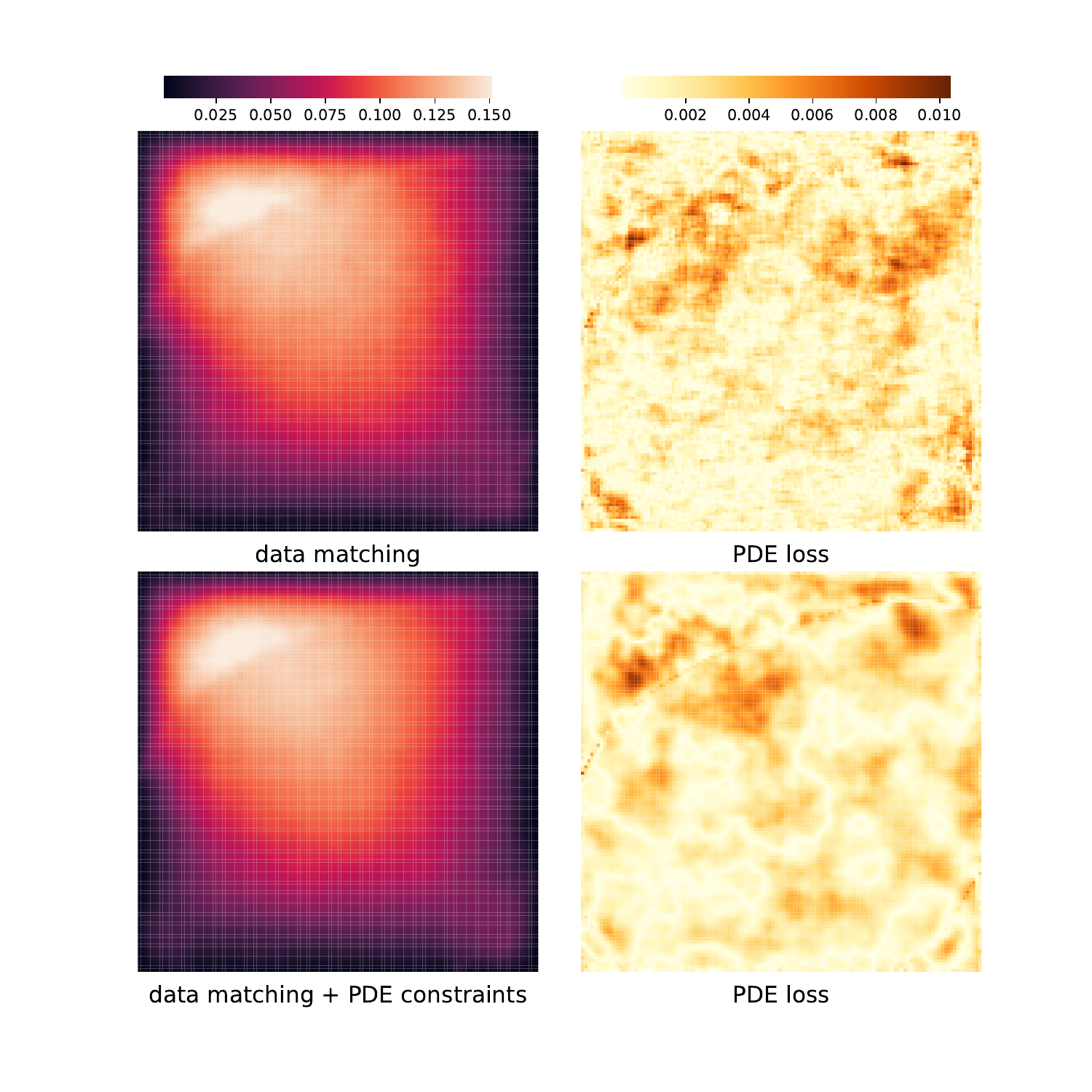} \hfill
\caption{Examples of Darcy flow samples generated by models trained with/without PDE constraints.}
\label{fig: darcy visulization}
\end{figure}


\begin{figure}[ht]
    \centering
    \includegraphics[width=.45\linewidth, trim={90 0 90 0, clip}]{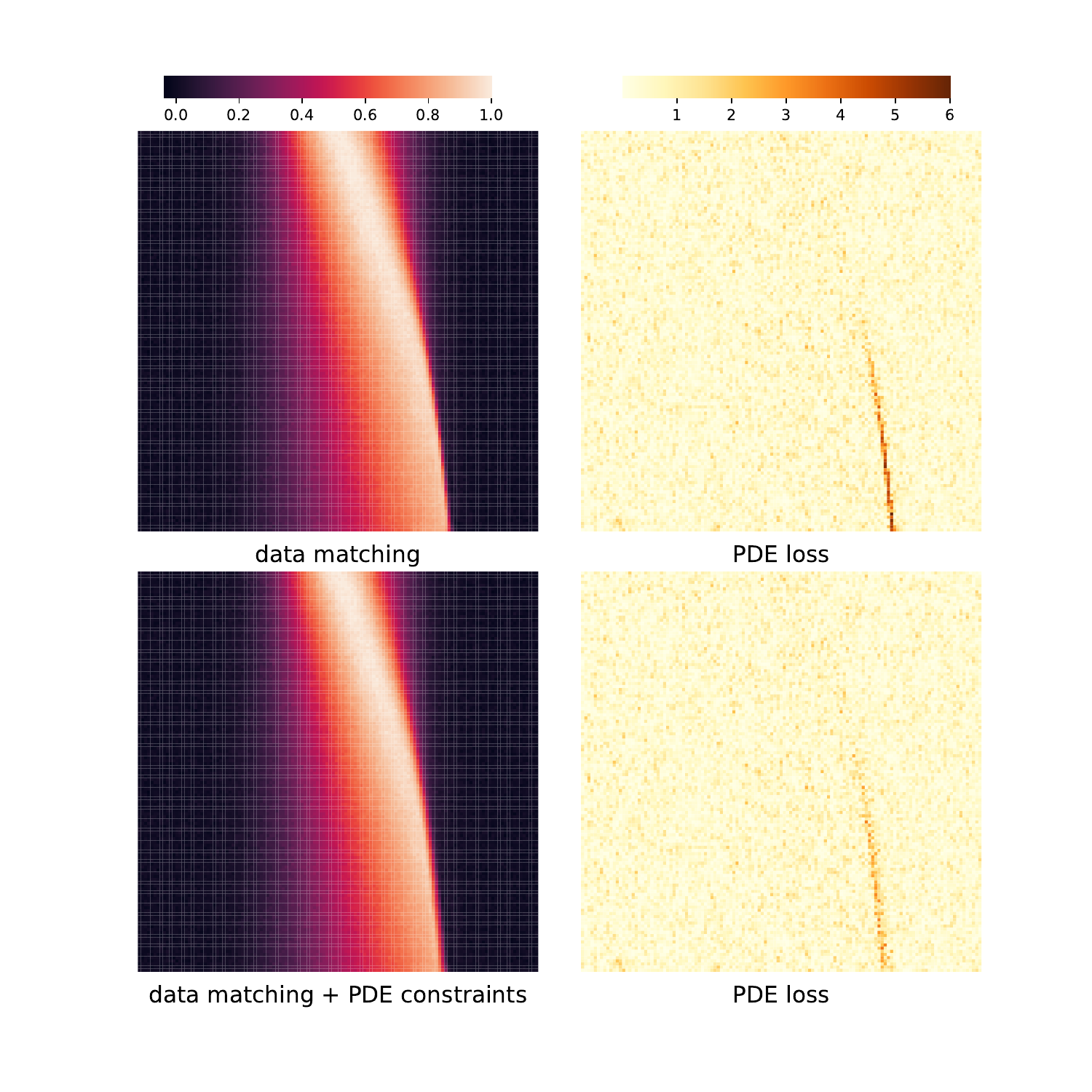} \hfill
    \includegraphics[width=.45\linewidth, trim={90 0 90 0, clip}]{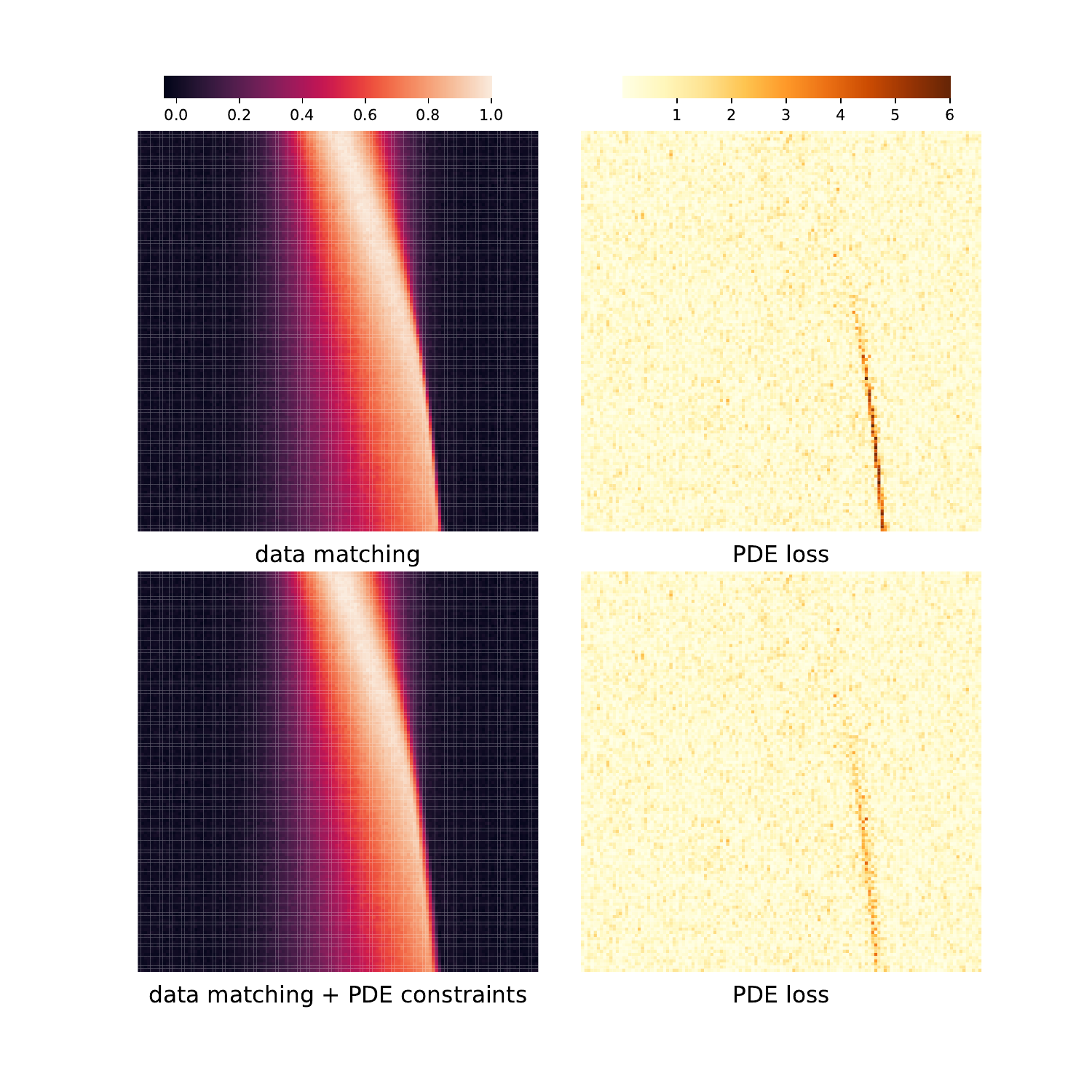} \par
    \includegraphics[width=.45\linewidth, trim={90 0 90 0, clip}]{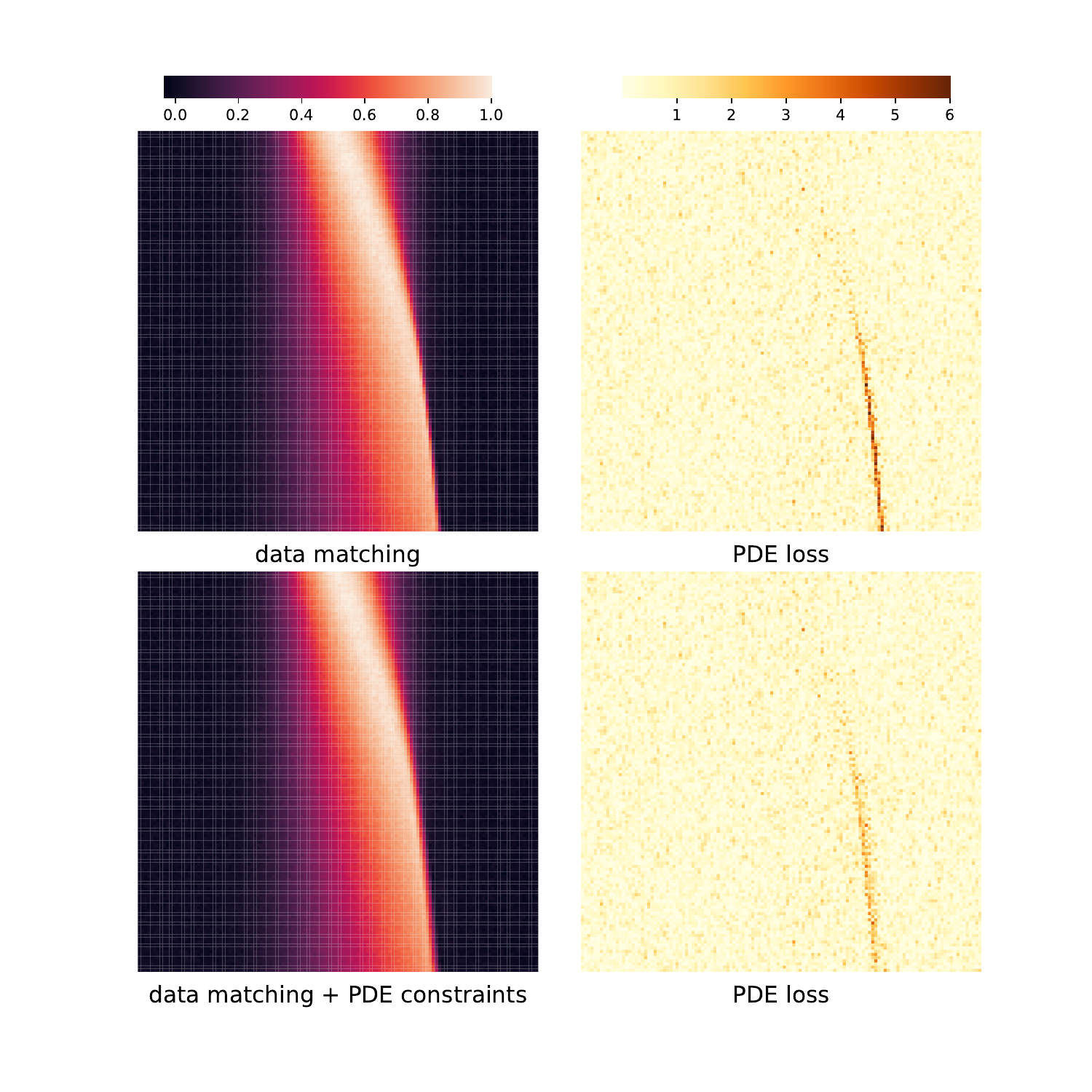} \hfill
    \includegraphics[width=.45\linewidth, trim={90 0 90 0, clip}]{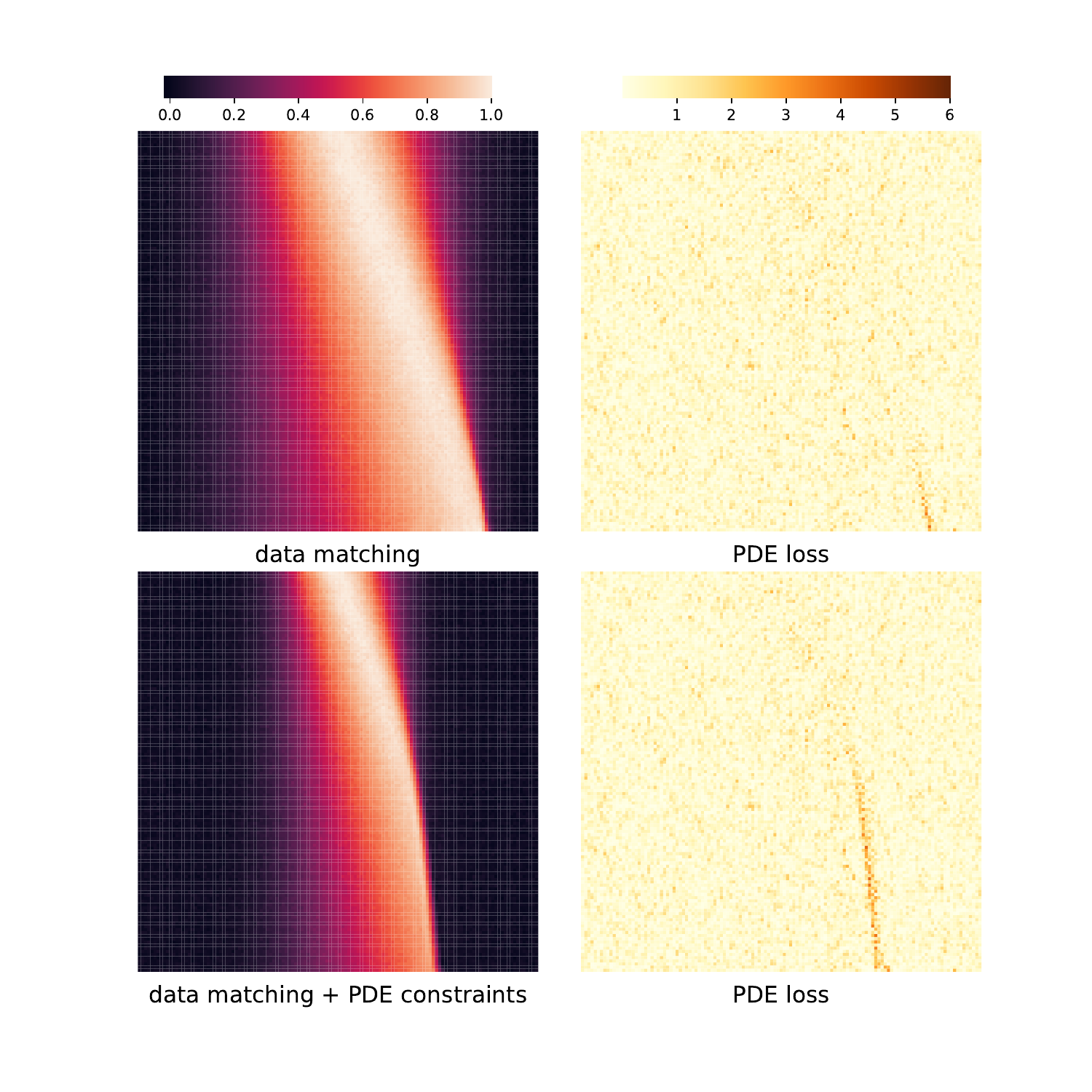} \par 
\caption{Examples of Burger samples generated by models trained with/without PDE constraints.}
\label{fig: burger visulization}
\end{figure}

\begin{figure}[ht]
    \centering
    \foreach \j in {1, 2, 3, 4}{
        \foreach \i in {1, 6, 11, 16, 21, 26, 31, 36, 41, 46}{
            \includegraphics[width=.18\linewidth, trim={2cm 0cm 2cm 0cm}, clip]{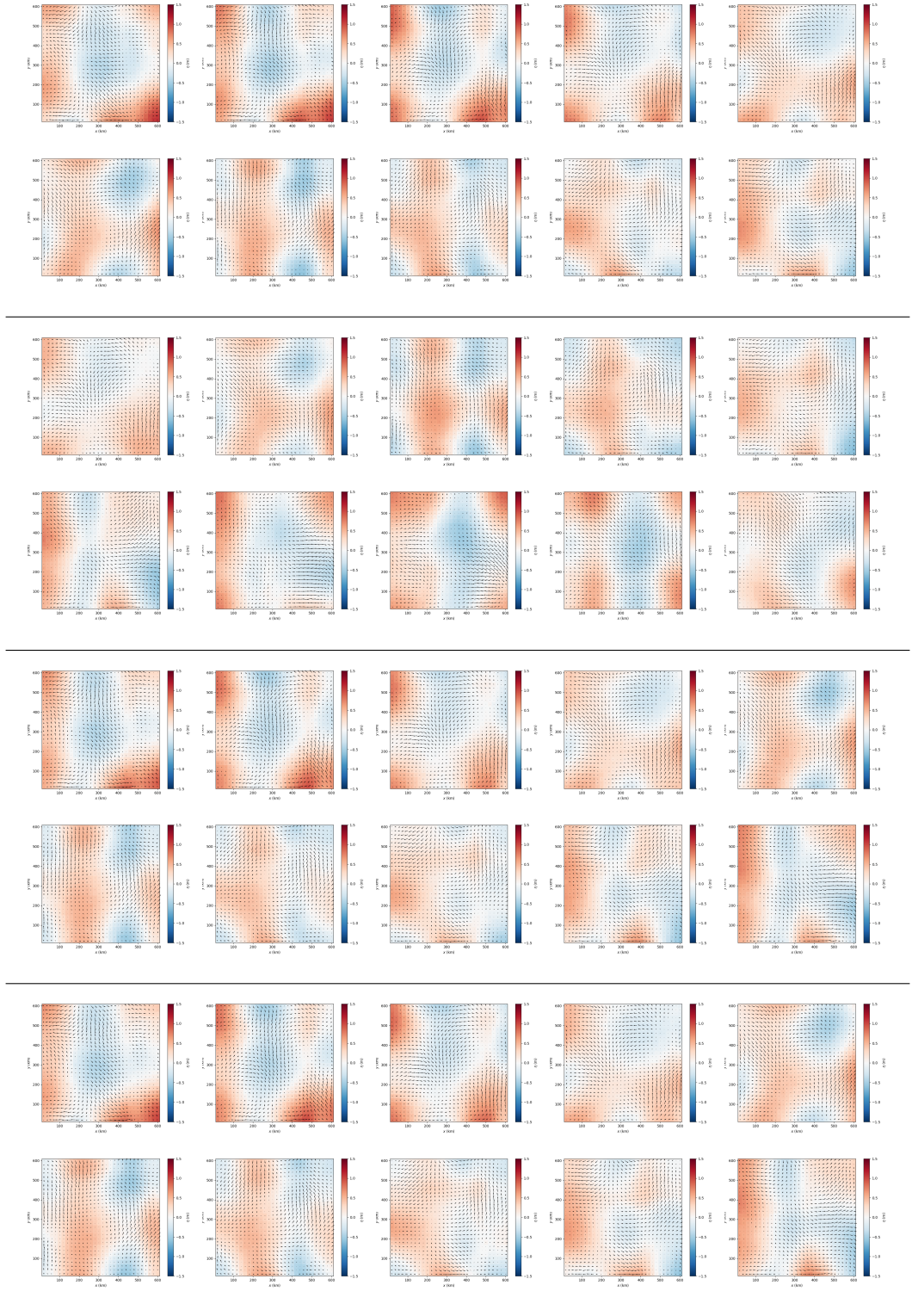} 
        } 
        \rule{\linewidth}{0.4pt}
    }
\caption{Examples of shallow water samples generated by models trained with PDE constraints. Each sample contains 50 frames and we uniformly visualize 10 of them. }
\label{fig: shallow water visulization}
\end{figure}

\begin{figure}[ht]
    \centering
    \foreach \i in {4, 6, 9, 19, 38, 43}{
        \includegraphics[width=.23\linewidth]{figs/3body/sample_baseline_\i.pdf} \hfill
        \includegraphics[width=.23\linewidth]{figs/3body/sample_ours_\i.pdf} \hfill
        \includegraphics[width=.42\linewidth]{figs/3body/energy_\i.pdf} \par
    }
\caption{The figure illustrates examples of three-body samples produced by models trained with and without conservation constraints. The left panel represents the baseline model, the middle panel corresponds to our model, and the right panel depicts the energy as a function of time.}
\label{fig: 3body visualization}
\end{figure}

\begin{figure}[ht]
    \centering
    \foreach \i in {69, 65, 58, 28, 27, 25}{
        \includegraphics[width=.29\linewidth, trim={1cm 0cm 1cm 1cm}, clip]{figs/5spring/base_\i.pdf} \hfill
        \includegraphics[width=.29\linewidth, trim={1cm 0cm 1cm 1cm}, clip]{figs/5spring/our_\i.pdf} \hfill
        \includegraphics[width=.39\linewidth, trim={2cm 0.5cm 2cm 0cm}, clip]{figs/5spring/momentum_\i.pdf} \par
    }
\caption{The figure illustrates examples of five-spring samples produced by models trained with and without conservation constraints. The left panel represents the baseline model, the middle panel corresponds to our model, and the right panel depicts the momentum as a function of time.}
\label{fig: 5spring visualization}
\end{figure}

\end{document}